
\documentclass{article}



\usepackage{hyperref}
\usepackage{lifetime}


\usepackage[accepted]{icml2021}

\icmltitlerunning{
Leveraging Unlabeled Data to Guarantee Generalization
}

\begin{document}

\twocolumn[
\icmltitle{
RATT: Leveraging Unlabeled Data 
to Guarantee Generalization
}




\begin{icmlauthorlist}
\icmlauthor{Saurabh Garg}{cmu}
\icmlauthor{Sivaraman Balakrishnan}{cmu}
\icmlauthor{J. Zico Kolter}{cmu}
\icmlauthor{Zachary C. Lipton}{cmu}
\end{icmlauthorlist}

\icmlaffiliation{cmu}{Carnegie Mellon University}
\icmlcorrespondingauthor{Saurabh Garg}{sgarg2@andrew.cmu.edu}
\icmlkeywords{
learning theory, generalization bounds, deep learning, linear models, noisy data}

\vskip 0.3in
]



\printAffiliationsAndNotice{}  

\begin{abstract}
To assess generalization, 
machine learning scientists typically either
(i) bound the generalization gap 
and then (after training) 
plug in the empirical risk 
to obtain a bound on the true risk; 
or (ii) validate empirically on holdout data.
However, (i) typically yields vacuous guarantees 
for overparameterized models;
and (ii) shrinks the training set
and its guarantee erodes
with each re-use of the holdout set.
In this paper, 
we leverage unlabeled data
to produce generalization bounds. 
After augmenting our (labeled) training set
with randomly labeled data,
we train in the standard fashion.
Whenever classifiers achieve 
low error on the clean data
but high error on the random data,
our bound ensures that the true risk is low.
We prove that our bound is valid 
for 0-1 empirical risk minimization 
and with linear classifiers
trained by gradient descent. 
Our approach is especially useful 
in conjunction with deep learning
due to the early learning phenomenon 
whereby networks fit true labels 
before noisy labels
but requires one intuitive assumption.
Empirically, on canonical computer vision and NLP tasks, 
our bound provides non-vacuous generalization guarantees
that track actual performance closely.
This work 
enables practitioners to certify generalization
even when (labeled) holdout data is unavailable 
and provides insights into the relationship 
between random label noise and generalization. 
Code is available at 
\href{https://github.com/acmi-lab/RATT\_generalization\_bound}{https://github.com/acmi-lab/RATT\_generalization\_bound}.

\end{abstract}

\section{Introduction} \label{sec:intro}
Typically, machine learning scientists
establish generalization in one of two ways.
One approach, favored by learning theorists,
places an \emph{a priori} bound on the gap 
between the empirical and true risks,
usually in terms of the complexity 
of the hypothesis class.
After fitting the model on the available data,
one can plug in the empirical risk 
to obtain a guarantee on the true risk. 
The second approach, favored by practitioners,
involves splitting the available data
into training and holdout partitions,
fitting the models on the former
and estimating the population risk with the latter.

Surely, both approaches are useful,
with the former providing theoretical insights 
and the latter guiding the development 
of a vast array of practical technology.
Nevertheless, both methods have drawbacks.
Most \emph{a priori} generalization bounds 
rely on uniform convergence 
and thus fail to explain the ability 
of overparameterized networks to generalize
\citep{zhang2016understanding,nagarajan2019uniform}. 
On the other hand, 
provisioning a holdout dataset 
restricts the amount of labeled data 
available for training.
Moreover, risk estimates based on holdout sets 
lose their validity 
with successive re-use of the holdout data
due to adaptive overfitting 
\citep{murphy2012machine,dwork2015preserving,blum2015ladder}.
However, recent empirical studies suggest
that on large benchmark datasets,
adaptive overfitting is surprisingly absent
\citep{recht2019imagenet}.

In this paper, we propose
Randomly Assign, Train and Track (RATT), 
a new method that leverages unlabeled data 
to provide a \emph{post-training} 
bound on the true risk 
(i.e., the population error).
Here, we assign random labels 
to a fresh batch of unlabeled data, 
augmenting the clean training dataset
with these randomly labeled points. 
Next, we train on this data, 
following standard risk minimization practices. 
Finally, we track the error
on the randomly labeled 
portion of training data,
estimating the error 
on the mislabeled portion
and using this quantity 
to upper bound the population error.

Counterintuitively, 
we guarantee generalization 
by guaranteeing overfitting.
Specifically, we prove that 
Empirical Risk Minimization (ERM) 
with 0-1 loss leads to lower error 
on the \emph{mislabeled training data}
than on the \emph{mislabeled population}.
Thus, if despite minimizing the loss 
on the combined training data,
we nevertheless have high error 
on the mislabeled portion,
then the (mislabeled) population error will be even higher. 
Then, by complementarity, 
the (clean) population error must be low.
Finally, we show how to obtain this guarantee
using randomly labeled (vs mislabeled data),
thus enabling us to incorporate unlabeled data.

\begin{figure}[t!]
        \centering 
        \includegraphics[width=0.9\linewidth]{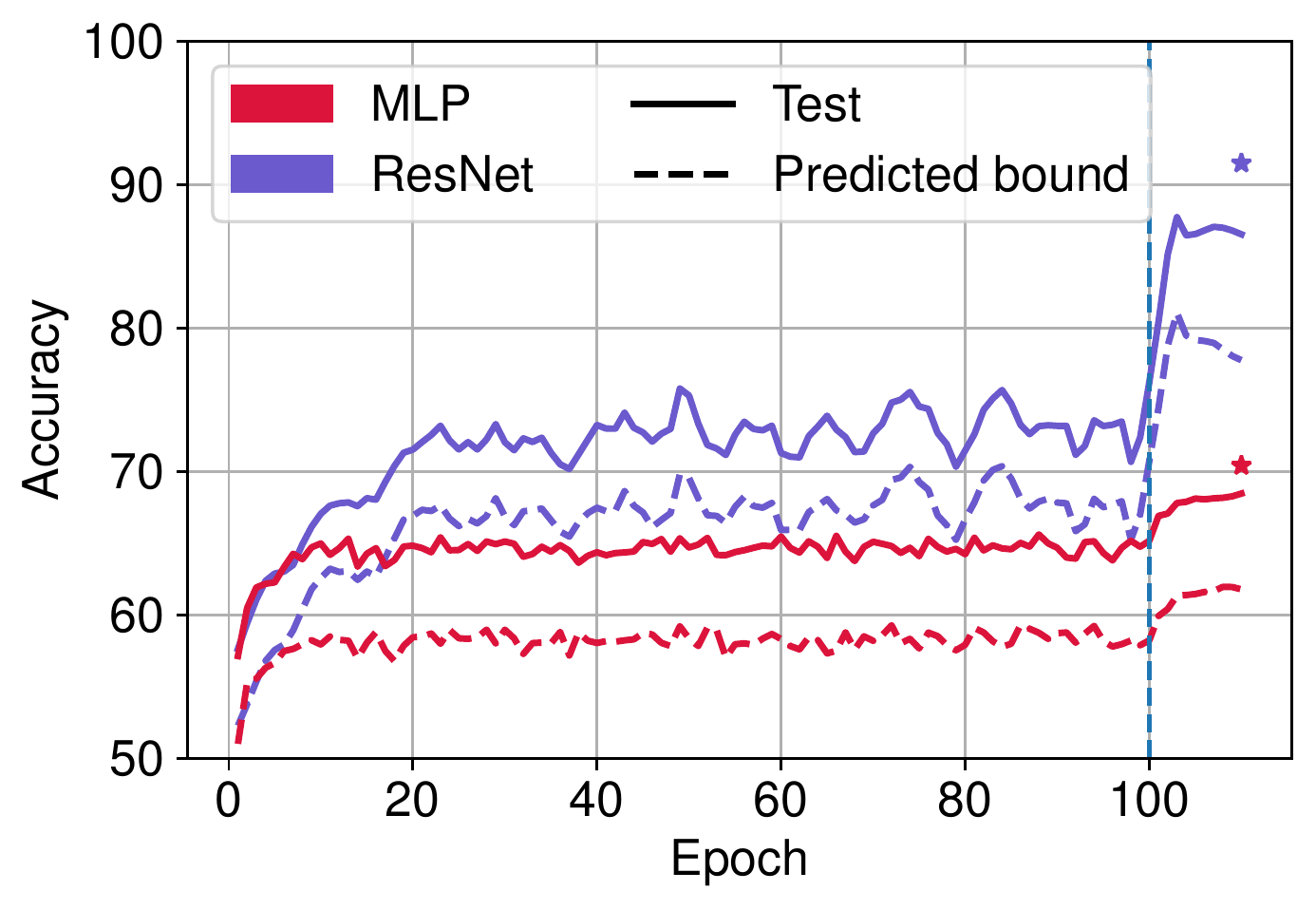}
        \vspace{-8pt}
        \caption{ \textbf{Predicted lower bound on the clean population error} 
        with ResNet and MLP on binary CIFAR. 
        Results aggregated over $5$ seeds. 
        `*' denotes the best test performance 
        achieved when training with only clean data
        and the same hyperparameters 
        (except for the stopping point).
        The bound predicted by RATT (RHS in \eqref{eq:thm1}) closely tracks 
        the population accuracy on clean data. 
        }\label{fig:error_CIFAR10}
        \vspace{-5pt}
\end{figure}

To expand the applicability of our idea
beyond ERM on 0-1 error,
we prove corresponding results 
for a linear classifier 
trained by gradient descent
to minimize squared loss. 
Furthermore, leveraging the connection 
between early stopping 
and $\ell_2$-regularization 
in linear models 
\citep{ali2018continuous,ali2020implicit,suggala2018connecting}, 
our results extend to early-stopped gradient descent.  
Because we make no assumptions
on the data distribution, 
our results on linear models
hold for more complex models 
such as kernel regression 
and neural networks in the Neural Tangent Kernel (NTK) regime
\citep{jacot2018neural, du2018gradient, du2019gradient, allen2019convergence, chizat2019lazy}.

Addressing practical deep learning models,
our guarantee requires an additional (reasonable) assumption.
Our experiments show that the bound 
yields non-vacuous guarantees
that track test error
across several major architectures 
on a range of benchmark datasets
for computer vision and Natural Language Processing (NLP).
Because, in practice,
overparameterized deep networks
exhibit an \emph{early learning phenomenon},
fitting clean data 
before mislabeled data
\citep{liu2020early,arora2019fine,li2019gradient}, 
our procedure yields tight bounds 
in the early phases of learning. 
Experimentally, we confirm 
the early learning phenomenon 
in standard Stochastic Gradient Descent (SGD) training 
and illustrate the effectiveness of weight decay 
combined with large initial learning rates 
in avoiding interpolation to mislabeled data 
while maintaining fit on the training data, 
strengthening the guarantee provided by our method.



To be clear, we do not advocate RATT 
as a blanket replacement 
for the holdout approach.
Our main contribution is to introduce 
a new theoretical perspective on generalization
and to provide a method that may be applicable
even when the holdout approach is unavailable. 
Of interest, unlike generalization bounds
based on uniform-convergence 
that restrict the complexity 
of the hypothesis class
\citep{neyshabur2018role,neyshabur2015norm,neyshabur2017pac,bartlett2017spectrally,nagarajan2019deterministic}, 
our \emph{post hoc} bounds depend 
only on the fit to mislabeled data.
We emphasize that our theory 
does not guarantee \emph{a priori}
that early learning should take place
but only \emph{a posteriori} that when it does,
we can provide non-vacuous bounds 
on the population error.
Conceptually, this finding underscores 
the significance 
of the early learning phenomenon 
in the presence of noisy labels
and motivates further work 
to explain why it occurs.

\section{Preliminaries} \label{sec:setup}
By $\enorm{\cdot}$, 
and $\inner{\cdot}{\cdot}$
we denote the Euclidean norm
and inner product,
respectively.
For a vector $v\in \Real^d$, 
we use $v_j$ to denote its $j^\text{th}$ entry, and for an event $E$ we let $\indict{E}$ denote the binary indicator of the event.

Suppose we have a multiclass classification problem
with the input domain $\calX \subseteq \Real^d$ 
and label space $\calY = \{1, 2, \ldots, k\}$\footnote{For 
binary classification, 
we use $\calY = \{-1,1\}$.}. 
By $\calD$, we denote the distribution 
over $\calX \times \calY$. 
A dataset $S \defeq \{(x_i, y_i)\}_{i=1}^n \sim \calD^n$
contains $n$ points sampled i.i.d. from $\calD$.
By $\calS$, $\calT$, and $\wt \calS$, 
we denote the (uniform) empirical distribution
over points in datasets $S$, $T$, 
and $\wt S$, respectively. 
Let $\calF$ be a class of hypotheses 
mapping $\calX$ to $\Real^{k}$.  
A \emph{training algorithm} $\calA$:
takes a dataset $S$
and returns a classifier $f(\calA,S) \in \calF$. 
When the context is clear,
we drop the parentheses for convenience.
Given a classifier $f$ and datum $(x, y)$,
we denote the 0-1 error 
(i.e., classification error)
on that point by 
$\error(f(x), y) \defeq \indict{ y\not\in \argmax_{j\in\calY} f_j(x) }$,
We express the \emph{population error} on $\calD$ as
$\error_\calD (f) \defeq \Expt{(x,y) \sim \calD}{\error(f(x),y) }$ 
and the \emph{empirical error} on $S$ as  
$\error_\calS(f) \defeq \Expt{(x,y) \sim \calS}{\error(f(x),y)} = \frac{1}{n} \sum_{i=1}^n {\error(f(x_i),y_i)}$.

Throughout,
we consider 
a \emph{random label assignment} procedure: 
draw $x\sim \calD_\calX$ 
(the underlying distribution over $\calX$),
and then assign a label sampled
uniformly at random. 
We denote a randomly labeled dataset 
by $\wt S \defeq \{ (x_i, y_i)\}_{i=1}^m \sim \wt \calD^m $, 
where $\wt \calD$
is the distribution of randomly labeled data. 
By $\calDm$, we denote the mislabeled distribution
that corresponds to selecting examples $(x,y)$ 
according to $\calD$ and then re-assigning the label 
by sampling among the incorrect labels $y' \neq y$
(renormalizing the label marginal). 

\section{Generalization Bound for RATT with ERM}
\label{sec:ERM_training}
We now present our generalization bound 
and proof sketches 
for ERM on the 0-1 loss
(full proofs in \appref{app:proof_erm}).
For any dataset $T$, 
ERM returns the classifier $\widehat f$
that minimizes the empirical error:
\begin{align}
    \widehat f \defeq \argmin_{f\in \calF} \error_\calT(f) \,. \label{eq:erm}
\end{align}
%
We focus first on
binary classification.  
Assume we have a clean dataset 
$S \sim \calD^n$ of $n$ points
and a randomly labeled dataset 
$\wt S \sim \wt \calD^m$ of $m~(<n)$ points with
labels in $\wt S$ are assigned uniformly at random.
We show that with 0-1 loss minimization 
on the union of $S$ and $\wt S$,
we obtain a classifier 
whose error on $\calD$
is upper bounded 
by a function of the empirical errors 
on clean data $\error_\calS$
(lower is better)
and on randomly labeled data 
$\error_{\wt \calS}$ 
(higher is better): 
\begin{theorem} \label{thm:error_ERM}
    For any classifier $\wh f$ obtained by ERM \eqref{eq:erm} on dataset $S \cup \wt S$,~ 
    for any $\delta > 0$, 
    with probability at least $1-\delta$, 
    we have  
    \begin{align*}
        \error_\calD(\widehat f)  \le \, & \error_\calS(\widehat f) + 1 - 2 \error_{\wt\calS}(\widehat f) \\
        & + \left(\sqrt{2} \error_{\wt S}(\widehat f)   + 2 + \frac{m}{2n}\right) \sqrt{\frac{\log(4/\delta)}{m}} \,. \label{eq:thm1} \numberthis
    \end{align*}
\end{theorem}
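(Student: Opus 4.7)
The plan is to use binary duality to convert the upper bound on $\error_\calD(\widehat f)$ into a lower bound on $\error_\calDm(\widehat f)$, then obtain that lower bound by decomposing $\wt S$ into its correctly- and incorrectly-labeled subsets, applying one-sided concentration on each, and finally solving algebraically in terms of the observables $\error_{\wt\calS}(\widehat f)$ and $\error_\calS(\widehat f)$. Concretely, for binary labels the identity $\error_\calDm(f) = 1 - \error_\calD(f)$ holds for every $f$, so the target bound is equivalent to $\error_\calDm(\widehat f) \ge 2\error_{\wt\calS}(\widehat f) - \error_\calS(\widehat f) - \text{conc}$.

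I would partition $\wt S = \wt S_C \sqcup \wt S_M$ by whether the random label matches the true label. Conditional on the features and true labels of $\wt S$, membership in $\wt S_M$ is iid Bernoulli$(1/2)$, so Hoeffding's inequality controls $|\wt S_M|/m$ near $1/2$ (contributing the constant $+2$ in the concentration term); conditional on the partition, $\wt S_M$ is an iid sample from $\calDm$ and $\wt S_C$ an iid sample from $\calD$.

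Two one-sided concentration inequalities then drive the proof. (a) $\error_\calDm(\widehat f) \ge \error_{\wt S_M}(\widehat f) - \epsilon_M$, via a Bernstein-type bound on the iid $\wt S_M$-sample using the variance control $\mathrm{Var}(\indict{\widehat f \ne y^*}) \le \error_{\wt\calS}(\widehat f)$, which produces the $\sqrt{2}\,\error_{\wt S}(\widehat f)$ factor in the stated conclusion. (b) $\error_{\wt S_C}(\widehat f) \le \error_\calS(\widehat f) + \epsilon_N$, obtained by combining the ERM inequality $\error_{S \cup \wt S}(\widehat f) \le \error_{S \cup \wt S}(f^*)$ against a Bayes-like benchmark $f^* \in \calF$ with standard Hoeffding on the $n$-sample $\error_\calS(f^*)$; the weight ratio $m/n$ appearing in the ERM inequality explains the $m/(2n)$ factor in the conclusion. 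Applying the identity $m\,\error_{\wt\calS}(\widehat f) = |\wt S_C|\,\error_{\wt S_C}(\widehat f) + |\wt S_M|\,\error_{\wt S_M}(\widehat f)$, using $|\wt S_C|, |\wt S_M| \approx m/2$, solves for $\error_{\wt S_M}(\widehat f) \approx 2\error_{\wt\calS}(\widehat f) - \error_{\wt S_C}(\widehat f)$. Substituting (b) yields $\error_{\wt S_M}(\widehat f) \ge 2\error_{\wt\calS}(\widehat f) - \error_\calS(\widehat f) - \epsilon_N$, and combining with (a) and the binary identity produces $\error_\calD(\widehat f) \le \error_\calS(\widehat f) + 1 - 2\error_{\wt\calS}(\widehat f) + \epsilon_M + \epsilon_N$ up to a union bound over the three concentration events.

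The main obstacle is the one-sided concentration in step (a): the empirical average $\error_{\wt S_M}(\widehat f)$ is taken over a sample the classifier has seen during training, and without uniform convergence over $\calF$ such concentration is generically unavailable. The absence of any complexity term in the conclusion forces the argument to exploit the 0-1 ERM structure directly---ERM minimizes empirical loss on $S \cup \wt S$, so $\error_{\wt S_M}(\widehat f)$ is driven \emph{downward} relative to $\error_\calDm(\widehat f)$, which is the favorable direction for this one-sided bound; deviations in the wrong direction must then be controllable by a Bernstein-type argument conditional on the features, where the randomness is only over the label assignment rather than over a rich hypothesis class.
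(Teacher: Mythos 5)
Your high-level skeleton matches the paper's: use the binary identity $\error_{\calDm}(f) = 1 - \error_{\calD}(f)$, split $\wt S$ into $\wt S_C$ and $\wt S_M$, control $|\wt S_M|/m$ near $1/2$, and reduce everything to a lower bound on $\error_{\wt S_M}(\widehat f)$ in terms of $\error_{\wt\calS}(\widehat f)$ and $\error_{\calS}(\widehat f)$. But the proof has a genuine gap exactly at the point you flag as "the main obstacle": step (a), the inequality $\error_{\wt S_M}(\widehat f) \le \error_{\calDm}(\widehat f) + O(1/\sqrt{m})$. You correctly observe that no Bernstein or Hoeffding bound applies directly because $\widehat f$ was trained on $\wt S_M$, and you correctly intuit that the 0-1 ERM structure must push $\error_{\wt S_M}(\widehat f)$ in the favorable direction---but you never supply the argument, and "deviations in the wrong direction must then be controllable by a Bernstein-type argument conditional on the features" is not a proof. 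This step is the central lemma of the paper (Lemma 3.2), and it is resolved by a specific comparator trick: fix the clean portion $S \cup \wt S_C$ and define $f^* \defeq \argmin_{f\in\calF} \error_{\widecheck\calD}(f)$, where $\widecheck\calD$ is the hybrid distribution mixing the \emph{empirical} distribution on $S \cup \wt S_C$ with the \emph{population} mislabeled distribution $\calDm$ (with weights $n/(m+n)$, $m_2/(m+n)$, $m_1/(m+n)$). Since $f^*$ is independent of the draw of $\wt S_M$, Hoeffding gives $\error_{\wt\calS_M}(f^*) \le \error_{\calDm}(f^*) + \sqrt{\log(1/\delta)/(2m_1)}$; chaining this between the two optimality inequalities $\error_{\calS\cup\wt\calS}(\widehat f) \le \error_{\calS\cup\wt\calS}(f^*)$ and $\error_{\widecheck\calD}(f^*) \le \error_{\widecheck\calD}(\widehat f)$ (the clean-data terms cancel because they are identical in both risks) transfers the concentration to $\widehat f$. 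Without this comparator (or an equivalent device), your argument does not close.

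A secondary issue: your step (b), relating $\error_{\wt S_C}(\widehat f)$ to $\error_{\calS}(\widehat f)$, is routed through an ERM inequality against a "Bayes-like benchmark" plus Hoeffding on the $n$-sample; it is unclear how that chain produces a bound between two empirical errors of the \emph{same} classifier $\widehat f$ on two subsets it has seen. The paper instead uses a pure symmetry argument: among the correctly labeled pool $S \cup \wt S_C$, the subset $S$ is exchangeable with $\wt S_C$, so Hoeffding for sampling without replacement bounds $|\error_{\wt\calS_C}(\widehat f) - \error_{\calS}(\widehat f)|$, and the $m/(2n)$ factor arises from rewriting $\error_{\calS\cup\wt\calS_C}$ as the weighted average $\tfrac{m_2}{m_2+n}\error_{\wt\calS_C} + \tfrac{n}{m_2+n}\error_{\calS}$, not from any ERM weight ratio. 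Relatedly, the $\sqrt{2}\,\error_{\wt S}(\widehat f)$ factor in the final bound comes from the multiplicative deviation of $m_1/m, m_2/m$ from $1/2$ in the identity $m\,\error_{\wt\calS} = m_1\error_{\wt\calS_M} + m_2\error_{\wt\calS_C}$, not from a variance-adaptive Bernstein bound.
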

%
%
In short, this theorem tells us that 
if after training on both clean and randomly labeled data,
we achieve low error on the clean data
but high error (close to $1/2$) 
on the randomly labeled data,
then low population error is guaranteed.
Note that because the labels in ${\wt S}$
are assigned randomly,
the error $\error_{\wt \calS}(f)$
for any fixed predictor $f$
(not dependent on ${\wt S}$)
will be approximately 1/2. 
Thus, if ERM produces a classifier 
that has not fit to the randomly labeled data,
then $(1 - 2 \error_{\wt \calS} (\widehat f))$ 
will be approximately $0$,
and our error will be determined
by the fit to clean data.
The final term 
accounts for finite sample error---notably,
it (i) does not depend 
on the complexity 
of the hypothesis class;
and (ii) approaches $0$
at a $\calO(1/\sqrt{m})$ rate
(for $m < n$).


Our proof strategy 
unfolds in three steps.
First, in \lemref{lem:fit_mislabeled}
we bound $\error_{\calD} (\wh f)$
in terms of the error 
on the mislabeled subset of $\wt S$.
Next, in Lemmas \ref{lem:mislabeled_error} and \ref{lem:clear_error},
we show that the error on the mislabeled subset
can be accurately estimated 
using only clean and randomly labeled data.

To begin, assume that we actually knew 
the original labels for the randomly labeled data.
By $\wt S_C$ and  $\wt S_M$,
we denote the clean and mislabeled portions
of the randomly labeled data, respectively
(with $\wt S = \wt S_M \cup \wt S_C$).
Note that for binary classification,
a lower bound on 
mislabeled population error $\error_{\calDm}  (\wh f)$
directly upper bounds 
the error on the original population $\error_{\calD}  (\wh f)$.
Thus we only need to prove that 
the empirical error on the mislabeled portion of our data 
is lower than the error on unseen mislabeled data,
i.e., $\error_{\wt S_M} (\wh f) \leq \error_{\calDm}  (\wh f) = 1 - \error_{\wt \calS_M}(\widehat f)$ (upto $\calO(1/\sqrt{m})$).
\begin{lemma} \label{lem:fit_mislabeled}
    Assume the same setup as in \thmref{thm:error_ERM}. 
    Then for any $\delta >0$, with probability at least  $1-\delta$ 
    over the random draws of mislabeled data $\wt S_M$, we have 
    \begin{align}
        \error_\calD(\widehat f)  \le 1 -\error_{\wt \calS_M}(\widehat f) + \sqrt{\frac{\log(1/\delta)}{m}}\,. \label{eq:lemma1}
    \end{align}   
\end{lemma}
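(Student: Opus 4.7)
The plan is to reduce the lemma to a one-sided Hoeffding concentration, leveraging the key observation that the ERM classifier $\widehat f$ is a function of the \emph{observable} random labels $\widetilde y_i$ but never accesses the \emph{latent} true labels $y_i^\ast$ that define the mislabeled subset $\widetilde S_M$.

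First, I invoke binary complementarity: since $\calY = \{-1, 1\}$, the identity $\error_{\calDm}(f) = 1 - \error_\calD(f)$ holds for every $f$. Combining this with $\widetilde y_i = -y_i^\ast$ for $i \in \widetilde S_M$ (so that $\indict{\widehat f(x_i) \ne \widetilde y_i} = \indict{\widehat f(x_i) = y_i^\ast}$), the claim becomes the equivalent one-sided generalization statement
\begin{equation*}
\error_\calD(\widehat f) \;\le\; \frac{1}{m_M} \sum_{i \in \widetilde S_M} \indict{\widehat f(x_i) \ne y_i^\ast} + \sqrt{\log(1/\delta)/m},
\end{equation*}
where $m_M = |\widetilde S_M|$. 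The right-hand side is a sample average over the i.i.d. draws $(x_i, y_i^\ast)_{i \in \widetilde S_M} \sim \calD$.

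Second, I realize $\widetilde S$ through the joint law $x_i \sim \calD_\calX$, $y_i^\ast \sim \calD | x_i$, and $\widetilde y_i \sim \mathrm{Unif}\{-1, 1\}$, all independent across $i$. Conditioning on $S$, on the features $\{x_i\}_{i=1}^m$, and on the random labels $\{\widetilde y_i\}_{i=1}^m$ freezes $\widehat f$ as a deterministic map, because training never uses the $y_i^\ast$. Under this conditioning the remaining randomness lies entirely in the $y_i^\ast$, which remain independent across $i$. I then introduce the indicators
\begin{equation*}
V_i \;=\; \indict{\widehat f(x_i) = \widetilde y_i}\,\indict{y_i^\ast \ne \widehat f(x_i)} \in \{0,1\}, \qquad i = 1, \ldots, m,
\end{equation*}
which are independent under the conditioning and whose sum collapses to $\sum_{i=1}^m V_i = m_M\bigl(1 - \error_{\widetilde \calS_M}(\widehat f)\bigr)$. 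A one-sided Hoeffding inequality over $m$ independent $[0,1]$-valued summands yields a deviation of order $\sqrt{m \log(1/\delta)}$ from the conditional mean. Identifying this conditional mean with $m_M \error_\calD(\widehat f)$ — which holds because the features in $\widetilde S_M$ are $\calD_\calX$-distributed and their latent labels come from $\calD | x$ — and dividing by $m_M$ (using a standard Chernoff lower bound on $m_M \sim \mathrm{Bin}(m, 1/2)$ absorbed into constants) gives the claimed $\sqrt{\log(1/\delta)/m}$ tail.

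The main obstacle is the dependence of $\widehat f$ on the sample $\widetilde S_M$, which ordinarily would demand a uniform-convergence penalty scaling with the complexity of $\calF$. The resolution is the asymmetry between the random labels (which $\widehat f$ uses) and the true labels (which define mislabeling but never enter training): conditioning on everything visible to the algorithm freezes $\widehat f$, and the concentration is taken only over the independent label draws the algorithm never sees. This is precisely the mechanism that allows the final bound to be free of any complexity term for $\calF$.
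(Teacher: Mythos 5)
There is a genuine gap, and it sits exactly at the step you flag as the crux: ``identifying this conditional mean with $m_M\,\error_\calD(\wh f)$.'' After you condition on $S$, on the features $\{x_i\}_{i=1}^m$ and on the assigned labels $\{\wt y_i\}_{i=1}^m$, the conditional mean of $\sum_i V_i$ is $\sum_i \indict{\wh f(x_i)=\wt y_i}\,\Pr\left(y^*\ne \wh f(x_i)\mid x_i\right)$. The features are now fixed constants, so no average over $\calD_\calX$ survives; what remains is an \emph{in-sample} average of the pointwise risk over the training features, further filtered by the event $\{\wh f(x_i)=\wt y_i\}$, which is itself correlated with $x_i$ through $\wh f$. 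Since $\wh f$ was chosen using those very features and assigned labels, this in-sample quantity has no reason to track the population risk $\error_\calD(\wh f)$ --- equating the two is precisely the uniform-convergence step you claim to have circumvented. In fact no argument that uses only the measurability of $\wh f$ with respect to $(S,\{x_i\},\{\wt y_i\})$ can prove the lemma, because the conclusion is false for such classifiers in general: take $\wh f$ to be an ``anti-interpolator'' with $\wh f(x_i)=-\wt y_i$ on every training point and constant elsewhere. It never sees the latent labels, yet $\error_{\wt\calS_M}(\wh f)=1$, so \eqref{eq:lemma1} would force $\error_\calD(\wh f)\le\sqrt{\log(1/\delta)/m}$ while the true population error can be $1/2$.

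The ERM hypothesis must therefore enter substantively, not merely to ``freeze'' $\wh f$, and this is what the paper's proof does. It introduces a comparator $f^*$ minimizing the semi-population risk in which the clean portion $S\cup\wt S_C$ is kept as an empirical distribution but the mislabeled portion is replaced by the population $\calDm$; applies Hoeffding only to this fixed $f^*$ (so no complexity term is needed); and then sandwiches $\error_{\calS\cup\wt\calS}(\wh f)\le\error_{\calS\cup\wt\calS}(f^*)$ (optimality of $\wh f$ for the empirical objective) against the optimality of $f^*$ for the semi-population objective. Cancelling the shared clean-data terms yields $\error_{\wt\calS_M}(\wh f)\le\error_{\calDm}(\wh f)+\mathcal{O}(1/\sqrt{m})$, after which the binary complementarity $\error_{\calDm}=1-\error_\calD$ and $2m_1\approx m$ give \eqref{eq:lemma1}. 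That two-sided comparison against a data-independent (given the clean portion) minimizer is the missing ingredient in your proposal; your reduction via complementarity and the concentration of $m_M$ are fine, but they are the easy parts.
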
 

\begin{hproof} 
    %
    The main idea of our proof is to regard 
    the clean portion of the data 
    ($S \cup \wt S_C$) as fixed.
    Then, there exists a classifier $f^*$ 
    that is optimal over draws 
    of the mislabeled data $\wt S_M$.
    Formally, 
    \begin{align*}
        f^* \defeq \argmin_{f \in \calF} \error_{\widecheck {\calD}} (f),
    \end{align*}
    where $\widecheck \calD$ is a combination of 
    the \emph{empirical distribution} 
    over correctly labeled data $S \cup \wt S_C$
    and the (population) distribution 
    over mislabeled data $\calDm$. 
    Recall that $\widehat f \defeq \argmin_{f \in \calF} \error_{\calS \cup \wt \calS } (f)$. 
    Since, $\widehat f$ minimizes 0-1 error 
    on $S \cup \wt S$, 
    we have $\error_{\calS \cup \wt \calS}(\widehat f) \le \error_{
    \calS \cup \wt \calS}(f^*)$. 
    Moreover, since $f^*$ is independent of $\wt S_M$, 
    we have with probability at least $1-\delta$ that
    \begin{align*}
      \error_{\wt \calS_M}(f^*) \le \error_{ \calDm}(f^*) +  \sqrt{\frac{\log(1/\delta)}{m}} \,. 
    \end{align*}
    Finally, since $f^*$ is the optimal classifier on $\widecheck \calD$, 
    we have $\error_{\widecheck \calD}(f^*) \le \error_{\widecheck \calD}(\widehat f)$.     
    Combining the above steps and using the fact 
    that $\error_\calD = 1- \error_{\calDm} $, 
    we obtain the desired result.  
\end{hproof}
%
%
%
%
While the LHS in \eqref{eq:lemma1} depends 
on the unknown portion $\wt S_M$, 
our goal is to use unlabeled data 
(with randomly assigned labels)
for which the mislabeled portion 
cannot be readily identified. 
Fortunately, we do not need to identify
the mislabeled points to estimate
the error on these points in aggregate
$\error_{\wt\calS_M}(\widehat f)$.
%
%
Note that because the label marginal is uniform, 
approximately half of the 
data
will be correctly labeled 
and the remaining half will be mislabeled. 
Consequently, we can utilize the value 
of $\error_{\wt\calS}(\widehat f)$ 
and an estimate of $\error_{\wt\calS_C}(\widehat f)$ 
to lower bound $\error_{\wt\calS_M}(\widehat f)$. 
We formalize this as follows: 

\begin{lemma} \label{lem:mislabeled_error}
    Assume the same setup as \thmref{thm:error_ERM}. Then for any $\delta >0$, with probability at least $1-\delta$ over the random draws of $\wt S$, we have  
        $\abs{2\error_{\wt \calS}(\widehat f) - \error_{\wt \calS_C}(\widehat f) -  \error_{\wt \calS_M}(\widehat f) } \le  2\error_{\wt \calS}(\widehat f)  \sqrt{\frac{\log(4/\delta)}{2m}}\,. $ 
\end{lemma}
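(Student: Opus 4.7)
The plan is to start from the exact decomposition
\[
m\, \error_{\wt\calS}(\widehat f) = m_C\, \error_{\wt\calS_C}(\widehat f) + m_M\, \error_{\wt\calS_M}(\widehat f),
\]
where $m_C := |\wt S_C|$ and $m_M := m - m_C$. A short rearrangement produces the deterministic identity
\[
2\error_{\wt\calS}(\widehat f) - \error_{\wt\calS_C}(\widehat f) - \error_{\wt\calS_M}(\widehat f) = \frac{2m_C - m}{m}\bigl(\error_{\wt\calS_C}(\widehat f) - \error_{\wt\calS_M}(\widehat f)\bigr),
\]
which exposes the quantity of interest as the product of (i) the imbalance $|2m_C-m|/m$ of the clean/mislabeled split, which vanishes if the random labeling splits $\wt S$ perfectly evenly, and (ii) the per-subset error gap, which I will control via $\error_{\wt\calS}(\widehat f)$.

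For (i), each point's random label equals its true label independently with probability $1/2$, so $m_C \sim \mathrm{Binomial}(m, 1/2)$. Crucially, this marginal distribution does not depend on $\widehat f$ even though $\widehat f$ is obtained by ERM on $S \cup \wt S$: $m_C$ is a function of the independent label-flip coins only. Hoeffding's inequality (with budget $\delta/2$) then gives, with probability at least $1-\delta/2$,
\[
\frac{|2m_C-m|}{m} \le 2\sqrt{\frac{\log(4/\delta)}{2m}}.
\]

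For (ii), $\error_{\wt\calS}$ is a convex combination of $\error_{\wt\calS_C}$ and $\error_{\wt\calS_M}$ with weights $m_C/m$ and $m_M/m$, so each of the per-subset errors is at most $\frac{m}{\min(m_C, m_M)}\error_{\wt\calS}(\widehat f)$, and hence so is their absolute difference (using $|a-b| \le \max(a,b)$ for non-negative $a,b$). On the Hoeffding good event, $\min(m_C, m_M)/m$ is within $O(1/\sqrt m)$ of $1/2$, so this blowup factor is close to $2$. Plugging (i) and (ii) into the identity in the first paragraph yields a bound of the form $C\,\error_{\wt\calS}(\widehat f)\sqrt{\log(4/\delta)/(2m)}$.

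The main obstacle is pinning down the constant $C$ to match the stated value of $2$: a naive combination via triangle inequality produces a larger constant, so one must absorb the Hoeffding slack into the $m/\min(m_C,m_M)$ ratio carefully, or expand the identity directly against the constraint $\error_{\wt\calS} = (m_C \error_{\wt\calS_C} + m_M \error_{\wt\calS_M})/m$ without loosening. The dependence of $\widehat f$ on the full random labeling introduces no complication, since the decomposition identity is a pointwise algebraic equality and the only concentration I use is Hoeffding on the independent label-flip coins, whose Binomial marginal is unaffected by whatever $\widehat f$ the ERM produces downstream.
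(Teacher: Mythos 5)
Your approach is essentially the paper's: the same exact decomposition $m\,\error_{\wt\calS}(\widehat f) = m_C\,\error_{\wt\calS_C}(\widehat f) + m_M\,\error_{\wt\calS_M}(\widehat f)$, the same reduction of the target quantity to (split imbalance) $\times$ (combination of per-subset errors), and the same binomial Hoeffding bound on the clean/mislabeled split; your remark that the marginal law of $m_C$ is unaffected by $\widehat f$ and that the identity is pointwise is exactly the right justification, and it is the one the paper implicitly relies on. The single place you diverge --- and where you correctly flag that you cannot recover the stated constant $2$ --- is the conversion of the per-subset errors into $\error_{\wt\calS}(\widehat f)$. Bounding $\abs{\error_{\wt\calS_C}(\widehat f) - \error_{\wt\calS_M}(\widehat f)}$ by $\max(\error_{\wt\calS_C},\error_{\wt\calS_M}) \le \tfrac{m}{\min(m_C,m_M)}\,\error_{\wt\calS}(\widehat f)$ costs a factor of roughly $2/(1-2t)$ with $t=\sqrt{\log(4/\delta)/(2m)}$, landing you near constant $4$. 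The paper's device avoids $\min(m_C,m_M)$ entirely: keep the \emph{sum} $T \defeq \error_{\wt\calS_C}(\widehat f) + \error_{\wt\calS_M}(\widehat f)$ on the right-hand side (note $\abs{\error_{\wt\calS_C}-\error_{\wt\calS_M}} \le T$), so that on the Hoeffding event
\begin{align*}
2\error_{\wt\calS}(\widehat f) - T \;\le\; \epsilon\, T \,,
\end{align*}
with $\epsilon$ the Hoeffding deviation, and then \emph{solve this self-referential inequality for $T$}: it yields $T \ge 2\error_{\wt\calS}(\widehat f)/(1+\epsilon) \ge 2\error_{\wt\calS}(\widehat f)(1-\epsilon)$, hence $2\error_{\wt\calS}(\widehat f) - T \le 2\epsilon\,\error_{\wt\calS}(\widehat f)$, with the slack absorbed in the favorable direction rather than inflating the constant. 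That rearrangement is the one step you were missing. For what it is worth, the paper's own bookkeeping is not airtight either (its Hoeffding step for $2m_1/m-1$ quietly drops a factor of $2$, and the reverse direction of the absolute value produces $\epsilon/(1-\epsilon)$ rather than $\epsilon$), so up to constant-factor slack your argument establishes the same statement; but to hit the literal constant $2$, use the self-referential rearrangement above in place of the $\min(m_C,m_M)$ bound.
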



To complete the argument, 
we show that due to the exchangeability of the clean data $S$
and the clean portion of the randomly labeled data $S_C$,
we can estimate the error on the latter $\error_{\wt \calS_C} (\wh f)$
by the error on the former $\error_{\calS} (\wh f)$.
%

\begin{lemma} \label{lem:clear_error}
    Assume the same setup as \thmref{thm:error_ERM}. 
    Then for any $\delta >0$, with probability at least $1-\delta$ 
    over the random draws of $\wt S_C$ and $S$, we have 
        $\abs{\error_{\wt \calS_C}(\widehat f) - \error_{\calS}(\widehat f) } \le \left(1 + \frac{m}{2n}\right) \sqrt{\frac{\log(2/\delta)}{m}}\,.$ 
\end{lemma}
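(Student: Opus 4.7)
The plan is to exploit a symmetry in the training procedure: the ERM classifier $\wh f$ depends on its training set $S \cup \wt S$ only as a multiset, and by construction the points in $\wt S_C$ carry their true labels, just as the points in $S$ do. Therefore permuting which clean points belong to ``$S$'' versus ``$\wt S_C$'' leaves $S \cup \wt S$, and hence $\wh f$, unchanged. This converts the question, nominally about a classifier that depends on both $S$ and $\wt S_C$, into a sampling-without-replacement question over a fixed set of $\{0,1\}$-valued errors, for which standard concentration is available.

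Concretely, I would first condition on $\wt S_M$, on $k \defeq |\wt S_C|$, and on the multiset $U \defeq S \cup \wt S_C$. Since $(X,\wt Y)$ conditioned on $\{\wt Y = Y^*\}$ has law $\calD$, both $S$ and $\wt S_C$ are (conditionally) i.i.d.\ samples from $\calD$ of sizes $n$ and $k$, so the partition of $U$ into $S$ and $\wt S_C$ under this conditioning is uniform over all $\binom{n+k}{n}$ choices, while $\wh f$ itself is deterministic. Defining $e_i \defeq \error(\wh f(x_i), y_i) \in \{0,1\}$ for $(x_i, y_i) \in U$ and $\bar e \defeq \tfrac{1}{n+k}\sum_i e_i$, the quantity $\error_{\wt\calS_C}(\wh f)$ is then the mean of a size-$k$ subsample drawn without replacement from $\{e_i\}$, so Hoeffding's inequality for sampling without replacement yields
\[
\bigl|\error_{\wt\calS_C}(\wh f) - \bar e\bigr| \;\le\; \sqrt{\tfrac{\log(4/\delta)}{2k}}
\]
with conditional probability at least $1-\delta/2$.

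Next, the bookkeeping identity $n\,\error_\calS(\wh f) + k\,\error_{\wt\calS_C}(\wh f) = (n+k)\bar e$ rearranges to $\error_{\wt\calS_C}(\wh f) - \error_\calS(\wh f) = \tfrac{n+k}{n}\bigl(\error_{\wt\calS_C}(\wh f) - \bar e\bigr)$, so the Hoeffding bound above translates directly into a bound of order $\tfrac{n+k}{n}\sqrt{\log(4/\delta)/(2k)}$ on $|\error_{\wt\calS_C}(\wh f) - \error_\calS(\wh f)|$. Because $|\wt S_C|\sim\mathrm{Bin}(m, 1/2)$ in the binary case, a multiplicative Chernoff bound yields $k \ge m/2 - O(\sqrt{m\log(1/\delta)})$ with probability at least $1-\delta/2$; substituting into the prefactor $(n+k)/n$ and the $1/\sqrt{k}$ term recovers the claimed $\bigl(1 + \tfrac{m}{2n}\bigr)\sqrt{\log(2/\delta)/m}$ up to lower-order corrections, after a final union bound over the two $\delta/2$ events.

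The main subtlety lies in the first step: one must carefully verify that the conditional law of $(X,\wt Y)$ given $\{\wt Y = Y^*\}$ coincides with $\calD$, and that the ERM rule is symmetric in its inputs (returns a function of the training multiset), so that relabeling clean points between $S$ and $\wt S_C$ genuinely leaves $\wh f$ invariant. Once this exchangeability is established, the remaining Hoeffding bound, algebraic rearrangement, and Chernoff argument are routine.
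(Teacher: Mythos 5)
Your proposal is correct and follows essentially the same route as the paper: treat $S\cup \wt S_C$ as a fixed population of correctly labeled points over which the partition into $S$ and $\wt S_C$ is exchangeable, apply Hoeffding's inequality for sampling without replacement, rearrange via the weighted-average identity, and use $|\wt S_C|\approx m/2$. You are in fact somewhat more careful than the paper on the last step (an explicit Chernoff bound and union bound where the paper simply writes ``$2m_2\approx m$''), at the cost of a marginally larger constant inside the logarithm.
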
 

\lemref{lem:clear_error} establishes a tight bound 
on the difference of the error 
of classifier $\widehat f$
on $\wt S_C$ and on $S$. 
The proof uses Hoeffding's inequality for randomly sampled points from a fixed population~\citep{hoeffding1994probability,bardenet2015concentration}.  

Having established these core components,
we can now summarize
the proof strategy for \thmref{thm:error_ERM}. 
We bound the population error on clean data  
(the term on the LHS of \eqref{eq:thm1}) in three steps: 
(i) use \lemref{lem:fit_mislabeled} to upper bound 
the error on clean distribution $\error_\calD(\widehat f)$, 
by the error on mislabeled training data $\error_{\wt \calS_M}(\widehat f)$; 
(ii) approximate $\error_{\wt \calS_M}(\widehat f)$ 
by $\error_{\wt \calS_C}(\widehat f)$ 
and the error on randomly labeled training data 
(i.e.,  $\error_{\wt \calS}(\widehat f)$) using \lemref{lem:mislabeled_error}; 
and (iii) use \lemref{lem:clear_error} 
to estimate $\error_{\wt \calS_C}(\widehat f)$ 
using the error on clean training data 
($\error_{\calS}(\widehat f)$).
%
%
%


\paragraph{Comparison with Rademacher bound}
Our bound in \thmref{thm:error_ERM}  
shows that we can upper bound
the clean population error of a classifier 
by estimating 
its accuracy on the clean and 
randomly labeled portions of the training data. 
Next, we show that our bound's
dominating term 
is upper bounded by the \emph{Rademacher complexity} \citep{shalev2014understanding},
a standard distribution-dependent complexity measure.

\begin{prop} \label{prop:rademacher}
    Fix a randomly labeled dataset $\wt S \sim \wt \calD^m$. 
    Then for any classifier $f \in \calF$ 
    (possibly dependent on $\wt S$)\footnote{We 
    restrict $\calF$ to functions which output
    a label in $\calY = \{-1,1\}$.} 
    and for any $\delta >0$, 
    with probability at least $1-\delta$ 
    over random draws of $\wt S$,
    we have 
    \begin{align*}
        1 - 2\error_{\wt S}(f) \le \Expt{\epsilon,x}{\sup_{f\in \calF} \left( \frac{\sum_i \epsilon_i f(x_i)}{m} \right) } + \sqrt{\frac{2\log(\frac{2}{\delta})}{m}},
    \end{align*} 
    where $\epsilon$ is drawn 
    from a uniform distribution over $\{-1, 1\}^m$ 
    and $x$ is drawn from $\calD_\calX^m$.      
\end{prop}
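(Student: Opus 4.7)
The plan is to identify $1 - 2\,\error_{\wt S}(f)$ as an empirical Rademacher average and then apply a bounded-differences concentration argument. First, since $\calY = \{-1,1\}$ and $f(x) \in \{-1,1\}$, we have $\error(f(x),y) = (1 - y f(x))/2$, so
\[
1 - 2\,\error_{\wt S}(f) \;=\; \frac{1}{m}\sum_{i=1}^m y_i f(x_i).
\]
Because each label $y_i$ in $\wt S$ is assigned uniformly at random from $\{-1,1\}$ independently of $x_i$, the sequence $(y_i)$ is distributed exactly as a Rademacher sequence $(\epsilon_i)$, independent of $(x_i)$, matching the joint distribution in the RHS of the proposition.

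Second, since $f$ is allowed to depend on the whole of $\wt S$ but is still constrained to lie in $\calF$, I would pass to a uniform bound over $\calF$:
\[
\frac{1}{m}\sum_{i=1}^m \epsilon_i f(x_i) \;\le\; \sup_{g\in\calF} \frac{1}{m}\sum_{i=1}^m \epsilon_i g(x_i).
\]
The expectation over $(\epsilon, x)$ of the RHS of this inequality is exactly the Rademacher-complexity term on the RHS of the proposition, so it remains to show that this supremum concentrates around its expectation.

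Third, I would apply McDiarmid's bounded-differences inequality to the data-dependent quantity $\Phi(\wt S) \defeq \sup_{g\in\calF}\frac{1}{m}\sum_i \epsilon_i g(x_i)$. Since $|\epsilon_i g(x_i)| \le 1$ for every $g \in \calF$ and every $i$, replacing a single sample $(x_i, \epsilon_i)$ alters $\Phi$ by at most $2/m$. McDiarmid then yields
\[
\Phi(\wt S) \;\le\; \Expt{\epsilon, x}{\Phi(\wt S)} + \sqrt{\frac{2 \log(2/\delta)}{m}}
\]
with probability at least $1 - \delta$. Chaining the three steps gives the claimed bound.

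The main ``obstacle'' is not conceptual but bookkeeping: (a) verifying that the uniformly random label assignment is equivalent in distribution to an independent Rademacher sequence independent of $(x_i)$, so that $\Expt{\epsilon, x}{\Phi(\wt S)}$ coincides with the expression on the proposition's RHS; and (b) managing the McDiarmid constants (and possibly a factor of two from a two-sided application or a trivial union bound) to recover the exact $\sqrt{2 \log(2/\delta)/m}$ slack. Once the identity $1 - 2\,\error_{\wt S}(f) = \frac{1}{m}\sum_i \epsilon_i f(x_i)$ is in hand, the remainder is a standard symmetrization-style concentration argument.
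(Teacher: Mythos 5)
Your proof is correct and follows essentially the same route as the paper's: the identity $1 - 2\,\error_{\wt S}(f) = \frac{1}{m}\sum_i y_i f(x_i)$ for $\{-1,1\}$-valued classifiers, passing to the supremum over $\calF$ to handle the dependence of $f$ on $\wt S$, and then McDiarmid's inequality. The only cosmetic difference is that you apply McDiarmid once to the i.i.d. pairs $(x_i, y_i)$ with bounded-difference constant $2/m$, whereas the paper applies it twice (first over the labels with the inputs fixed, then over the inputs); your single application recovers the stated $\sqrt{2\log(2/\delta)/m}$ slack at least as directly.
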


In other words, the proposition above highlights that the accuracy 
on the randomly labeled data 
is never larger than the Rademacher complexity of $\calF$ 
w.r.t. the underlying distribution over $\calX$, 
implying that our bound is never looser 
than a bound based on Rademacher complexity.
The proof follows 
by application 
of the bounded difference condition 
and McDiarmid’s inequality~\citep{mcdiarmid1989}. 
We now discuss extensions 
of \thmref{thm:error_ERM} 
to regularized ERM
and multiclass classification.

\paragraph{Extension to regularized ERM} 
Consider any function $R: \calF \to \Real$,
e.g., a regularizer 
that penalizes some measure of complexity 
for functions in class $\calF$.
Consider the following regularized ERM:
\begin{align}
    \widehat f \defeq \argmin_{f\in \calF} \error_\calS(f) + \lambda R(f) \,, \label{eq:regularized_erm}
\end{align}
where $\lambda$ is a regularization constant. 
If the regularization coefficient is independent 
of the training data $S \cup \wt S$, 
then our guarantee (\thmref{thm:error_ERM}) holds.
Formally,
\begin{theorem} \label{thm:error_regularized_ERM}
    For any regularization function $R$, 
    assume we perform regularized ERM 
    as in \eqref{eq:regularized_erm} on $S \cup \wt S$ 
    and obtain a classifier $\widehat f$.
    Then, for any $\delta > 0$, 
    with probability at least $1-\delta$, we have
        $\error_\calD(\widehat f)  \le \error_\calS(\widehat f) + 1 - 2 \error_{\wt\calS}(\widehat f) + \left(\sqrt{2} \error_{\wt S}(\widehat f)  + 2 + \frac{m}{2n}\right)  \sqrt{\frac{\log(1/\delta)}{m}} \,.$ 
\end{theorem}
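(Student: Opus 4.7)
The plan is to follow the three-step strategy used for \thmref{thm:error_ERM} essentially verbatim, modifying only the place where the optimality of $\widehat f$ is invoked, namely the proof of \lemref{lem:fit_mislabeled}. Lemmas \ref{lem:mislabeled_error} and \ref{lem:clear_error} depend solely on concentration of empirical quantities (Hoeffding bounds over $\wt S$ and over the exchangeable draws of $S \cup \wt S_C$) and never on the training procedure, so they transfer to the regularized ERM classifier without change.

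The key adjustment is in the regularized analogue of \lemref{lem:fit_mislabeled}. I would redefine the ``optimal competitor'' as
\begin{align*}
f^* \defeq \argmin_{f \in \calF}\; \error_{\widecheck\calD}(f) + \lambda R(f),
\end{align*}
where $\widecheck \calD$ is the same mixture of the empirical distribution on the clean portion $S \cup \wt S_C$ and the mislabeled population $\calDm$, with mixing weights chosen to match the composition of $\calS \cup \wt \calS$. Two optimality inequalities then hold: regularized ERM optimality of $\widehat f$ gives
\begin{align*}
\error_{\calS \cup \wt \calS}(\widehat f) + \lambda R(\widehat f) \le \error_{\calS \cup \wt \calS}(f^*) + \lambda R(f^*),
\end{align*}
while regularized optimality of $f^*$ on $\widecheck\calD$ gives
\begin{align*}
\error_{\widecheck \calD}(f^*) + \lambda R(f^*) \le \error_{\widecheck \calD}(\widehat f) + \lambda R(\widehat f).
\end{align*}
Adding these, the $\lambda R(\cdot)$ terms cancel and we recover exactly the key inequality used in the unregularized proof:
\begin{align*}
\error_{\calS \cup \wt \calS}(\widehat f) + \error_{\widecheck \calD}(f^*) \le \error_{\calS \cup \wt \calS}(f^*) + \error_{\widecheck \calD}(\widehat f).
\end{align*}

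From here the derivation proceeds identically to the proof of \lemref{lem:fit_mislabeled}: apply Hoeffding's inequality to $\error_{\wt \calS_M}(f^*)$ versus $\error_{\calDm}(f^*)$, which is legitimate because $f^*$ depends only on $S$, $\wt S_C$, and $\calDm$, and is therefore independent of $\wt S_M$. Rearranging via $\error_\calD = 1 - \error_{\calDm}$ yields the regularized analogue of \eqref{eq:lemma1}. Then chaining this with \lemref{lem:mislabeled_error} and \lemref{lem:clear_error} exactly as in the final summary of \thmref{thm:error_ERM}, and applying a union bound over the three high-probability events, delivers the claimed inequality.

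The only real obstacle I anticipate is verifying that the cancellation of the regularization terms is genuinely valid, which requires that $\lambda$ and $R$ are fixed independently of the randomness in $S$ and $\wt S$; this is precisely the hypothesis of the theorem and must be invoked explicitly. A secondary bookkeeping point is that the mixing weights defining $\widecheck \calD$ must be set so that, after scaling by the sample sizes of $S$, $\wt S_C$, and $\wt S_M$, the clean contributions on both sides of the combined inequality cancel cleanly and leave the desired comparison between $\error_{\wt \calS_M}(\widehat f)$ and $\error_{\calDm}(\widehat f)$. This is the same calculation as in the original proof and requires no new ideas.
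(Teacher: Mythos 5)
Your proposal is correct and matches the paper's own proof essentially step for step: the paper likewise redefines $f^*$ as the regularized minimizer over the mixture $\widecheck\calD$, invokes the two regularized optimality inequalities so that the $\lambda R(\cdot)$ terms cancel, applies Hoeffding to $f^*$ (which is independent of $\wt S_M$), and then chains with Lemmas \ref{lem:mislabeled_error} and \ref{lem:clear_error} via a union bound. Your explicit caveat that $\lambda$ and $R$ must be data-independent is exactly the point the paper emphasizes.
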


A key insight here is 
that the proof of \thmref{thm:error_ERM} 
treats the clean data $S$ as fixed
and considers random draws of the mislabeled portion.
Thus a data-independent 
regularization function does not
%
alter our chain of arguments and hence,
has no impact on the resulting inequality.
We prove this result formally in \appref{app:proof_erm}.



We note one immediate corollary 
from \thmref{thm:error_regularized_ERM}:
when learning any function $f$ parameterized by $w$ 
with $L_2$-norm penalty on the parameters $w$, 
the population error with $\widehat f$ is determined by the error on the clean training data as long as the error on randomly labeled data is high (close to $1/2$).

\paragraph{Extension to multiclass classification} 
Thus far, we have addressed binary classification.
We now extend these results to the multiclass setting.
%
As before, we obtain datasets 
$S$ and $\wt S$. 
Here, random labels are assigned
uniformly among all classes.
\begin{theorem} \label{thm:multiclass_ERM}
    For any regularization function $R$, 
    assume we perform regularized ERM 
    as in \eqref{eq:regularized_erm} on $S \cup \wt S$ 
    and obtain a classifier $\widehat f$. 
    For a multiclass classification problem 
    with $k$ classes,
    for any $\delta >0$, with probability at least $1-\delta$,
    we have
    \vspace{-10pt}
    \begin{align*}
        \error_\calD(\widehat f)  \le \error_\calS(\widehat f)  + (k-1) &\left(1 - \tfrac{k}{k-1} \error_{\wt\calS}(\widehat f)\right) \\
        &+ c\sqrt{\frac{\log(\frac{4}{\delta})}{2m}} \,,\numberthis \label{eq:multiclass_ERM}
    \end{align*}
    for some constant $c \le (2k+\sqrt{k} + \frac{m}{n\sqrt{k}})$.
\end{theorem}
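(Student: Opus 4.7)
The plan is to generalize each of the three lemmas underlying \thmref{thm:error_ERM} to the $k$-class setting and then combine them as before. First I would prove a multiclass version of \lemref{lem:fit_mislabeled} that bounds $\error_\calD(\widehat f)$ by the empirical error on the mislabeled portion $\wt S_M$, then a multiclass version of \lemref{lem:mislabeled_error} that recovers $\error_{\wt\calS_M}(\widehat f)$ from the observable quantities $\error_{\wt\calS}(\widehat f)$ and $\error_{\wt\calS_C}(\widehat f)$, and finally reuse \lemref{lem:clear_error} (which is agnostic to the number of classes) to replace $\error_{\wt\calS_C}(\widehat f)$ by $\error_{\calS}(\widehat f)$. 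The regularization term $\lambda R(f)$ is, as in the binary case, independent of the realization of $\wt S_M$ and so does not affect any of the steps.

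The key new ingredient is the multiclass analog of the binary identity $\error_\calD = 1 - \error_{\calDm}$. Conditioning on whether $f$ classifies $x$ correctly under $\calD$, and using that the re-assigned label $y'$ is uniform on $\{1,\ldots,k\}\setminus\{y\}$, I would derive $\error_{\calDm}(f) = 1 - \tfrac{1}{k-1}\error_\calD(f)$, equivalently $\error_\calD(f) = (k-1)\bigl(1 - \error_{\calDm}(f)\bigr)$, which specializes to the binary statement when $k=2$. Plugging this into the comparison argument of \lemref{lem:fit_mislabeled}---that is, $\widehat f$ beats an independent $f^* \defeq \argmin_{f\in\calF} \error_{\widecheck\calD}(f)$ on the combined training loss, and $\error_{\wt\calS_M}(f^*)$ concentrates around $\error_{\calDm}(f^*)$ by Hoeffding---yields $\error_\calD(\widehat f) \le (k-1)\bigl(1 - \error_{\wt\calS_M}(\widehat f)\bigr)$ up to an $O(k/\sqrt{m})$ deviation term.

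For the multiclass \lemref{lem:mislabeled_error}, I would exploit that under uniform labeling over $k$ classes the clean and mislabeled fractions $|\wt S_C|/m$ and $|\wt S_M|/m$ concentrate around $1/k$ and $(k-1)/k$ respectively, again by Hoeffding. Starting from the identity $m\,\error_{\wt\calS}(\widehat f) = |\wt S_C|\,\error_{\wt\calS_C}(\widehat f) + |\wt S_M|\,\error_{\wt\calS_M}(\widehat f)$ and controlling the deviations of the fractions, one obtains $\bigl|k\,\error_{\wt\calS}(\widehat f) - \error_{\wt\calS_C}(\widehat f) - (k-1)\error_{\wt\calS_M}(\widehat f)\bigr| \lesssim \sqrt{k/m}$. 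Substituting this into the bound from Step 1 cancels the $(k-1)$ factor on $\error_{\wt\calS_M}(\widehat f)$ and produces the main term $(k-1) - k\,\error_{\wt\calS}(\widehat f) + \error_{\wt\calS_C}(\widehat f)$. Applying \lemref{lem:clear_error} then swaps $\error_{\wt\calS_C}(\widehat f)$ for $\error_{\calS}(\widehat f)$, yielding exactly the leading terms on the right-hand side of \eqref{eq:multiclass_ERM}.

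The main obstacle is careful constant bookkeeping: the factor $c \le 2k + \sqrt{k} + m/(n\sqrt{k})$ aggregates (i) the $(k-1)$ blowup on the deviation term in Step 1, (ii) the joint concentration of $|\wt S_C|/m$ and the two conditional errors in Step 2, where crude bounding of $\sqrt{k(k-1)}$-type factors must be avoided, and (iii) the refinement of \lemref{lem:clear_error}'s $(1+m/(2n))$ prefactor to reflect that the expected size of $\wt S_C$ is $m/k$ rather than $m/2$. I would tolerate some slack by union-bounding the three high-probability events at level $\delta/4$ each, collapse the resulting deviation terms into the common envelope $\sqrt{\log(4/\delta)/(2m)}$, and read off the stated expression for $c$.
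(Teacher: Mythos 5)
Your proposal is correct and follows essentially the same route as the paper: the same key identity $\error_{\calDm}(f) = 1 - \tfrac{1}{k-1}\error_\calD(f)$ (which the paper derives by splitting the mislabeled-population error into the event of predicting the true label versus predicting neither the true nor the assigned label), the same three multiclass lemmas mirroring Lemmas 3.2--3.4, and the same union bound over the three events. Your one small wobble---first calling Lemma 3.4 class-agnostic and then correctly noting its prefactor must be adjusted because $|\wt S_C| \approx m/k$---is resolved in your own final paragraph and matches the paper's modified version of that lemma.
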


We first discuss the implications of \thmref{thm:multiclass_ERM}.
Besides empirical error on clean data, the dominating term in
the above expression
is given by $(k-1)\left(1 - \frac{k}{k-1} \error_{\wt\calS}(\widehat f)\right)$.  
For any predictor $f$ 
(not dependent on $\wt S$), 
the term $\error_{\wt\calS}(\widehat f)$ 
would be approximately $(k-1)/{k}$ 
and for $\widehat f$, 
the difference now evaluates to the accuracy 
of the randomly labeled data.  
Note that for binary classification, 
\eqref{eq:multiclass_ERM} 
simplifies to 
\thmref{thm:error_ERM}. 

The core of our proof involves
obtaining an inequality 
similar to \eqref{eq:lemma1}. 
While for binary classification, 
we could upper bound $\error_{\wt \calS_M}$ 
with $1-\error_\calD$ 
(in the proof of \lemref{lem:fit_mislabeled}), 
for multiclass classification, 
error on the mislabeled data 
and accuracy on the clean data 
in the population 
are not so directly related.  
%
To establish an inequality 
analogous to \eqref{eq:lemma1},
we break the error on the 
(unknown) mislabeled data 
into two parts: one term corresponds 
to predicting the true label on mislabeled data, 
and the other corresponds to predicting 
neither the true label 
nor the assigned (mis-)label.  
Finally, we relate these errors to their
population counterparts to establish 
an inequality similar to \eqref{eq:lemma1}. 
\section{Generalization Bound for RATT with Gradient Descent}
\label{sec:linear_models}

In the previous section, 
we presented results with ERM on 0-1 loss. 
While minimizing the 0-1 loss is hard in general,
these results provide important theoretical insights. 
In this section, 
we show parallel results for linear models 
trained with Gradient Descent (GD).


To begin, we introduce the setup and some additional notation.
For simplicity, we begin discussion with binary classification with $\calX = \Real^d$. 
Define a linear function $f(x; w) \defeq w^T{x}$ 
for some $w \in \Real^d$ and $x\in \calX$. 
Given training set $S$,
we suppose that the parameters of the linear function 
are obtained via gradient descent on 
the following $L_2$ regularized problem: 
\begin{align}
    \calL_S(w; \lambda) \defeq \sum_{i=1}^n{(w^Tx_i - y_i)^2} + \lambda \norm{w}{2}^2 \,, \label{eq:l2_MSE}   
\end{align}
where $\lambda\ge0$ is a regularization parameter. 
%
%
%
%
Our choice to analyze squared loss minimization for linear networks 
is motivated in part by its analytical convenience, and follows
recent theoretical work which analyze 
neural networks trained via squared loss minimization
in the Neural Tangent Kernel (NTK) regime when 
they are well approximated by linear networks~\citep{jacot2018neural,arora2019fine,du2019gradient,hu2019simple}. 
Moreover, recent research suggests that for classification tasks,
squared loss minimization performs comparably to cross-entropy loss 
minimization
\citep{muthukumar2020classification,hui2020evaluation}. 
%
%
%
%

For a given training set $S$, 
we use $S_{(i)}$ to denote the training set $S$ 
with the $i^{\text{th}}$ point removed.
We now introduce one stability condition:

\begin{condition}[Hypothesis Stability] 
    \label{cond:hypothesis_stability}
    We have $\beta$ hypothesis stability 
    if our training algorithm $\calA$ satisfies the following for all $i \in \{1,2,\ldots, n\}$: 
    \begin{align*}
    \Expt{\calS, (x,y) \in \calD}{ \abs{\error\left( f(x) ,y  \right) - \error\left( f_{(i)}(x), y \right) }} \le \frac{\beta}{n} \,,
    \end{align*}
    where $f_{(i)} \defeq f(\calA, S_{(i)})$ and $ f \defeq f(\calA, S)$.
\end{condition}

This condition is similar to a notion of stability 
called \emph{hypothesis stability} 
\citep{bousquet2002stability,kearns1999algorithmic,elisseeff2003leave}. 
Intuitively, \codref{cond:hypothesis_stability} states 
that 
empirical leave-one-out error 
and average population error of leave-one-out classifiers
are close.
%
This condition is mild 
and does not guarantee generalization. 
We discuss the implications in more detail in \appref{app:discuss_cond1}.

Now we present the main result of this section. 
As before, we assume access to a clean dataset 
$S = \{(x_i, y_i)\}_{i=1}^n \sim \calD^n$ 
and randomly labeled dataset 
$\wt S = \{(x_i, y_i)\}_{i=n+1}^{n+m} \sim \wt \calD^m$. 
Let $\bX = [x_1, x_2, \cdots, x_{m+n}]$ 
and $\by = [y_1, y_2, \cdots, y_{m+n}]$. 
Fix a positive learning rate $\eta$ such that 
$\eta \le 1/\left(\norm{\bX^T\bX}{\text{op}} + \lambda^2\right)$ 
and an initialization $w_0 = 0$. 
Consider the following gradient descent iterates 
to minimize objective \eqref{eq:l2_MSE} on $S \cup \wt S$:
\begin{align}
w_t = w_{t-1} - \eta \grad_w \calL_{S \cup \wt S} (w_{t-1}; \lambda) \quad \forall t=1,2,\ldots \,. \label{eq:GD_iterates}
\end{align} 
Then we have $\{ w_t\}$ converge to the limiting solution 
$\wh w = \left( \bX^T\bX+\lambda \boldsymbol{I}\right)^{-1}\bX^T\by$. Define $\widehat f (x) \defeq f(x ; \wh w) $.  

%
\begin{theorem}\label{thm:linear}
    Assume that this gradient descent algorithm satisfies \codref{cond:hypothesis_stability}
    with $\beta=\calO(1)$.  
    Then for any $\delta >0$, with probability at least $1-\delta$ 
    over the random draws of datasets $\wt S$ and $S$, we have:
    \begin{align*}
        \error_\calD(\widehat f) \le \error_\calS(\widehat f) &+ 1 - 2 \error_{\wt\calS}(\widehat f) + \sqrt{\frac{4}{\delta}\left(\frac{1}{m} +\frac{3\beta}{m+n} \right)} \\
        & + \left(\sqrt{2}\error_{\wt\calS}(\widehat f) + 1 + \frac{m}{2n} \right) \sqrt{\frac{\log(4/\delta)}{m}} \,. \label{eq:gd_error} \numberthis
    \end{align*} 
\end{theorem}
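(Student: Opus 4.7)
The plan is to follow the same three-step strategy used for \thmref{thm:error_ERM}, but to replace the ERM optimality argument behind \lemref{lem:fit_mislabeled} with one based on hypothesis stability. The key observation is that \lemref{lem:mislabeled_error} and \lemref{lem:clear_error} do not reference the training algorithm at all: the former concerns only the concentration of uniformly random labels, and the latter is an exchangeability statement between $S$ and $\wt S_C$. Both transfer verbatim to any classifier, including the ridge regression limit $\wh f$ of GD. Consequently the only new ingredient needed is a stability-based analog of \lemref{lem:fit_mislabeled}: with failure probability at most $\delta/2$ over $S \cup \wt S$,
\begin{align*}
    \error_\calD(\wh f) \le 1 - \error_{\wt\calS_M}(\wh f) + \sqrt{\frac{4}{\delta}\left(\frac{1}{m} + \frac{3\beta}{m+n}\right)} \,.
\end{align*}

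To establish this analog, the first step is to prove a second-moment bound $\mathbb{E}\left[(\error_{\wt\calS_M}(\wh f) - \error_{\calDm}(\wh f))^2\right] \le \tfrac{1}{m} + \tfrac{3\beta}{m+n}$. The idea is to compare $\wh f$ with its leave-one-out variants $\wh f_{(i)}$ for indices $i$ in the mislabeled portion: since $\wh f_{(i)}$ is independent of the $i^\text{th}$ sample, the 0-1 loss $\error(\wh f_{(i)}(x_i), y_i)$ has expectation exactly $\error_{\calDm}(\wh f_{(i)})$ for $i \in \wt S_M$. Expanding the squared deviation and applying \codref{cond:hypothesis_stability} to swap each $\wh f_{(i)}$ for $\wh f$ at cost $\beta/(m+n)$ yields the $3\beta/(m+n)$ contribution, while the $1/m$ piece is the intrinsic variance of averaging $\approx m/2$ indicator variables. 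Markov's inequality then converts the expected squared deviation into a high-probability bound, which is precisely why the $\sqrt{4/\delta}$ factor appears in place of the cleaner $\sqrt{\log(1/\delta)}$ obtained under ERM with Hoeffding. Combining with $\error_\calD(\wh f) = 1 - \error_{\calDm}(\wh f)$ in binary classification completes the analog.

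Once the three lemmas are in hand, the remainder mirrors the proof of \thmref{thm:error_ERM}: starting from the stability-based analog above, substitute $\error_{\wt\calS_M}(\wh f)$ via \lemref{lem:mislabeled_error} (with failure probability $\delta/4$), and then substitute $\error_{\wt\calS_C}(\wh f)$ by $\error_\calS(\wh f)$ via \lemref{lem:clear_error} (with failure probability $\delta/4$). A union bound over the three events delivers \eqref{eq:gd_error}; the coefficients $\sqrt{2}\error_{\wt\calS}(\wh f) + 1 + m/(2n)$ in the last term arise in exactly the same way as in the binary proof of \thmref{thm:error_ERM}, the only difference being that the Hoeffding-based "+1" inside the square root is now absorbed into the stability-based term rather than being added as a separate constant.

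The main obstacle is the new analog of \lemref{lem:fit_mislabeled}. The ERM proof exploited a data-independent competitor $f^*$ that is optimal on the mixed distribution $\widecheck\calD$, so that ERM optimality gave a direct Hoeffding-style concentration for $\error_{\wt\calS_M}(\wh f)$; for GD on squared loss no 0-1 optimality of this kind is available, forcing us instead to exploit the leave-one-out structure of hypothesis stability and pay the usual Markov $1/\sqrt{\delta}$ price. Carefully bookkeeping the second-moment calculation --- in particular, separating the intrinsic variance of $\error_{\wt\calS_M}(\wh f_{(i)})$ around $\error_{\calDm}(\wh f_{(i)})$ from the perturbation $\error_{\wt\calS_M}(\wh f) - \error_{\wt\calS_M}(\wh f_{(i)})$ that is controlled by $\beta/(m+n)$ --- is the technical heart of the argument.
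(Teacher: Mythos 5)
Your high-level architecture is right---\lemref{lem:mislabeled_error} and \lemref{lem:clear_error} carry over unchanged, and the only new ingredient is a replacement for \lemref{lem:fit_mislabeled} followed by a Chebyshev-type argument that produces the $\sqrt{1/\delta}$ rate---but the way you propose to build that replacement has a genuine gap. You want to pass from $\error_{\wt \calS_M}(\wh f)$ to $\error_{\calDm}(\wh f)$ by ``swapping $\wh f$ for $\wh f_{(i)}$ at cost $\beta/(m+n)$'' using \codref{cond:hypothesis_stability}, i.e., by controlling the perturbation $\error_{\wt\calS_M}(\wh f) - \error_{\wt\calS_M}(\wh f_{(i)})$, which requires comparing the two classifiers \emph{at the training points themselves}. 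But \codref{cond:hypothesis_stability} only bounds the expected discrepancy between $\wh f$ and $\wh f_{(i)}$ at a fresh point $(x,y)\sim\calD$ drawn independently of the training set; it says nothing about their discrepancy at the removed point $(x_i,y_i)$ or at any other training point. Indeed, if stability licensed that swap, it would yield a two-sided bound $\abs{\error_{\wt\calS_M}(\wh f) - \error_{\calDm}(\wh f)}$ of order $\sqrt{1/m + \beta/(m+n)}$, i.e., a generalization guarantee---precisely what the paper emphasizes this mild condition does \emph{not} provide.

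The missing piece is a separate, algorithm-specific, \emph{deterministic} one-sided inequality $\error_{\wt\calS_M}(\wh f) \le \error_{\text{LOO}(\wt S_M)}$, where $\error_{\text{LOO}(\wt S_M)} = \tfrac{1}{\abs{\wt S_M}}\sum_{(x_i,y_i)\in \wt S_M}\error(\wh f_{(i)}(x_i), y_i)$ is the leave-one-out error. The paper proves this pointwise from the closed form $\wh w = (\bX^T\bX + \lambda \boldsymbol{I})^{-1}\bX^T\by$ via the Sherman--Morrison formula: with $\bA = \bX_{(i)}^T\bX_{(i)} + \lambda \boldsymbol{I}$ one computes
\begin{align*}
\error(\wh f(x_i), y_i) = \indict{\, x_i^T\bA^{-1}x_i + y_i \, x_i^T \bA^{-1}\bX_{(i)}^T\by_{(i)} < 0\, } \le \indict{\, y_i \, x_i^T \bA^{-1}\bX_{(i)}^T\by_{(i)} < 0 \,} = \error(\wh f_{(i)}(x_i), y_i) \,,
\end{align*}
since $x_i^T\bA^{-1}x_i \ge 0$. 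This ``the model fits its own training point at least as well as a model that never saw it'' step is where the linear/ridge structure actually enters the proof, and it cannot be replaced by \codref{cond:hypothesis_stability}. Once it is in place, the rest of your outline is essentially the paper's: the leave-one-out error is an (essentially) unbiased estimator of the population error on $\calDm$, hypothesis stability controls its variance via the Bousquet--Elisseeff-style second-moment computation (\lemref{lem:stability_error}), Chebyshev's inequality converts this to a high-probability bound with the $\sqrt{4/\delta}$ factor, and \lemref{lem:mislabeled_error} and \lemref{lem:clear_error} finish the argument exactly as you describe.
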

%
With a mild regularity condition, 
we establish the same bound 
on GD training with squared loss, 
notably the same dominating term on the population error, 
as in \thmref{thm:error_ERM}. In \appref{app:multiclass_linear}, we present the extension to multiclass classification, where we again obtain a result parallel to \thmref{thm:multiclass_ERM}. 

\begin{hproof}
    Because squared loss minimization  
    does not imply 0-1 error minimization, 
    we cannot use arguments 
    from \lemref{lem:fit_mislabeled}. 
    This is the main technical difficulty.  
    To compare the 0-1 error at a train point with an unseen point, 
    we use the  closed-form expression for $\widehat{w}$.
    We show that the train error on mislabeled points  
    is less than the population error on the distribution of mislabeled data
    (parallel to \lemref{lem:fit_mislabeled}). 
    
    For a mislabeled 
    training point $(x_i, y_i)$ in $\wt S$, we show that
    \begin{align}
        \indict{y_i x_i^T \wh w \le 0 } \le \indict{y_i x_i^T \wh w_{(i)} \le 0 } \,, \label{eq:loo_error}
    \end{align}
    where $\wh w_{(i)}$ is the classifier obtained
    by leaving out the $i^\text{th}$ point from the training set. 
    Intuitively, this condition
    states
    that the train error at a training point 
    is less than the leave-one-out error at that point, i.e. the error obtained 
    by removing that point and re-training.
    Using \codref{cond:hypothesis_stability}, we then relate the average leave-one-out error 
    (over the index $i$ of the RHS in \eqref{eq:loo_error}) 
    to the population error on the mislabeled distribution 
    to obtain an inequality similar to~\eqref{eq:lemma1}.
\end{hproof}

\paragraph{Extensions to kernel regression}
%
Since the result in \thmref{thm:linear} 
does not impose any regularity conditions 
on the underlying distribution 
over $\calX\times \calY$, 
our guarantees 
extend straightforwardly to kernel regression 
by using the transformation $x\to \phi(x)$ 
for some feature transform function $\phi$. 
Furthermore, recent literature 
has pointed out a concrete connection 
between neural networks 
and kernel regression 
with the so-called 
\emph{Neural Tangent Kernel} (NTK) 
which holds in a certain regime
where weights do not change much 
during training~\citep{jacot2018neural,du2019gradient,du2018gradient,chizat2019lazy}. 
%
Using this concrete correspondence, 
our bounds on the clean population error 
(\thmref{thm:linear}) 
extend to wide neural networks 
operating in the NTK regime.  


\paragraph{Extensions to early stopped GD} 
Often in practice, gradient descent is stopped early. 
%
We now provide theoretical evidence 
that our guarantees 
may continue to hold 
for an early stopped GD iterate. Concretely, we show that in expectation,
the outputs of the GD iterates are close 
to that of a problem with data-independent regularization (as considered in~Theorem~\ref{thm:error_regularized_ERM}).
%
First,
we introduce some notation.  
By $\calL_{S}(w)$, we denote 
the objective in \eqref{eq:l2_MSE} with $\lambda=0$. 
Consider the GD iterates defined in \eqref{eq:GD_iterates}. 
Let $\wt w_{\lambda} = \argmin_{w} \calL_\calS (w;\lambda)$. 
%
Define $f_t(x) \defeq f(x;w_t)$ as the solution at the $t^{\text{th}}$ iterate
and $\wt f_\lambda(x) \defeq f(x;\wt w_\lambda)$ as the regularized solution. 
Let $\kappa$ be the condition number of the population covariance matrix 
and let $s_\text{min}$ be the minimum positive singular value 
of the empirical covariance matrix. 
%

\begin{prop}[informal] \label{prop:early_stop}
%
For $\lambda = \frac{1}{t\eta}$, we have 
\begin{align*}
    \Expt{x \sim \calD_\calX}{(f_t(x) - \wt f_\lambda(x))^2} &\le c(t,\eta) \cdot \Expt{x \sim \calD_\calX}{f_t(x)^2} \,, 
\end{align*}
where $c(t, \eta) \approx \kappa \cdot \min( 0.25, \frac{1}{s_\text{min}^2 t^2 \eta^2})$.
An equivalent guarantee holds for a point $x$ 
sampled from the training data. 
\end{prop}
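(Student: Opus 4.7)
The plan is to recognize both $w_t$ and $\wt w_\lambda$ as shrinkage transforms of the same least-squares object along the eigenvectors of the empirical second-moment matrix, reducing the proposition to a scalar comparison of two filter functions at each positive eigenvalue. Writing $\bX\bX^\top/(n+m) = V\Lambda V^\top$ with eigenvalues $\{\mu_i\}$, unrolling the GD recursion from $w_0 = 0$ yields $w_t = V g_t(\Lambda) V^\top \bX\by$ with filter $g_t(\mu) = (1-(1-\eta\mu)^t)/\mu$ (extended by continuity to $\mu = 0$, where it does not contribute since $\bX\by$ has no component there), while the ridge closed form gives $\wt w_\lambda = V h_\lambda(\Lambda) V^\top \bX\by$ with $h_\lambda(\mu) = 1/(\mu + \lambda)$. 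In the $V$-basis, the coordinates of $w_t - \wt w_\lambda$ are exactly $(1 - h_\lambda(\mu_i)/g_t(\mu_i))$ times the corresponding coordinates of $w_t$.

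\textbf{Paragraph 2 (prediction norm to parameter norm).} Next I would lift this per-coordinate comparison to the population prediction norm. Since $f_t(x) - \wt f_\lambda(x) = x^\top(w_t - \wt w_\lambda)$, taking expectations yields
\begin{align*}
\frac{\Expt{x\sim\calD_\calX}{(f_t(x) - \wt f_\lambda(x))^2}}{\Expt{x\sim\calD_\calX}{f_t(x)^2}} &= \frac{(w_t - \wt w_\lambda)^\top \Sigma (w_t - \wt w_\lambda)}{w_t^\top \Sigma w_t} \\
&\le \kappa \cdot \frac{\|w_t - \wt w_\lambda\|^2}{\|w_t\|^2},
\end{align*}
where $\Sigma$ is the population covariance and the last inequality sandwiches each quadratic form between $\sigma_{\min}(\Sigma)\|\cdot\|^2$ and $\sigma_{\max}(\Sigma)\|\cdot\|^2$. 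Expanding in the $V$-basis bounds the right-hand side by $\kappa \cdot \max_{i:\mu_i>0}(1 - h_\lambda(\mu_i)/g_t(\mu_i))^2$. The training-data variant of the proposition follows from exactly the same argument with the empirical $\hat\Sigma$ in place of $\Sigma$ on both sides, in which case the $\kappa$ factor drops out.

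\textbf{Paragraph 3 (scalar filter inequality).} What remains is the scalar filter comparison, which is where the quantitative content lives. Substituting $\lambda = 1/(t\eta)$ simplifies the ratio to $h_\lambda(\mu)/g_t(\mu) = t\eta\mu/[(t\eta\mu + 1)(1 - (1 - \eta\mu)^t)]$, a function of $u \defeq t\eta\mu$ alone. I would establish two complementary bounds on $(1 - h_\lambda/g_t)^2$: (i) a uniform bound of $1/4$ on $\mu \in (0, 1/\eta]$, by showing $h_\lambda/g_t \ge 1/2$ via a direct calculus argument; and (ii) a refined bound for large $u$, obtained from $(1 - \eta\mu)^t \le e^{-t\eta\mu}$, which gives $|1 - h_\lambda/g_t| \le 1/(t\eta\mu + 1) \le 1/(t\eta s_\text{min})$ whenever $\mu \ge s_\text{min}$. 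Taking the pointwise minimum of the two and multiplying by $\kappa$ from Paragraph 2 recovers $c(t,\eta) \approx \kappa \cdot \min(0.25,\ 1/(s_\text{min}^2 t^2 \eta^2))$.

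\textbf{Main obstacle.} The delicate step will be (i): proving $h_\lambda/g_t \ge 1/2$ uniformly at \emph{finite} $t$. In the $t \to \infty$ continuous-time limit the ratio reduces to the cleaner function $u/((u+1)(1 - e^{-u}))$, whose minimum over $u > 0$ is easily checked to lie strictly above $1/2$; extending this to all finite $t$ requires a monotonicity argument on $(1 - u/t)^t$ that is uniform in both $u$ and $t$. Once this scalar bound is in hand, the remainder of the proof is essentially spectral bookkeeping.
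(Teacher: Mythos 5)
Your proposal is correct and follows essentially the same route as the paper's proof of the formal version (\propref{prop:formal_early_stop} in \appref{app:formal_early_stop}): diagonalize in the eigenbasis, reduce to the per-eigenvalue comparison of the GD filter $\mu^{-1}(1-(1-\eta\mu)^t)$ against the ridge filter $(\mu+1/(t\eta))^{-1}$, bound the relative difference by $\min(1/2,\, 1/(t\eta\mu))$, and pay a condition-number factor $\kappa$ to pass from the empirical to the population quadratic form. The scalar inequality you flag as the main obstacle, $1-(1-x)^t \le 2tx/(1+tx)$, is exactly part (b) of the paper's \lemref{lem:ineq_soln}, which the paper itself only verifies numerically rather than by a calculus argument.
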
 



The proposition above states 
that for large enough $t$, 
GD iterates stay close
to a regularized solution 
with data-independent 
regularization constant. 
Together with our guarantees in \thmref{thm:linear} 
for regularization solution with $\lambda = \frac{1}{t\eta}$, 
\propref{prop:early_stop} shows that our guarantees with RATT 
may hold on early stopped GD. 
See the formal result 
in \appref{app:formal_early_stop}.

\textbf{Remark {} {}} 
\propref{prop:early_stop} only bounds the expected squared difference between the $t^{\text{th}}$ gradient descent iterate and a corresponding regularized solution. 
The expected squared difference
and the expected difference
of classification errors
(what we wish to bound)
are not related, in general.
However, they can be related 
under standard low-noise 
(margin) assumptions.
For instance, under the Tsybakov noise condition~\citep{tsybakov1997nonparametric, yao2007early}, we can lower-bound 
the expression on the LHS 
of \propref{prop:early_stop} 
with the difference 
of expected classification error. 

\paragraph{Extensions to deep learning} 
Note that the main lemma underlying
our bound on (clean) population error  
states that when training on a
mixture of clean and randomly labeled data, 
we obtain a classifier whose empirical error 
on the mislabeled training data
is lower than its population error 
on the distribution of mislabeled data.  
We prove this for ERM on 0-1 loss (\lemref{lem:fit_mislabeled}). 
For linear models 
(and networks in NTK regime), 
we obtained this result
by assuming hypothesis stability 
and relating training error at a datum
with the leave-one-out error (\thmref{thm:linear}). 
However, to extend our bound to deep models 
we must assume that training 
on the mixture of random and clean data 
leads to overfitting on the random mixture. Formally:  

\begin{assumption} \label{asmp:deep_models}
Let $\wh f$ be a model obtained 
by training with an algorithm $\calA$ 
on a mixture of clean data $S$ 
and randomly labeled data $\wt S$. 
Then with probability $1-\delta$ 
over the random draws 
of mislabeled data $\wt S_M$, 
we assume that the following condition holds:
\begin{align*}
    \error_{\wt \calS_M} (\wh f) \le \error_{\calDm} (\wh f) + c\sqrt{\frac{\log(1/\delta)}{2m}}  \,,
\end{align*}
for a fixed constant $c > 0$.
\end{assumption}

Under \asmpref{asmp:deep_models}, 
our results in \thmref{thm:error_ERM},~\ref{thm:error_regularized_ERM} and~\ref{thm:multiclass_ERM} 
extend beyond ERM with the 0-1 loss 
to general learning algorithms. 
We include the formal result in \appref{appsubsec:ext_DL}. 
%
Note that given the ability of neural networks 
to interpolate the data, 
this assumption seems uncontroversial 
in the later stages of training. 
Moreover, concerning the early phases of training,
recent research has shown that 
learning dynamics for complex deep networks 
resemble those for linear models 
\citep{nakkiran2019sgd, hu2020surprising},
much like the wide neural networks 
that we do analyze.
Together, these arguments help 
to justify \asmpref{asmp:deep_models} and hence, the applicability
of our bound in deep learning.
Motivated by our analysis on linear models trained with gradient descent, 
we discuss conditions in \appref{appsubsec:justifying_assumption1} which imply \asmpref{asmp:deep_models} for constant values $\delta > 0$.
In the next section, we empirically demonstrate applicability of our bounds for deep models.

\section{Empirical Study and Implications} \label{sec:exp}
\begin{figure*}[t!]
    \centering 
    \subfigure[]{\includegraphics[width=0.3\linewidth]{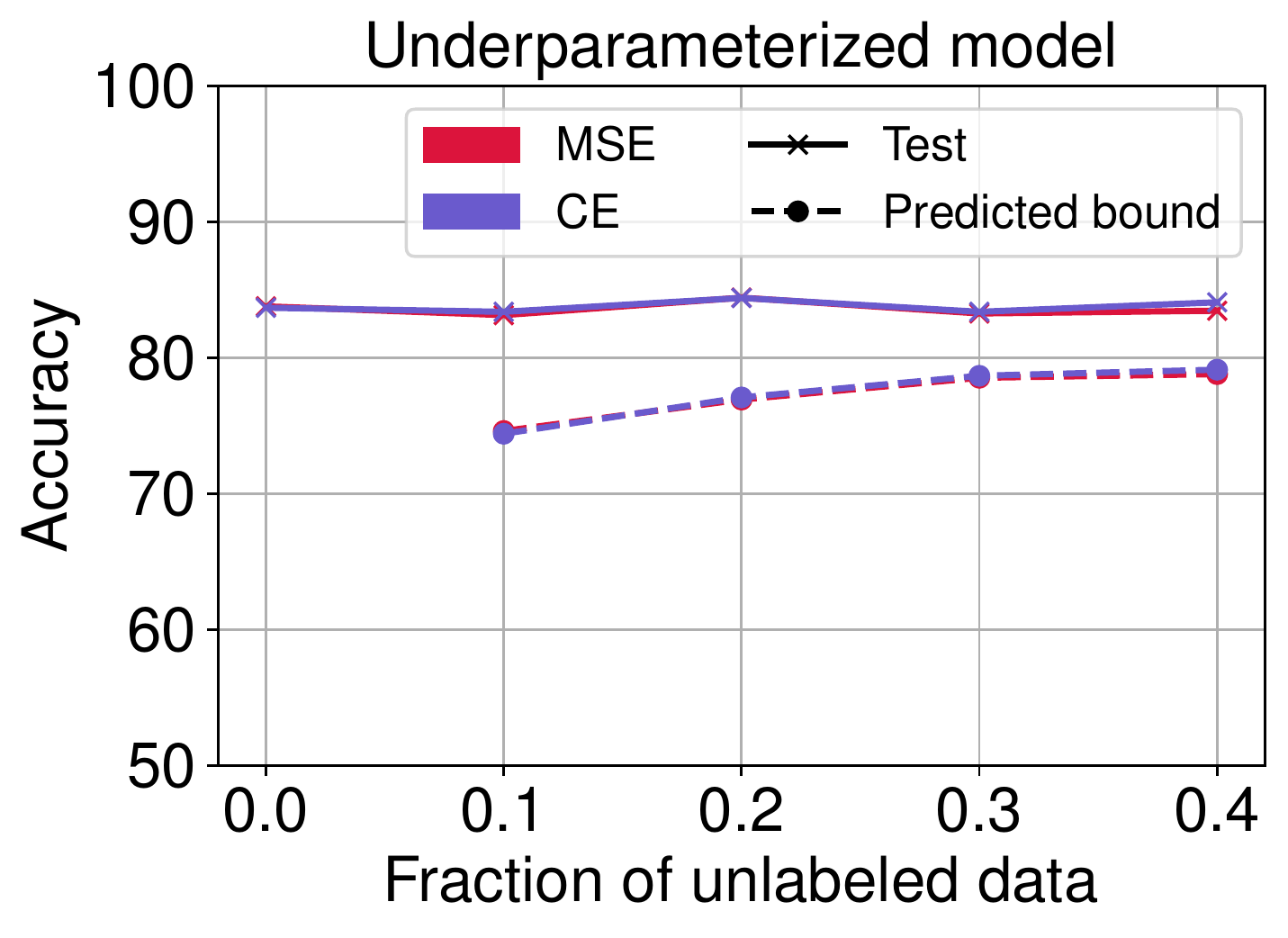}}\hfil
    \subfigure[]{\includegraphics[width=0.3\linewidth]{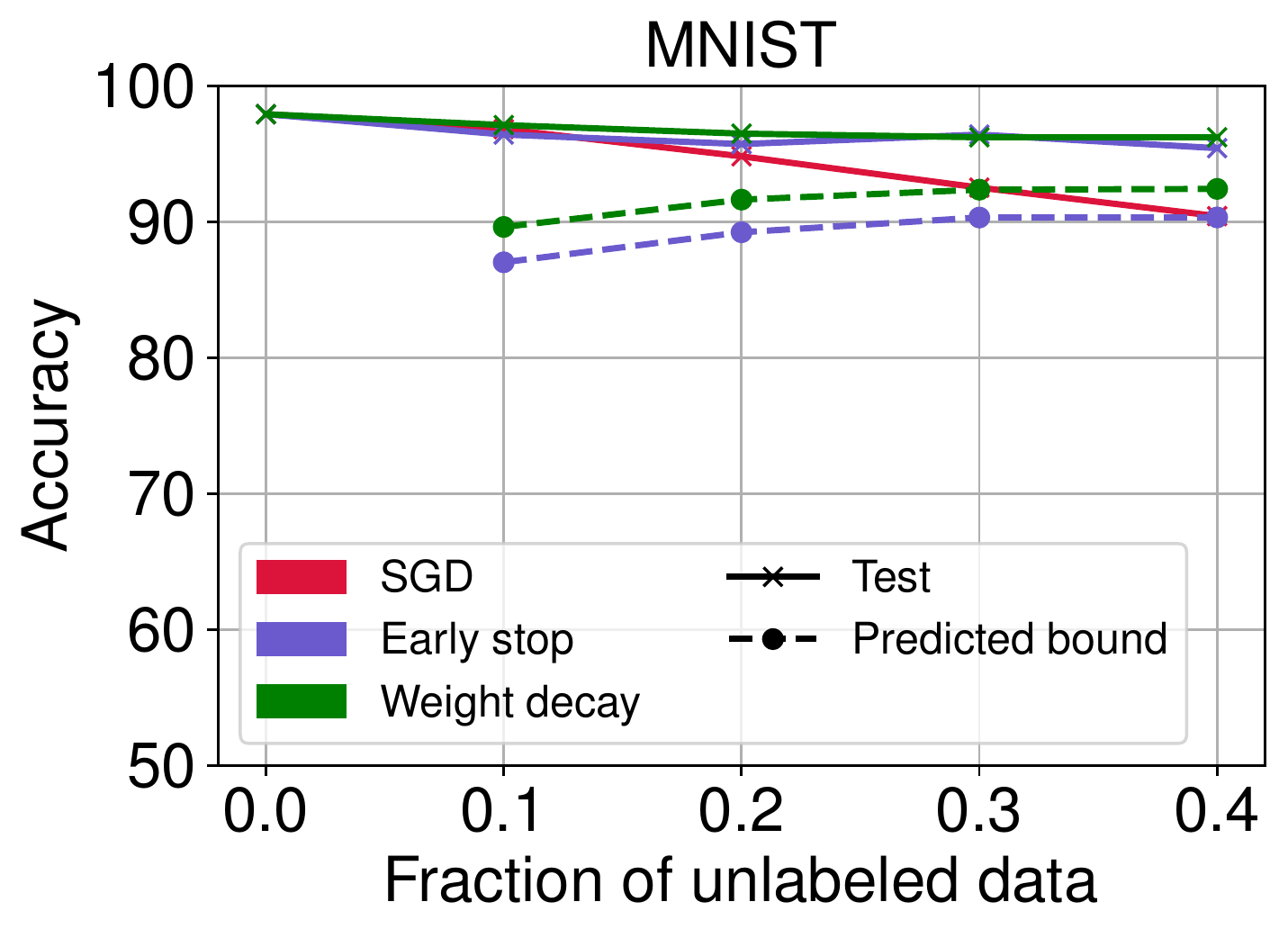}}\hfil
    \subfigure[]{\includegraphics[width=0.3\linewidth]{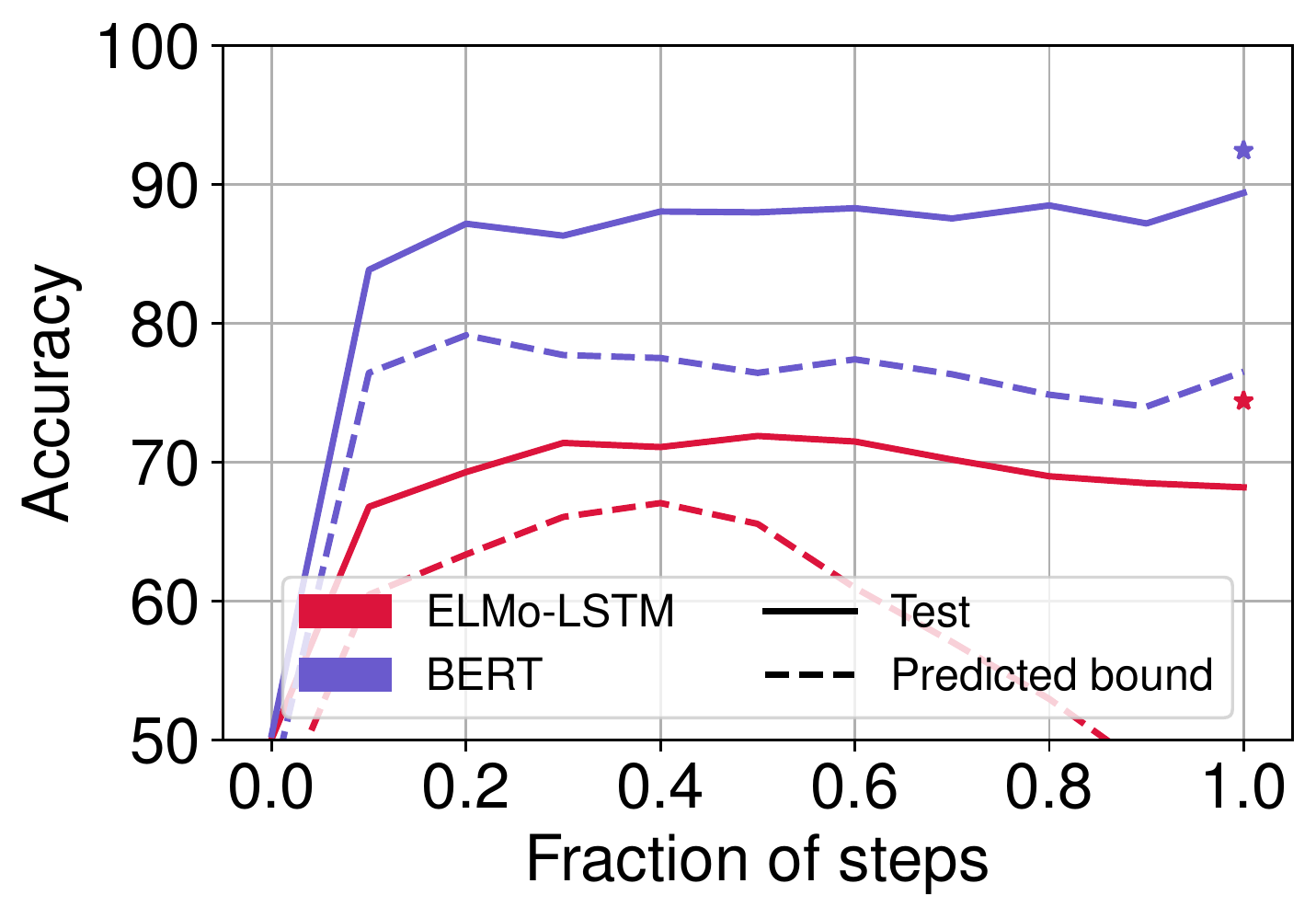}}
    \vspace{-5pt}
    \caption{ 
    We plot the accuracy and corresponding bound 
    (RHS in \eqref{eq:erm}) at $\delta = 0.1$. 
    for binary classification tasks. 
    Results aggregated over $3  $ seeds. 
    (a) Accuracy vs fraction of unlabeled data (w.r.t clean data) 
    in the toy setup with a linear model trained with GD. 
    (b) Accuracy vs fraction of unlabeled data 
    for a 2-layer wide network trained with SGD on binary MNIST. 
    With SGD and no regularization (red curve in (b)),
    we interpolate the training data 
    and hence the predicted lower bound is $0$. 
    However, with early stopping (or weight decay) 
    we obtain tight guarantees. 
    (c) Accuracy vs gradient iteration on IMDb dataset 
    with unlabeled fraction fixed at 0.2. 
    In plot (c), `*' denotes the best test accuracy 
    with the same hyperparameters 
    and training only on clean data. 
    See \appref{app:exp} for exact hyperparameter values. }
    \label{fig:error_binary}
    \vspace{-5pt}
\end{figure*}

Having established our framework theoretically, 
we now demonstrate its utility experimentally.
First, for linear models and wide networks 
in the NTK regime where our guarantee holds,
we confirm that our bound is not only valid, 
but closely tracks the generalization error.
Next, we show that in practical deep learning settings,
optimizing cross-entropy loss by SGD,
the expression for our (0-1) ERM bound 
nevertheless tracks test performance closely
and in numerous experiments
on diverse models and datasets
is never violated empirically. 
\vspace{5pt}

\noindent \textbf{Datasets {} {}} 
To verify our results on linear models,
we consider a toy dataset,
where the class conditional distribution $p(x|y)$
for each label is Gaussian.
For binary tasks, we use binarized CIFAR-10 (first 5 classes vs rest)~\citep{krizhevsky2009learning}, 
binary MNIST (0-4 vs 5-9)~\citep{lecun1998mnist}
and IMDb sentiment analysis dataset~\citep{maas2011learning}. 
For multiclass setup, we use MNIST and CIFAR-10.   
\vspace{5pt}

\noindent \textbf{Architectures {} {}} 
To simulate the NTK regime, 
we experiment with 2-layered wide networks 
both (i) with the second layer fixed 
at random initialization;
(ii) and updating both layers' weights.
For vision datasets (e.g., MNIST and CIFAR10),
we consider (fully connected) 
multilayer perceptrons (MLPs) 
with ReLU activations
and ResNet18~\citep{he2016deep}. 
For the IMDb dataset, we train Long Short-Term Memory Networks 
(LSTMs;~\citet{hochreiter1997long}) with ELMo embeddings~\citep{Peters:2018} 
and
fine-tune an off-the-shelf uncased BERT model
\citep{devlin2018bert, wolf-etal-2020-transformers}. 
\vspace{5pt}

\noindent \textbf{Methodology {} {}} 
To bound the population error,
we require access to both 
clean
and unlabeled data.
For toy datasets, we obtain unlabeled data 
by sampling from the underlying distribution over $\calX$. 
For 
image and text datasets, 
we hold out a small fraction 
of the clean training data 
and discard their 
labels
to simulate unlabeled data.
We use the random labeling procedure 
described
in \secref{sec:setup}. 
After augmenting clean training data 
with randomly labeled data, 
we train in the standard fashion. 
See \appref{app:exp} for experimental details. 

\paragraph{Underparameterized linear models}
On toy Gaussian data, 
we train linear models 
with GD to minimize cross-entropy loss 
and mean squared error.
Varying the fraction of randomly labeled data 
we observe that the accuracy 
on clean unseen data 
is barely impacted
(\figref{fig:error_binary}(a)). 
This highlights that in low dimensional models 
adding randomly labeled data 
with the clean dataset (in toy setup) 
has minimal effect on the performance on unseen clean data. 
Moreover, we find that RATT 
offers a tight lower bound 
on the unseen clean data accuracy. 
We observe the same behavior with 
Stochastic Gradient Descent (SGD) training 
(ref. \appref{app:exp}).   
Observe that the predicted bound goes up as 
the fraction of unlabeled data increases. 
While the accuracy as dictated by 
the dominating term in the RHS of \eqref{eq:thm1} 
decreases with an increase in the fraction 
of unlabeled data, 
we observe a relatively sharper 
decrease in $\mathcal{O}_p\left({1/\sqrt{m}}\right)$ 
term of the bound, leading to an overall increase 
in the predicted accuracy bound. 
In this toy setup, we also evaluated a kernel 
regression bound from \citet{bartlett2002rademacher} (Theorem 21), 
however, the predicted kernel regression bound remains vacuous.

\paragraph{Wide Nets} 
Next, we consider MNIST binary classification 
with a wide 2-layer fully-connected network.  
In experiments with SGD training on MSE loss
without early stopping 
or weight decay regularization, 
we find that adding extra randomly label data 
hurts the unseen clean performance 
(\figref{fig:error_binary}(b)). 
Additionally, due to the perfect fit 
on the training data, 
our bound is rendered vacuous. 
However, with early stopping (or weight decay), 
we observe close to zero performance difference 
with additional randomly labeled data.
Alongside, we obtain tight bounds 
on the accuracy on unseen clean data 
paying only a small price to negligible
for incorporating randomly labeled data. 
Similar results hold for SGD and GD 
and when cross-entropy loss 
is substituted for MSE (ref. \appref{app:exp}).  

\paragraph{Deep Nets} 
We verify our findings on 
(i) ResNet-18 and 5-layer MLPs
trained with binary CIFAR  
(\figref{fig:error_CIFAR10}); 
and (ii) ELMo-LSTM and BERT-Base models
fine-tuned on the IMDb dataset 
(\figref{fig:error_binary}(c)). 
See \appref{app:exp} for additional results 
with deep models on binary MNIST. 
We fix the amount of unlabeled data 
at $20\%$ of the clean dataset size 
and train all models 
with standard hyperparameters. 
Consistently, we find 
that our predicted bounds
are never violated in practice.
And as training proceeds, 
the fit on the mislabeled data increases 
with perfect overfitting 
in the interpolation regime 
rendering our bounds vacuous. 
However, with early stopping, 
our bound predicts test performance closely. 
For example, on IMDb dataset with BERT fine-tuning 
we predict $79.8$ as the accuracy of the classifier, 
when the true performance is $88.04$ 
(and the best achievable performance on unseen data is $92.45$). 
Additionally, we observe 
that our method tracks the performance
from the beginning of the training 
and not just towards the end.

Finally, we verify our multiclass bound
on MNIST and CIFAR10
with deep MLPs and ResNets
(see results in \tabref{table:multiclass}
and per-epoch curves in \appref{app:exp}). 
As before, 
we fix the amount of unlabeled data 
at $20\%$ of the clean dataset 
to minimize cross-entropy loss via SGD.
In all four settings, our bound predicts 
non-vacuous performance on unseen data. 
In \appref{app:exp}, we investigate 
our approach on CIFAR100 
showing that even though 
our bound grows pessimistic 
with greater numbers of classes, 
the error on the mislabeled data nevertheless tracks population accuracy. 
%


\begin{table}[t]
    \centering
    \small
    \tabcolsep=0.12cm
    \begin{tabular}{@{}*{5}{c}@{}}
    \toprule
    Dataset & Model  & \thead{Pred. Acc}  & \thead{Test Acc.} & \thead{Best Acc.}\\
    \midrule
    \multirow{2}{*}{MNIST}  & MLP & $93.1$ & $97.4$  & $97.9$\\
    & ResNet & $96.8$ & $98.8$  & $98.9$ \\
    \midrule
    \multirow{2}{*}{CIFAR10}  & MLP & $48.4$  & $54.2$  & $60.0$ \\
    & ResNet & $76.4$  &  $88.9$  & $92.3$ \\
    \bottomrule 
    \end{tabular}  
    \caption{Results on multiclass classification tasks. With pred. acc. we refer to the dominating term in RHS of \eqref{eq:multiclass_ERM}. At the given sample size and $\delta=0.1$, the remaining term evaluates to $30.7$, decreasing our predicted accuracy by the same. We note that
    test acc. denotes the corresponding accuracy on unseen clean data. Best acc. is the best achievable accuracy with just training on just the clean data (and same hyperparamters except the stopping point). Note that across all tasks our predicted bound is tight and the gap between the best accuracy and test accuracy is small. Exact hyperparameters are included in \appref{app:exp}.  }\label{table:multiclass}
    \vspace{-10pt}
\end{table}

\section{Discussion and Connections to Prior Work}\label{sec:discuss}

\paragraph{Implicit bias in deep learning} 
Several recent lines of research 
attempt to explain the generalization of neural networks 
despite massive overparameterization
via the \emph{implicit bias} of gradient descent
\citep{soudry2018implicit,gunasekar2018implicitb,gunasekar2018implicita,ji2019implicit,chizat2020implicit}. 
Noting that even for overparameterized linear models,
there exist multiple parameters 
capable of overfitting the training data
(with arbitrarily low loss), 
of which some generalize well
and others do not,
they seek to characterize the favored solution.
Notably, \citet{soudry2018implicit}
find that for linear networks,
gradient descent converges (slowly)
to the max margin solution.
A complementary line of work
focuses on the early phases of training,
finding both empirically~\citep{rolnick2017deep,arpit2017closer} 
and theoretically~\citep{arora2019fine,li2020gradient,liu2020early}
that even in the presence 
of a small amount of mislabeled data, 
gradient descent is biased 
to fit the clean data first
during initial phases of training. 
However, to best our knowledge,
no prior work leverages this phenomenon 
to obtain generalization guarantees on the clean data,
which is the primary focus of our work. 
Our method exploits this phenomenon 
to produce non-vacuous generalization bounds.
Even when we cannot prove \emph{a priori}
that models will fit the clean data well
while performing badly on the mislabeled data,
we can observe that it indeed happens (often in practice),
and thus, \emph{a posteriori}, 
provide tight bounds on the population error.
Moreover, by using regularizers
like early stopping or weight decay, 
we can accentuate this phenomenon,
enabling our framework to
provide even tighter guarantees.

\paragraph{Generalization bounds}
Conventionally, generalization in machine learning 
has been studied through the lens of uniform 
convergence bounds~\citep{blumer1989learnability, vapnik1999overview}.  
Representative works on understanding generalization 
in overparameterized networks
within this framework
include 
\citet{neyshabur2015norm,neyshabur2017pac,neyshabur2017exploring,neyshabur2018role,dziugaite2017computing, bartlett2017spectrally,arora2018stronger,li2018learning,zhou2018non, allen2019learning,nagarajan2019deterministic}. 
However, uniform convergence based 
bounds typically remain numerically 
loose relative to the true generalization error. 
Several works have also questioned the ability of 
uniform convergence based approaches to 
explain generalization in overparameterized models~\citep{zhang2016understanding,nagarajan2019uniform}. 
%
%
Subsequently, recent works have proposed 
unconventional ways to derive generalization bounds~\citep{negrea2020defense,zhou2020uniform}. 
In a similar spirit, we take departure from 
complexity-based approaches to generalization bounds  
in our work. 
In particular, we leverage unlabeled data 
to derive a post-hoc generalization bound.  
Our work provides 
guarantees on overparameterized networks 
by using early stopping or weight decay regularization, 
preventing a perfect fit on the training data.  
Notably, in our framework, the model
can perfectly fit the clean portion of the data,
so long as they nevertheless fit the mislabeled data poorly.

\paragraph{Leveraging noisy data to provide generalization guarantees}
In parallel work, \citet{bansal2020self} 
presented an upper bound 
on the generalization gap 
of linear classifiers 
trained on representations 
learned via self-supervision. 
Under certain noise-robustness 
and rationality assumptions 
on the training procedure, 
the authors obtained bounds 
dependent on the complexity 
of the linear classifier 
and independent of the complexity of representations. 
By contrast, 
we present generalization bounds 
for 
supervised learning 
that are non-vacuous by virtue 
of the early learning phenomenon. 
While both frameworks highlight
how robustness to random label corruptions 
can be leveraged to obtain bounds 
that do not depend directly
on the complexity of the underlying hypothesis class,
our framework, methodology, claims, 
and generalization results
are very different from theirs.

\textbf{Other related work.} 
A long line of work relates early stopped GD
to a corresponding regularized solution
\citep{friedman2003gradient, yao2007early, suggala2018connecting, ali2018continuous,neu2018iterate,ali2020implicit}. 
In the most relevant work, 
\citet{ali2018continuous} and \citet{suggala2018connecting} 
address a regression task, 
theoretically relating
the solutions of early-stopped GD 
and a regularized problem, 
obtained with a data-independent 
regularization coefficient.  
%
%
Towards understanding generalization
numerous stability conditions 
have been discussed 
\citep{kearns1999algorithmic, bousquet2002stability, mukherjee2006learning, shalev2010learnability}. 
\citet{hardt2016train} studies the uniform stability property 
to obtain generalization guarantees with early-stopped SGD. 
%
While we assume a benign stability condition
to relate leave-one-out performance with population error, 
we do not rely on any stability condition
that implies generalization.


%
%




\section{Conclusion and Future work} \label{sec:conc}
%
%
Our work introduces a new approach 
for obtaining generalization bounds
that do not directly depend on the 
underlying complexity of the model class. 
For linear models, we provably obtain a bound 
in terms of the fit on randomly labeled data added during training. 
Our findings raise a number of questions to be explored next. 
While our empirical findings and theoretical results with 0-1 loss 
hold absent further assumptions
and shed light on why the bound 
may apply for more general models,
we hope to extend our proof 
that overfitting (in terms classification error)
to the finite sample of mislabeled data
occurs with SGD training on broader classes 
of models and loss functions. 
We hope to build on some early results
\citep{nakkiran2019sgd, hu2020surprising} 
which provide evidence that deep models
behave like linear models 
in the early phases of training.  
We also wish to extend our framework
to the interpolation regime.
Since many important aspects of neural network learning 
take place within early epochs
\citep{achille2017critical,frankle2020early},
including gradient dynamics converging 
to very small subspace~\citep{gur2018gradient},
we might imagine operationalizing our bounds
in the interpolation regime
by discarding the randomly labeled data
after initial stages of training.

\section*{Acknowledgements}

SG thanks Divyansh Kaushik for help with NLP code.  
This material is based on research sponsored by Air Force Research Laboratory (AFRL) under agreement number FA8750-19-1-1000. The U.S. Government is authorized to reproduce and distribute reprints for Government purposes notwithstanding any copyright notation therein. 
The views and conclusions contained herein are those of the authors and should not be interpreted as necessarily representing the official policies or endorsements, either expressed or implied, of Air Force Laboratory, DARPA or the U.S. Government. 
SB acknowledges funding from the NSF grants DMS-1713003 and CIF-1763734, as well as Amazon AI and a Google Research Scholar Award. 
ZL acknowledges Amazon AI, Salesforce Research, Facebook, UPMC, Abridge, the PwC Center, the Block Center, the Center for Machine Learning and Health, and the CMU Software Engineering Institute (SEI),
for their generous support of ACMI Lab's research on machine learning under distribution shift.

\bibliography{lifetime}
\bibliographystyle{icml2021}

\newpage
\appendix
\onecolumn



\section*{Supplementary Material}
\addcontentsline{toc}{section}{Supplementary Material}

Throughout this discussion, 
we will make frequently use 
of the following standard results
concerning the exponential concentration 
of random variables:

\begin{lemma}[Hoeffding's inequality for independent RVs~\citep{hoeffding1994probability}] Let $Z_1, Z_2, \ldots, Z_n$ be independent bounded random variables with $Z_i \in [a,b]$ for all $i$, then 
    \begin{align*}
        \prob\left( \frac{1}{n} \sum_{i=1}^n (Z_i - \Expo{Z_i}) \ge t \right) \le \exp{\left( -\frac{2nt^2}{(b-a)^2} \right) }
    \end{align*} 
    and 
    \begin{align*}
        \prob\left( \frac{1}{n} \sum_{i=1}^n (Z_i - \Expo{Z_i}) \le -t \right) \le \exp{\left( -\frac{2nt^2}{(b-a)^2} \right) }
    \end{align*} 
    for all $t \ge 0$. 
\end{lemma}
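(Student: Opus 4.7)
The plan is to establish the upper-tail bound; the lower-tail bound follows at once by applying the upper-tail bound to the sequence $(-Z_i)_{i=1}^n$, whose centered versions also take values in an interval of length $b-a$, and whose mean-deviation lower tail becomes an upper tail. So I would reduce the statement to a single centered-sum claim: writing $X_i \defeq Z_i - \Expo{Z_i}$, the $X_i$ are independent, mean zero, and each lies in an interval of length $b-a$, and I need to show $\prob(S_n \ge nt) \le \exp(-2nt^2/(b-a)^2)$ for $S_n \defeq \sum_{i=1}^n X_i$.

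Next I would apply the standard Chernoff/exponential-Markov trick: for every $\lambda > 0$,
\begin{align*}
\prob(S_n \ge nt) \;\le\; e^{-\lambda n t}\, \Expo{e^{\lambda S_n}} \;=\; e^{-\lambda n t}\, \prod_{i=1}^n \Expo{e^{\lambda X_i}},
\end{align*}
where the equality uses independence. The entire argument now hinges on bounding the individual moment generating factors.

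The technical heart of the proof is \emph{Hoeffding's lemma}: for any centered random variable $X$ taking values in an interval $[\alpha,\beta]$ with $\beta-\alpha = b-a$, one has $\Expo{e^{\lambda X}} \le \exp(\lambda^2(b-a)^2/8)$. I would prove this in two sub-steps. First, convexity of $x \mapsto e^{\lambda x}$ gives the pointwise bound $e^{\lambda x} \le \tfrac{\beta-x}{\beta-\alpha}e^{\lambda\alpha} + \tfrac{x-\alpha}{\beta-\alpha}e^{\lambda\beta}$ on $[\alpha,\beta]$; taking expectations and using $\Expo{X}=0$ yields an explicit scalar function of $\lambda$. Second, writing $\phi(\lambda)$ for the logarithm of that expression, I would verify $\phi(0)=\phi'(0)=0$ and then show $\phi''(\lambda)\le (b-a)^2/4$ uniformly by recognizing $\phi''(\lambda)$ as the variance of a Bernoulli-type random variable supported in an interval of length $b-a$, which is maximized at $(b-a)^2/4$. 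A second-order Taylor expansion of $\phi$ at $0$ then yields the quadratic bound $\phi(\lambda) \le \lambda^2(b-a)^2/8$.

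Plugging Hoeffding's lemma into the product gives $\prob(S_n \ge nt) \le \exp(-\lambda n t + n\lambda^2(b-a)^2/8)$, a quadratic-in-$\lambda$ exponent that I would minimize by taking $\lambda = 4t/(b-a)^2$, producing the claimed rate $\exp(-2nt^2/(b-a)^2)$. The main obstacle is Hoeffding's lemma, specifically the variance bound $\phi''(\lambda) \le (b-a)^2/4$; the Chernoff reduction, the independence factorization, and the $\lambda$-optimization are all routine. Once the variance bound is in place, everything else is bookkeeping.
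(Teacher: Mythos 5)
Your proof is correct and is the classical Chernoff--Hoeffding argument: reduce to the upper tail by negation, apply the exponential Markov bound with independence, control each moment generating function via Hoeffding's lemma (convexity plus the second-derivative variance bound $\phi''(\lambda) \le (b-a)^2/4$), and optimize at $\lambda = 4t/(b-a)^2$. The paper states this lemma as a standard cited result and offers no proof of its own, so there is nothing to diverge from; your argument matches the canonical proof in the cited reference.
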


\begin{lemma}[Hoeffding's inequality for sampling with replacement~\citep{hoeffding1994probability}] \label{lem:hoeffding_sampling} Let $\calZ = (Z_1, Z_2, \ldots, Z_N)$ be a finite population of $N$ points with $Z_i \in [a.b]$ for all $i$. Let $X_1, X_2, \ldots X_n$ be a random sample drawn without replacement from $\calZ$. Then for all $t \ge 0$, we have 
    \begin{align*}
        \prob\left( \frac{1}{n} \sum_{i=1}^n (X_i - \mu ) \ge t \right) \le \exp{\left( -\frac{2nt^2}{(b-a)^2} \right) }
    \end{align*} 
    and 
    \begin{align*}
        \prob\left( \frac{1}{n} \sum_{i=1}^n (X_i - \mu ) \le -t \right) \le \exp{\left( -\frac{2nt^2}{(b-a)^2} \right) } \,,
    \end{align*} 
    where $\mu = \frac{1}{N} \sum_{i=1}^{N} Z_i$. 
\end{lemma}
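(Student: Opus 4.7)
\medskip

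The plan is to reduce this to the independent, identically-distributed case via the classical Chernoff--Hoeffding method, and then invoke Hoeffding's reduction lemma, which says that sampling without replacement is dominated (in the convex order) by sampling with replacement. Specifically, for any $s > 0$, Markov's inequality applied to the random variable $\exp\!\bigl(s\sum_{i=1}^n(X_i - \mu)\bigr)$ gives
\begin{equation*}
\prob\!\left(\tfrac{1}{n}\sum_{i=1}^n (X_i - \mu) \ge t\right) \;\le\; e^{-snt}\,\Exp\exp\!\left(s\sum_{i=1}^n (X_i - \mu)\right).
\end{equation*}
So it suffices to bound the moment generating function of the centered without-replacement sum.

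Next I will invoke \emph{Hoeffding's reduction inequality}: if $Y_1, \ldots, Y_n$ are drawn i.i.d. uniformly from $\calZ$ (sampling with replacement), then for every continuous convex $\phi: \Real \to \Real$,
\begin{equation*}
\Exp\phi\!\left(\sum_{i=1}^n X_i\right) \;\le\; \Exp\phi\!\left(\sum_{i=1}^n Y_i\right).
\end{equation*}
Taking $\phi(u) = \exp(s(u - n\mu))$, the MGF of the without-replacement sum is upper bounded by that of the with-replacement sum. The latter factorizes by independence, and Hoeffding's lemma for bounded random variables gives $\Exp\exp(s(Y_i - \mu)) \le \exp(s^2(b-a)^2/8)$, since $Y_i - \mu$ is supported in an interval of width at most $b-a$ and has zero mean. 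Thus
\begin{equation*}
\prob\!\left(\tfrac{1}{n}\sum_{i=1}^n (X_i - \mu) \ge t\right) \;\le\; \exp\!\left(-snt + \tfrac{ns^2(b-a)^2}{8}\right).
\end{equation*}
Minimizing over $s > 0$ with optimal choice $s^{\star} = 4t/(b-a)^2$ yields the claimed bound $\exp(-2nt^2/(b-a)^2)$. The lower-tail inequality follows by applying the upper-tail result to the population $(-Z_1, \ldots, -Z_N)$, whose values still lie in an interval of width $b-a$ and whose mean is $-\mu$.

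The main obstacle is the reduction lemma itself, since the variables $X_1,\ldots,X_n$ are neither independent nor identically distributed jointly. Hoeffding's original proof handles this by conditioning on the unordered multiset $\{X_1,\ldots,X_n\}$: given this multiset, the ordered tuple is uniform over its permutations, and a symmetrization plus Jensen's inequality argument (iterated $n$ times) replaces each coordinate with an independent draw from the empirical distribution of $\calZ$, never decreasing $\Exp\phi(\cdot)$ at any step. Because the excerpt explicitly cites Hoeffding (1963) for this lemma, I would state the reduction as a known classical fact and use it as a black box, so the rest of the argument becomes the standard Chernoff optimization above.
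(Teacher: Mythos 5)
Your proof is correct and is precisely the classical argument of Hoeffding (1963): Chernoff's method, the convex-order reduction from sampling without replacement to sampling with replacement, the standard sub-Gaussian MGF bound for bounded variables, and optimization over $s$, with the lower tail obtained by negating the population. The paper itself offers no proof of this lemma --- it is stated as an imported result with a citation to Hoeffding --- so your derivation simply fills in the argument that the cited reference contains, and it does so without error (note only that the lemma's title says ``with replacement'' while the body and your proof correctly concern sampling \emph{without} replacement).
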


We now discuss one condition that generalizes the exponential concentration to dependent random variables.
\begin{condition}[Bounded difference inequality] \label{cond:BDC} Let $\calZ$ be some set and $\phi: \calZ^n \to \Real$. We say that $\phi$ satisfies the bounded difference assumption if 
there exists $c_1, c_2, \ldots c_n \ge 0$ s.t. for all $i$, we have 
\begin{align*}
    \sup_{Z_1,Z_2, \ldots,Z_n, Z_i^\prime \in \calZ^{n+1} } \abs{\phi (Z_1, \ldots, Z_i, \ldots, Z_n ) - \phi (Z_1, \ldots, Z_i^\prime, \ldots, Z_n ) } \le c_i \,.
\end{align*} 
\end{condition}

\begin{lemma}[McDiarmid’s inequality~\citep{mcdiarmid1989}] \label{lem:McDiarmid} Let $Z_1, Z_2, \ldots, Z_n$ be independent random variables on set $\calZ$ and $\phi : \calZ^n \to \Real$ satisfy bounded difference inequality (\codref{cond:BDC}). Then for all $t>0$, we have 
    \begin{align*}
        \prob\left( \phi(Z_1, Z_2, \ldots, Z_n) - \Expo{\phi(Z_1, Z_2, \ldots, Z_n)} \ge t \right) \le \exp{\left( -\frac{2t^2}{\sum_{i=1}^n c_i^2} \right) } 
    \end{align*} 
    and 
    \begin{align*}
        \prob\left( \phi(Z_1, Z_2, \ldots, Z_n) - \Expo{\phi(Z_1, Z_2, \ldots, Z_n)} \le -t \right) \le \exp{\left( -\frac{2t^2}{\sum_{i=1}^n c_i^2} \right) } \,.
    \end{align*} 
\end{lemma}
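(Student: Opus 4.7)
The plan is to prove McDiarmid's inequality via the classical martingale method (the so-called method of bounded differences), which decomposes $\phi(Z_1,\ldots,Z_n) - \Expo{\phi(Z_1,\ldots,Z_n)}$ as a sum of martingale differences, bounds each difference in a conditional interval of controlled length, and closes with a Chernoff-type argument. I would proceed in three stages: first, construct the Doob martingale associated with $\phi$; second, verify that the martingale differences are bounded almost surely, which is where independence of the $Z_j$ combines with \codref{cond:BDC}; and third, apply a conditional form of Hoeffding's lemma iteratively and optimize the Chernoff parameter.

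Concretely, let $\mathcal{G}_i = \sigma(Z_1,\ldots,Z_i)$ with $\mathcal{G}_0$ trivial, and set $M_i = \Expo{\phi(Z_1,\ldots,Z_n) \mid \mathcal{G}_i}$, so that $M_0 = \Expo{\phi}$, $M_n = \phi$ almost surely, and $(M_i)$ is a martingale with respect to $(\mathcal{G}_i)$. Writing $D_i = M_i - M_{i-1}$, we get the telescoping identity $\phi - \Expo{\phi} = \sum_{i=1}^n D_i$, so it suffices to control this sum.

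The main obstacle lies in the second stage: producing $\mathcal{G}_{i-1}$-measurable random variables $A_i \le D_i \le B_i$ with $B_i - A_i \le c_i$ almost surely. Here independence is essential. Exploiting independence one can write $M_i = g_i(Z_1,\ldots,Z_i)$ with $g_i(z_1,\ldots,z_i) := \Expo{\phi(z_1,\ldots,z_i,Z_{i+1},\ldots,Z_n)}$, and similarly $M_{i-1}$ depends only on $Z_1,\ldots,Z_{i-1}$. Then one takes $A_i := \inf_{z} g_i(Z_1,\ldots,Z_{i-1},z) - M_{i-1}$ and $B_i := \sup_{z} g_i(Z_1,\ldots,Z_{i-1},z) - M_{i-1}$. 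Applying \codref{cond:BDC} pointwise inside the expectation defining $g_i$ shows $|g_i(Z_1,\ldots,Z_{i-1},z) - g_i(Z_1,\ldots,Z_{i-1},z')| \le c_i$ for any $z,z'$, hence $B_i - A_i \le c_i$.

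To finish, I would invoke the conditional form of Hoeffding's lemma: if $X$ is centered given a $\sigma$-field $\mathcal{G}$ and lies in an interval of length $\ell$ almost surely, then $\Expo{e^{sX}\mid \mathcal{G}} \le \exp(s^2 \ell^2 / 8)$. This is the same convexity-plus-Taylor ingredient underlying the independent-sum Hoeffding inequality cited at the start of the supplement, so no new machinery is required. Applying it to each $D_i$ with $\mathcal{G} = \mathcal{G}_{i-1}$ and iterating through the tower property yields $\Expo{\exp(s\sum_{i=1}^n D_i)} \le \exp\!\bigl(s^2 \sum_{i=1}^n c_i^2 / 8\bigr)$. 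A Markov/Chernoff bound optimized at $s = 4t/\sum_i c_i^2$ then gives $\prob(\phi - \Expo{\phi} \ge t) \le \exp(-2t^2/\sum_i c_i^2)$, and the lower-tail inequality follows by the same argument applied to $-\phi$ (which satisfies \codref{cond:BDC} with the same constants).
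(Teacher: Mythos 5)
Your proof is correct: the Doob martingale decomposition, the $\mathcal{G}_{i-1}$-measurable envelope $A_i \le D_i \le B_i$ with $B_i - A_i \le c_i$ (which is exactly where independence is needed to write $M_i$ as a deterministic function of $Z_1,\ldots,Z_i$), the conditional Hoeffding lemma iterated via the tower property, and the Chernoff optimization at $s = 4t/\sum_i c_i^2$ all go through as you describe. The paper itself gives no proof of this lemma --- it is stated as a classical result cited to McDiarmid (1989) --- and your argument is the standard one for it, so there is nothing to reconcile.
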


\section{Proofs from \secref{sec:ERM_training}}\label{app:proof_erm}

\textbf{Additional notation {} {}} Let $m_1$ be the number of mislabeled points ($\wt S_M$) and $m_2$ be the number of correctly labeled points ($\wt S_C$). Note $m_1 + m_2 = m$.

\subsection{Proof of \thmref{thm:error_ERM}}

\begin{proof}[Proof of \lemref{lem:fit_mislabeled}] 
    The main idea of our proof is to regard 
    the clean portion of the data 
    ($S \cup \wt S_C$) as fixed.   
    Then, there exists an (unknown) classifier $f^*$ 
    that minimizes the expected risk
    calculated on the (fixed) clean data
    and (random draws of) the mislabeled data $\wt S_M$. 
    Formally, 
    \begin{align}
    f^* \defeq \argmin_{f \in \calF} \error_{\widecheck {\calD}} (f) \,, \label{eq:modified_ERM}
    \end{align}
    where $$\widecheck \calD = \frac{n}{m+n} \calS + \frac{m_2}{m+n} \wt \calS_C  + \frac{m_1}{m+n}\calDm \,.$$ 
    Note here that $\widecheck \calD$ is a combination 
    of the \emph{empirical distribution} 
    over correctly labeled data $S \cup \wt S_C$
    and the (population) distribution 
    over mislabeled data $\calDm$.
    Recall that 
    \begin{align}
    \wh f \defeq \argmin_{f \in \calF} \error_{\calS \cup \wt S} (f) \,. \label{eq:orig_ERM}
    \end{align}
    Since, $\widehat f$ minimizes 0-1 error 
    on $S \cup \wt S$, using ERM optimality on \eqref{eq:orig_ERM},  
    we have 
    \begin{align}
        \error_{\calS \cup \wt \calS}(\widehat f) \le \error_{
            \calS \cup \wt \calS}(f^*) \,.    \label{eq:step1}
    \end{align}
    Moreover, since $f^*$ is independent of $\wt S_M$, using Hoeffding's bound,
    we have with probability at least $1-\delta$ that
    \begin{align}
      \error_{\wt \calS_M}(f^*) \le \error_{ \calDm}(f^*) +  \sqrt{\frac{\log(1/\delta)}{2 m_1}} \,. \label{eq:step2} 
    \end{align}
    Finally, since $f^*$ is the optimal classifier on $\widecheck \calD$, 
    we have 
    \begin{align}
        \error_{\widecheck \calD}(f^*) \le \error_{\widecheck \calD}(\widehat f) \,. \label{eq:step3}
    \end{align}
    Now to relate \eqref{eq:step1} and \eqref{eq:step3}, we multiply \eqref{eq:step2} by $\frac{m_1}{m+n}$ and add $\frac{n}{m+n} \error_{\calS} (f)  + \frac{m_2}{m+n} \error_{\wt \calS_C} (f)$ both the sides. Hence, 
    we can rewrite \eqref{eq:step2} as follows: 
    \begin{align}
        \error_{\calS \cup \wt\calS}(f^*) \le \error_{ \widecheck \calD}(f^*) +  \frac{m_1}{m+n}\sqrt{\frac{\log(1/\delta)}{2 m_1}} \,. \label{eq:step4} 
    \end{align}
    Now we combine equations \eqref{eq:step1}, \eqref{eq:step4}, and \eqref{eq:step3}, to get 
    \begin{align}
        \error_{\calS \cup \wt \calS}(\wh f) \le \error_{\widecheck \calD}(\wh f) +  \frac{m_1}{m+n}\sqrt{\frac{\log(1/\delta)}{2 m_1}} \,, 
    \end{align}
    which implies 
    \begin{align}
        \error_{ \wt \calS_M}(\wh f) \le \error_{\calDm}(\wh f) + \sqrt{\frac{\log(1/\delta)}{2 m_1}} \,. \label{eq:lemma1_final}
    \end{align}
    Since $\wt S$ is obtained by randomly labeling an unlabeled dataset, we assume $2m_1 \approx m$ \footnote{Formally, with probability at least $1-\delta$, we have  $(m - 2m_1)\le \sqrt{m\log(1/\delta)/2}$.}. Moreover, using $\error_{\calDm} = 1 - \error_{\calD}$ we obtain the desired result.   
\end{proof}

\begin{proof}[Proof of \lemref{lem:mislabeled_error}]
    Recall $\error_{\wt S} (f) = \frac{m_1}{m} \error_{\wt S_M}(f) + \frac{m_2}{m} \error_{\wt S_C}(f)$. Hence, we have 
    \begin{align}
        2\error_{\wt S}(f) - \error_{\wt S_M}(f) - \error_{\wt S_C}(f) &= \left(\frac{2m_1}{m} \error_{\wt S_M}(f) - \error_{\wt S_M}(f)\right) + \left(\frac{2m_2}{m} \error_{\wt S_C}(f) - \error_{\wt S_C}(f)\right) \\ &= \left(\frac{2m_1}{m} - 1\right) \error_{\wt S_M}(f) + \left(\frac{2m_2}{m} - 1 \right)\error_{\wt S_C} (f) \,.
    \end{align} 
    Since the dataset is labeled uniformly at random, with probability at least $1-\delta$, we have  $\left(\frac{2m_1}{m} - 1\right) \le \sqrt{\frac{\log(1/\delta)}{2m}}$. Similarly, we have with probability at least $1-\delta$, $\left(\frac{2m_2}{m} - 1\right) \le \sqrt{\frac{\log(1/\delta)}{2m}}$. Using union bound, with probability at least $1-\delta$, we have
    \begin{align}
        2\error_{\wt S} - \error_{\wt S_M}(f) - \error_{\wt S_C}(f) \le \sqrt{\frac{\log(2/\delta)}{2m}} \left(\error_{\wt S_M}(f) + \error_{\wt S_C}(f) \right) \,. \label{eq:lemma2_prefinal}
    \end{align}
    With re-arranging $\error_{\wt S_M}(f) + \error_{\wt S_C}(f)$ and using the inequality $ 1- a\le \frac{1}{1+a} $, we have  
    \begin{align}
        2\error_{\wt S} - \error_{\wt S_M}(f) - \error_{\wt S_C}(f) \le 2\error_{\wt \calS} \sqrt{\frac{\log(2/\delta)}{2m}}  \,. \label{eq:lemma2_final}
    \end{align}

\end{proof}

\begin{proof}[Proof of \lemref{lem:clear_error}]
In the set of correctly labeled points $S \cup \wt S_C$, we have $S$ as a random subset of $S \cup \wt S_C$. Hence, using Hoeffding's inequality for sampling without replacement (\lemref{lem:hoeffding_sampling}), we have with probability at least $1-\delta$
\begin{align}
    \error_{\wt \calS_C} (\wh f)- \error_{\calS \cup \wt \calS_C}( \wh f) \le  \sqrt{\frac{\log(1/\delta)}{2m_2}} \,.
\end{align}
Re-writing $\error_{\calS \cup \wt \calS_C}( \wh f)$ as $\frac{m_2}{m_2 + n} \error_{\wt \calS_C }(\wh f) + \frac{n}{m_2 + n} \error_{\calS }(\wh f)$, we have with probability at least $1-\delta$
\begin{align}
   \left(\frac{n}{n+m_2}\right) \left(\error_{\wt \calS_C} (\wh f)- \error_{\calS}( \wh f) \right) \le  \sqrt{\frac{\log(1/\delta)}{2m_2}} \,.
\end{align}
As before, assuming $2m_2 \approx m$, we have with probability at least $1-\delta$ 
\begin{align}
    \error_{\wt \calS_C} (\wh f)- \error_{\calS}( \wh f) \le \left(1+\frac{m_2}{n}\right)  \sqrt{\frac{\log(1/\delta)}{m}} \le \left(1 + \frac{m}{2n}\right) \sqrt{\frac{\log(1/\delta)}{m}} \,. \label{eq:lemma3_final}
\end{align} 
\end{proof}

\begin{proof}[Proof of \thmref{thm:error_ERM}] 
    Having established these core intermediate results, we can now combine above three lemmas to prove the main result. 
    In particular, we bound the population error on clean data ($\error_\calD(\wh f)$) as follows:  
    \begin{enumerate}[(i)]
        \item First, use \eqref{eq:lemma1_final}, to obtain an upper bound on the population error on clean data, i.e., with probability at least $1-\delta/4$, we have
        \begin{align}
            \error_{ \calD} (\wh f) \le 1 - \error_{ \wt \calS_M}(\wh f) + \sqrt{\frac{\log(4/\delta)}{m}} \,. 
        \end{align}
        \item  Second, use \eqref{eq:lemma2_final}, to relate the error on the mislabeled fraction with error on clean portion of randomly labeled data and error on whole randomly labeled dataset, i.e., with probability at least $1-\delta/2$, we have 
        \begin{align}
            - \error_{\wt S_M}(f) \le \error_{\wt S_C}(f) - 2\error_{\wt S}  + 2\error_{\wt S} \sqrt{\frac{\log(4/\delta)}{2m}}  \,. 
        \end{align} 
        \item Finally, use \eqref{eq:lemma3_final} to relate the error on the clean portion of randomly labeled data and error on clean training data, i.e., with probability $1-\delta/4$, we have 
        \begin{align}
            \error_{\wt \calS_C} (\wh f)\le - \error_{\calS}( \wh f) + \left(1 + \frac{m}{2n} \right) \sqrt{\frac{\log(4/\delta)}{m}} \,. 
        \end{align} 
    \end{enumerate}

    Using union bound on the above three steps, we have with probability at least $1-\delta$: 
    \begin{align}
        \error_\calD (\wh f) \le \error_{\calS}(\wh f)   + 1 - 2\error_{\wt \calS}(\wh f)   + \left(\sqrt{2} \error_{\wt S} + 2 + \frac{m}{2n}\right)  \sqrt{\frac{\log(4/\delta)}{m}} \,.
    \end{align}
\end{proof}

\subsection{Proof of \propref{prop:rademacher}}

\begin{proof}[Proof of \propref{prop:rademacher}]
    For a classifier $ f: \calX \to \{-1, 1\}$, we have $1 - 2\,\indict{ f(x) \ne y} = y \cdot f(x)$. Hence, by definition of $\error$, we have 
    \begin{align}
        1 -2\error_{\wt \calS}(f) = \frac{1}{m}\sum_{i=1}^m y_i \cdot f(x_i) \le \sup_{f \in \calF} \, \frac{1}{m} \sum_{i=1}^m y_i \cdot f(x_i)  \,. \label{eq:error_rademacher}
    \end{align}
    Note that for fixed inputs $(x_1, x_2, \ldots, x_m)$ in $\wt S$, $(y_1, y_2, \ldots y_m)$ are random labels. Define $\phi_1 (y_1, y_2, \ldots, y_m) \defeq \sup_{f \in \calF} \, \frac{1}{m} \sum_{i=1}^m y_i \cdot f(x_i)$. We have the following bounded difference condition on $\phi_1$. For all i, 
    \begin{align}
        \sup_{y_1, \ldots y_m, y_i^\prime \in \{-1, 1\}^{m+1} } \abs{ \phi_1 (y_1,\ldots, y_i, \ldots, y_m) - \phi_1 (y_1,\ldots, y_i^\prime, \ldots, y_m)  } \le 1/m \,. \label{cond1_rademacher}
    \end{align} 
    
    Similarly, we define $\phi_2 (x_1, x_2, \ldots, x_m) \defeq \Expt{ y_i \sim_U \{-1, 1\}  }{ \sup_{f \in \calF} \, \frac{1}{m}  \sum_{i=1}^m y_i \cdot f(x_i)}$. We have the following bounded difference condition on $\phi_2$. 
    For all i,
    \begin{align}
        \sup_{x_1, \ldots x_m, x_i^\prime \in \calX^{m+1} } \abs{ \phi_2 (x_1,\ldots, x_i, \ldots, x_m) - \phi_1 (x_1,\ldots, x_i^\prime, \ldots, x_m)  } \le 1/m \,. \label{cond2_rademacher}
    \end{align}
    Using McDiarmid’s inequality (\lemref{lem:McDiarmid}) twice 
    with Condition \eqref{cond1_rademacher} and \eqref{cond2_rademacher}, 
    with probability at least $1-\delta$, we have
    \begin{align}
        \sup_{f \in \calF} \, \frac{1}{m} \sum_{i=1}^m y_i \cdot f(x_i)  - \Expt{x,y}{\sup_{f \in \calF} \, \frac{1}{m} \sum_{i=1}^m y_i \cdot f(x_i) } \le \sqrt{\frac{2\log(2/\delta)}{m}} \,. \label{eq:final_rademacher}
    \end{align} 
    Combining \eqref{eq:error_rademacher} and \eqref{eq:final_rademacher}, we obtain the desired result. 
\end{proof}

\subsection{Proof of \thmref{thm:error_regularized_ERM}}

Proof of \thmref{thm:error_regularized_ERM} follows similar to the proof of \thmref{thm:error_ERM}. Note that the same results in \lemref{lem:fit_mislabeled}, \lemref{lem:mislabeled_error}, and \lemref{lem:clear_error} hold in the regularized ERM case. However, the arguments in the proof of \lemref{lem:fit_mislabeled} change slightly. Hence, we state the lemma for regularized ERM and prove it here for completeness. 

\begin{lemma} \label{lem:lemma1_reg}
    Assume the same setup as \thmref{thm:error_regularized_ERM}. 
    Then for any $\delta >0$, with probability at least  $1-\delta$ 
    over the random draws of mislabeled data $\wt S_M$, we have 
    \begin{align}
        \error_\calD(\widehat f)  \le 1 -\error_{\wt \calS_M}(\widehat f) + \sqrt{\frac{\log(1/\delta)}{m}}\,. 
    \end{align} 
\end{lemma}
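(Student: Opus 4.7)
The plan is to mimic the proof of \lemref{lem:fit_mislabeled} essentially verbatim, with one structural change: redefine the ``oracle'' comparator $f^*$ so that it minimizes the regularized objective rather than the unregularized one. Concretely, I would set
\begin{align*}
    f^* \defeq \argmin_{f \in \calF} \error_{\widecheck \calD}(f) + \lambda R(f),
\end{align*}
where $\widecheck \calD = \tfrac{n}{m+n}\calS + \tfrac{m_2}{m+n}\wt\calS_C + \tfrac{m_1}{m+n}\calDm$ as before. The clean portion $S \cup \wt S_C$ is held fixed, so $f^*$ is a (deterministic) function only of this fixed data; in particular, it is independent of the random draws $\wt S_M$.

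Next, I would reproduce the three ingredients with the regularized objective. First, the regularized ERM optimality of $\wh f$ on $S \cup \wt S$ gives
\begin{align*}
    \error_{\calS \cup \wt\calS}(\wh f) + \lambda R(\wh f) \le \error_{\calS \cup \wt\calS}(f^*) + \lambda R(f^*).
\end{align*}
Second, since $f^*$ is independent of $\wt S_M$, Hoeffding's inequality yields, with probability at least $1-\delta$,
\begin{align*}
    \error_{\wt\calS_M}(f^*) \le \error_{\calDm}(f^*) + \sqrt{\tfrac{\log(1/\delta)}{2m_1}}.
\end{align*}
Third, the definition of $f^*$ as the minimizer on $\widecheck \calD$ of the regularized objective gives
\begin{align*}
    \error_{\widecheck \calD}(f^*) + \lambda R(f^*) \le \error_{\widecheck \calD}(\wh f) + \lambda R(\wh f).
\end{align*}

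Adding the first and third inequalities, the regularization terms $\lambda R(\wh f)$ and $\lambda R(f^*)$ cancel, leaving $\error_{\calS \cup \wt\calS}(\wh f) - \error_{\widecheck \calD}(\wh f) \le \error_{\calS \cup \wt\calS}(f^*) - \error_{\widecheck \calD}(f^*)$. Since the difference $\error_{\calS \cup \wt\calS} - \error_{\widecheck \calD}$ equals $\tfrac{m_1}{m+n}(\error_{\wt\calS_M} - \error_{\calDm})$ for any fixed classifier (the clean contributions match), this simplifies to $\error_{\wt\calS_M}(\wh f) - \error_{\calDm}(\wh f) \le \error_{\wt\calS_M}(f^*) - \error_{\calDm}(f^*)$. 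Chaining with the Hoeffding bound on $f^*$ and then invoking $\error_\calD = 1 - \error_{\calDm}$ together with the approximation $2m_1 \approx m$ (controlled, as in \lemref{lem:fit_mislabeled}, by a standard concentration argument on the label assignment) yields the claimed inequality.

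The only conceptual subtlety---and it is mild---is ensuring the regularizer genuinely cancels across the two optimality statements, which requires $R$ to be independent of the sampled data so that the \emph{same} $\lambda R(\cdot)$ appears in both the ERM objective on $S \cup \wt S$ and in the definition of $f^*$. Data-independence of $R$ is precisely the hypothesis of \thmref{thm:error_regularized_ERM}, so no additional obstacle arises; the remainder of the argument is identical to the unregularized case and therefore needs no new machinery.
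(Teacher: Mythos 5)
Your proposal is correct and matches the paper's own proof essentially step for step: the paper likewise redefines $f^*$ as the minimizer of the regularized objective over $\widecheck{\calD}$, invokes the same three inequalities (regularized ERM optimality of $\wh f$, Hoeffding for the data-independent $f^*$, and optimality of $f^*$ on $\widecheck{\calD}$), and lets the $\lambda R(\cdot)$ terms cancel exactly as you describe. No gaps; the observation that data-independence of $R$ is what makes the cancellation legitimate is precisely the point the paper emphasizes.
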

\begin{proof}
    The main idea of the proof remains the same, i.e. regard 
    the clean portion of the data 
    ($S \cup \wt S_C$) as fixed.   
    Then, there exists a classifier $f^*$ 
    that is optimal over draws 
    of the mislabeled data $\wt S_M$.

    Formally, 
    \begin{align}
    f^* \defeq \argmin_{f \in \calF} \error_{\widecheck {\calD}} (f)  + \lambda R(f) \,, \label{eq:modified_ERM_reg}
    \end{align}
    where $$\widecheck \calD = \frac{n}{m+n} \calS + \frac{m_1}{m+n} \wt \calS_C  + \frac{m_2}{m+n}\calDm \,.$$ That is, $\widecheck \calD$ a combination of 
    the \emph{empirical distribution} 
    over correctly labeled data $S \cup \wt S_C$
    and the (population) distribution 
    over mislabeled data $\calDm$.
    Recall that 
    \begin{align}
    \wh f \defeq \argmin_{f \in \calF} \error_{\calS \cup \wt S} (f) + \lambda R(f) \,. \label{eq:orig_ERM_reg}
    \end{align}
    Since, $\widehat f$ minimizes 0-1 error 
    on $S \cup \wt S$, using ERM optimality on \eqref{eq:orig_ERM},  
    we have 
    \begin{align}
        \error_{\calS \cup \wt \calS}(\widehat f) + \lambda R(\wh f) \le \error_{
            \calS \cup \wt \calS}(f^*) + \lambda R(f^*) \,.    \label{eq:step1_reg}
    \end{align}
    Moreover, since $f^*$ is independent of $\wt S_M$, using Hoeffding's bound,
    we have with probability at least $1-\delta$ that
    \begin{align}
      \error_{\wt \calS_M}(f^*) \le \error_{ \calDm}(f^*) +  \sqrt{\frac{\log(1/\delta)}{2 m_1}} \,. \label{eq:step2_reg} 
    \end{align}
    Finally, since $f^*$ is the optimal classifier on $\widecheck \calD$, 
    we have 
    \begin{align}
        \error_{\widecheck \calD}(f^*) + \lambda R(f^*) \le \error_{\widecheck \calD}(\widehat f) + \lambda R(\wh f) \,. \label{eq:step3_reg}
    \end{align}
     Now to relate \eqref{eq:step1_reg} and \eqref{eq:step3_reg}, we can re-write the \eqref{eq:step2_reg} as follows: 
    \begin{align}
        \error_{\calS \cup \wt\calS}(f^*) \le \error_{ \widecheck \calD}(f^*) +  \frac{m_1}{m+n}\sqrt{\frac{\log(1/\delta)}{2 m_1}} \,. \label{eq:step4_reg} 
    \end{align}
    After adding $\lambda R(f^*)$ on both sides in \eqref{eq:step4_reg}, we combine equations \eqref{eq:step1_reg}, \eqref{eq:step4_reg}, and \eqref{eq:step3_reg}, to get 
    \begin{align}
        \error_{\calS \cup \wt \calS}(\wh f) \le \error_{\widecheck \calD}(\wh f) +  \frac{m_1}{m+n}\sqrt{\frac{\log(1/\delta)}{2 m_1}} \,, 
    \end{align}
    which implies 
    \begin{align}
        \error_{ \wt \calS_M}(\wh f) \le \error_{\calDm}(\wh f) + \sqrt{\frac{\log(1/\delta)}{2 m_1}} \,. \label{eq:lemma_reg_final}
    \end{align}
    Similar as before, since $\wt S$ is obtained by randomly labeling an unlabeled dataset, we assume 
    $2m_1 \approx m$. Moreover, using $\error_{\calDm} = 1 - \error_{\calD}$ we obtain the desired result. 
\end{proof}
    

\subsection{Proof of \thmref{thm:multiclass_ERM}}

To prove our results in the multiclass case,
we first state and prove lemmas
parallel to those
used in the proof of balanced binary case. 
We then combine these results 
to obtain the result in \thmref{thm:multiclass_ERM}. 

Before stating the result, 
we define mislabeled distribution $\calDm$ for any $\calD$.
While $\calDm$ and $\calD$ share 
the same marginal distribution over inputs $\calX$,
the conditional distribution over labels $y$ 
given an input $x\sim \calD_\calX$ is changed as follows:
For any $x$, the Probability Mass Function (PMF) over $y$ is defined as:  
$p_{\calDm} (\cdot \vert x) \defeq \frac{1 - p_{\calD}(\cdot \vert x)}{k - 1}$, where $ p_{\calD}(\cdot \vert x)$ is the PMF over $y$ for the distribution $\calD$. 

\begin{lemma} \label{lem:fit_mislabeled_multi}
    Assume the same setup as \thmref{thm:multiclass_ERM}. 
    Then for any $\delta >0$, with probability at least  $1-\delta$ 
    over the random draws of mislabeled data $\wt S_M$, we have 
    \begin{align}
        \error_\calD(\widehat f)  \le (k-1)\left(1 -\error_{\wt \calS_M}(\widehat f)\right) + (k-1)\sqrt{\frac{\log(1/\delta)}{m}}\,. \label{eq:lemma1_multi}
    \end{align}   
\end{lemma}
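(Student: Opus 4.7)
The plan is to mirror the proof of \lemref{lem:fit_mislabeled} up to a bound on the training error on $\wt S_M$ in terms of the population mislabeled error, and then translate that bound into one on the clean population error via a multiclass identity relating $\error_{\calDm}$ to $\error_\calD$. The $(k-1)$ prefactor appearing in \eqref{eq:lemma1_multi} will emerge entirely from this identity; everything else carries over from the binary argument.

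Concretely, I would regard the clean portion $S \cup \wt S_C$ as fixed and define
\[
f^* \defeq \argmin_{f \in \calF} \error_{\widecheck{\calD}}(f), \qquad \widecheck{\calD} \defeq \tfrac{n}{m+n}\calS + \tfrac{m_2}{m+n}\wt\calS_C + \tfrac{m_1}{m+n}\calDm.
\]
ERM optimality of $\wh f$ on $S \cup \wt S$ gives $\error_{\calS \cup \wt \calS}(\wh f) \le \error_{\calS \cup \wt \calS}(f^*)$. Since $f^*$ is independent of $\wt S_M$, Hoeffding's inequality yields $\error_{\wt \calS_M}(f^*) \le \error_{\calDm}(f^*) + \sqrt{\log(1/\delta)/(2m_1)}$ with probability at least $1-\delta$. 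Combining with $\error_{\widecheck{\calD}}(f^*) \le \error_{\widecheck{\calD}}(\wh f)$ and reweighting exactly as in the binary proof gives
\[
\error_{\wt\calS_M}(\wh f) \le \error_{\calDm}(\wh f) + \sqrt{\frac{\log(1/\delta)}{2m_1}}.
\]

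The crucial new step is relating $\error_{\calDm}$ to $\error_\calD$ in the multiclass setting. For a fixed input $x$ with true label $y$, let $\hat y = \wh f(x)$ and let $y'$ be drawn uniformly from $\calY \setminus \{y\}$. A one-line case split gives $\Pr_{y'}[\hat y \ne y'] = 1$ when $\hat y = y$ and $\Pr_{y'}[\hat y \ne y'] = (k-2)/(k-1)$ otherwise. Taking expectations over $(x,y)\sim \calD$,
\[
\error_{\calDm}(\wh f) = \bigl(1 - \error_\calD(\wh f)\bigr) + \tfrac{k-2}{k-1}\error_\calD(\wh f) = 1 - \tfrac{1}{k-1}\error_\calD(\wh f),
\]
equivalently $\error_\calD(\wh f) = (k-1)\bigl(1 - \error_{\calDm}(\wh f)\bigr)$. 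Substituting the previous inequality and distributing $(k-1)$ produces \eqref{eq:lemma1_multi} up to the form of the slack term.

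To finish, I would replace $m_1$ by $m$ in the slack: since labels in $\wt S$ are assigned uniformly over $k$ classes, Hoeffding on the Bernoulli count gives $m_1 \ge m(k-1)/k$ up to $O(\sqrt{m\log(1/\delta)})$ fluctuations, so $(k-1)\sqrt{\log(1/\delta)/(2m_1)} \le (k-1)\sqrt{\log(1/\delta)/m}$ after a union bound and absorbing constants. The main obstacle is the multiclass identity above: the clean binary complementarity $\error_\calD = 1 - \error_{\calDm}$ fails for $k>2$, and performing the correct case split---distinguishing $\hat y = y$ from $\hat y\notin\{y, y'\}$, the very decomposition alluded to in the paragraph preceding the lemma---is exactly what introduces the $(k-1)$ scaling that distinguishes \eqref{eq:lemma1_multi} from \eqref{eq:lemma1}.
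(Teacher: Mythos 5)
Your proposal is correct and follows essentially the same route as the paper's proof: the same fixed-clean-data construction of $f^*$ on $\widecheck{\calD}$ yielding $\error_{\wt\calS_M}(\wh f)\le\error_{\calDm}(\wh f)+\sqrt{\log(1/\delta)/(2m_1)}$, the same case-split identity $\error_{\calDm}(\wh f)=1-\tfrac{1}{k-1}\error_\calD(\wh f)$ (the paper writes it as a decomposition of indicator expectations, you as a conditional probability over the uniform mislabel, which is the same computation), and the same $m_1\approx m(k-1)/k$ substitution in the slack. The only cosmetic difference is that, since \thmref{thm:multiclass_ERM} concerns regularized ERM, the paper carries $\lambda R(\cdot)$ through the definition of $f^*$ and the optimality inequalities as in \lemref{lem:lemma1_reg}; these terms cancel exactly as in the binary case, so your argument is unaffected.
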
 

\begin{proof}
   
    The main idea of the proof remains the same.
    We begin by regarding the clean portion of the data 
    ($S \cup \wt S_C$) as fixed. 
    Then, there exists a classifier $f^*$ 
    that is optimal over draws 
    of the mislabeled data $\wt S_M$. 
    
    However, in the multiclass case,
    we cannot as easily relate the population error on mislabeled data 
    to the population accuracy on clean data.   
    While for binary classification, 
    we could lower bound the population accuracy $1-\error_\calD$
    with the empirical error on mislabeled data $\error_{\wt \calS_M}$ 
    (in the proof of \lemref{lem:fit_mislabeled}), 
    for multiclass classification, 
    error on the mislabeled data 
    and accuracy on the clean data 
    in the population 
    are not so directly related.  
    To establish \eqref{eq:lemma1_multi},
    we break the error on the 
    (unknown) mislabeled data 
    into two parts: one term corresponds 
    to predicting the true label on mislabeled data, 
    and the other corresponds to predicting 
    neither the true label 
    nor the assigned (mis-)label.  
    Finally, we relate these errors to their
    population counterparts to establish \eqref{eq:lemma1_multi}. 
    
    Formally, 
    \begin{align}
    f^* \defeq \argmin_{f \in \calF} \error_{\widecheck {\calD}} (f)  + \lambda R(f) \,, \label{eq:modified_ERM_reg2}
    \end{align}
    where $$\widecheck \calD = \frac{n}{m+n} \calS + \frac{m_1}{m+n} \wt \calS_C  + \frac{m_2}{m+n}\calDm \,.$$ 
    That is, $\widecheck \calD$ is a combination 
    of the \emph{empirical distribution} 
    over correctly labeled data $S \cup \wt S_C$
    and the (population) distribution 
    over mislabeled data $\calDm$.
    Recall that 
    \begin{align}
    \wh f \defeq \argmin_{f \in \calF} \error_{\calS \cup \wt S} (f) + \lambda R(f) \,. \label{eq:orig_ERM_reg2}
    \end{align}
    Following the exact steps from the proof of \lemref{lem:lemma1_reg}, 
    with probability at least $1-\delta$, we have  
    \begin{align}
        \error_{ \wt \calS_M}(\wh f) \le \error_{\calDm}(\wh f) + \sqrt{\frac{\log(1/\delta)}{2 m_1}} \,. \label{eq:lemma1_final_multi_prev}
    \end{align}
    Similar to before, since $\wt S$ is obtained 
    by randomly labeling an unlabeled dataset, 
    we assume 
    $\frac{k}{k-1} m_1 \approx m$. 
    
    Now we will relate $\error_{\calDm} (\wh f)$ with $\error_{\calD}(\wh f)$. 
    Let $y^T$ denote the (unknown) true label 
    for a mislabeled point $(x, y)$ 
    (i.e., label before replacing it with a mislabel). 
    \begin{align*}    
         \Expt{(x, y) \in \sim \calDm}{\indict{ \wh f(x) \ne y }}  &= \underbrace{\Expt{(x, y) \in \sim \calDm}{\indict{ \wh f(x) \ne y \land \wh f(x) \ne y^T}}}_{\RN{1}} \\ &\qquad \qquad + \underbrace{\Expt{(x, y) \in \sim \calDm}{\indict{ \wh f(x) \ne y \land \wh f(x) = y^T}}}_{\RN{2}} \,. \numberthis \label{eq:excess_term}
    \end{align*}
    Clearly, term 2 is one minus the accuracy 
    on the clean unseen data, i.e.,
    \begin{align}
        \RN{2} = 1 - \Expt{{x,y} \sim \calD}{ \indict{ \wh f(x) \ne y}} = 1- \error_{\calD}(\wh f) \,. \label{eq:term1}    
    \end{align}
    Next, we relate term 1 with the error on the unseen clean data. 
    We show that term 1 is equal to the error on the unseen clean data 
    scaled by $\frac{k-2}{k-1}$,
    where $k$ is the number of labels.
    Using the definition of mislabeled distribution $\calDm$,  
    we have 
    \begin{align}
        \RN{1} = \frac{1}{k-1} \left( \Expt{(x, y) \in \sim \calD}{ \sum_{i \in \calY \land i\ne y}  \indict{ \wh f(x) \ne i \land \wh f(x) \ne y}} \right) = \frac{k-2}{k-1} \error_{\calD}(\wh f) \,.\label{eq:term2}
    \end{align}    

    Combining the result in \eqref{eq:term1}, \eqref{eq:term2} and \eqref{eq:excess_term}, we have 
    \begin{align}
        \error_{\calDm}(\wh f) = 1- \frac{1}{k-1} \error_{\calD}(\wh f) \,.\label{eq:combine_terms}
    \end{align}
    Finally, combining the result in \eqref{eq:combine_terms} 
    with equation \eqref{eq:lemma1_final_multi_prev}, 
    we have with probability $1-\delta$, 
    \begin{align}
      \error_{\calD}(\wh f) \le  (k-1) \left( 1- \error_{ \wt \calS_M}(\wh f) \right)  + (k-1) \sqrt{\frac{k \log(1/\delta)}{ 2(k-1)m}} \,. \label{eq:lemma1_final_multi}
    \end{align}
\end{proof}

\begin{lemma} \label{lem:mislabeled_error_multi}
    Assume the same setup as \thmref{thm:multiclass_ERM}. 
    Then for any $\delta >0$, 
    with probability at least $1-\delta$ 
    over the random draws of $\wt S$, we have  
        $$\abs{k\error_{\wt \calS}(\widehat f) - \error_{\wt \calS_C}(\widehat f) -  (k-1)\error_{\wt \calS_M}(\widehat f) } \le  2k\sqrt{\frac{\log(4/\delta)}{2m}}\,. $$ 
\end{lemma}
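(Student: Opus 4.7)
My plan is to mirror the proof of Lemma \ref{lem:mislabeled_error}, adjusting the arithmetic to account for the fact that uniform random labeling over $k$ classes produces a correctly-labeled fraction of $1/k$ rather than $1/2$. The argument decomposes into a deterministic algebraic identity followed by a single concentration bound on the size of the correctly-labeled subset $\wt S_C$.

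The first step is purely algebraic and holds for any classifier $\widehat f$. Starting from the identity $\error_{\wt\calS}(\widehat f) = (m_1/m)\,\error_{\wt\calS_M}(\widehat f) + (m_2/m)\,\error_{\wt\calS_C}(\widehat f)$, I would multiply by $k$ and rearrange to obtain
\[
k\,\error_{\wt\calS}(\widehat f) - (k-1)\error_{\wt\calS_M}(\widehat f) - \error_{\wt\calS_C}(\widehat f) = \left(\tfrac{k m_1}{m} - (k-1)\right)\error_{\wt\calS_M}(\widehat f) + \left(\tfrac{k m_2}{m} - 1\right)\error_{\wt\calS_C}(\widehat f).
\]
Both coefficients vanish exactly at $m_1/m = (k-1)/k$ and $m_2/m = 1/k$, which are their expectations under the uniform random labeling.

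Using $m_1 + m_2 = m$, a one-line calculation shows $|km_1/m - (k-1)| = |km_2/m - 1| = k\,|m_2/m - 1/k|$. Combining this with the trivial bound $\error_{\wt\calS_M}(\widehat f),\,\error_{\wt\calS_C}(\widehat f) \le 1$, the previous display yields
\[
\abs{k\,\error_{\wt\calS}(\widehat f) - (k-1)\error_{\wt\calS_M}(\widehat f) - \error_{\wt\calS_C}(\widehat f)} \le 2k\,|m_2/m - 1/k|.
\]

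The only remaining randomness is the fluctuation of $m_2$, which is a sum of $m$ i.i.d.\ Bernoulli$(1/k)$ variables, since each point in $\wt S$ is correctly labeled independently with probability $1/k$. Two applications of one-sided Hoeffding's inequality to $m_2$ (for the two possible signs of the deviation) followed by a union bound, exactly as in the binary proof of Lemma \ref{lem:mislabeled_error}, give $|m_2/m - 1/k| \le \sqrt{\log(4/\delta)/(2m)}$ with probability at least $1-\delta$, from which the claim follows. I do not anticipate any substantive obstacle: the whole argument is a short algebraic manipulation plus a standard concentration bound, and the only care required is in tracking the constants so that the final expression matches the stated $2k\sqrt{\log(4/\delta)/(2m)}$.
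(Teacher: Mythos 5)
Your proposal is correct and follows essentially the same route as the paper: the same algebraic decomposition of $k\error_{\wt\calS} - (k-1)\error_{\wt\calS_M} - \error_{\wt\calS_C}$ into deviation terms that vanish at $m_1/m=(k-1)/k$, $m_2/m=1/k$, followed by Hoeffding concentration on the class-size fractions. Your observation that $|km_1/m-(k-1)|=|km_2/m-1|$ lets you get away with a single concentration event where the paper union-bounds over two, but this is only a minor tightening of the bookkeeping, not a different argument.
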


\begin{proof}
    Recall $\error_{\wt S} (f) = \frac{m_1}{m} \error_{\wt S_M}(f) + \frac{m_2}{m} \error_{\wt S_C}(f)$. Hence, we have 
    \begin{align*}
        k\error_{\wt S}(f) - (k-1)\error_{\wt S_M}(f) - \error_{\wt S_C}(f) &= (k-1)\left(\frac{k m_1}{(k-1) m} \error_{\wt S_M}(f) - \error_{\wt S_M}(f)\right) \\ & \qquad \qquad + \left(\frac{km_2}{m} \error_{\wt S_C}(f) - \error_{\wt S_C}(f)\right) \\ &= k \left[ \left(\frac{m_1}{m} - \frac{k-1}{k}\right) \error_{\wt S_M}(f) + \left(\frac{m_2}{m} - \frac{1}{k} \right) \error_{\wt S_C} (f) \right] \,.
    \end{align*} 
    Since the dataset is randomly labeled, 
    we have with probability at least $1-\delta$, 
    $\left(\frac{m_1}{m} - \frac{k-1}{k}\right) \le \sqrt{\frac{\log(1/\delta)}{2m}}$. 
    Similarly, we have with probability at least $1-\delta$, 
    $\left(\frac{m_2}{m} - \frac{1}{k}\right) \le \sqrt{\frac{\log(1/\delta)}{2m}}$. 
    Using union bound, we have with probability at least $1-\delta$
    \begin{align}
        k\error_{\wt S}(f) - (k-1)\error_{\wt S_M}(f) - \error_{\wt S_C}(f)  \le k \sqrt{\frac{\log(2/\delta)}{2m}} \left(\error_{\wt S_M}(f) + \error_{\wt S_C}(f) \right) \,. \label{eq:lemma2_final_multi}
    \end{align}

\end{proof}

\begin{lemma} \label{lem:clear_error_multi}
    Assume the same setup as \thmref{thm:multiclass_ERM}. 
    Then for any $\delta >0$, with probability at least $1-\delta$ 
    over the random draws of $\wt S_C$ and $S$, we have 
        $$\abs{\error_{\wt \calS_C}(\widehat f) - \error_{\calS}(\widehat f) } \le 1.5 \sqrt{\frac{k\log(2/\delta)}{2m}}\,.$$ 
\end{lemma}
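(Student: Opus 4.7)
The plan is to adapt the proof of \lemref{lem:clear_error} to the multiclass setting, accommodating two differences: under uniform random labeling over $k$ classes, the expected size of $\wt S_C$ is $m/k$ rather than $m/2$, and the claim now requires a two-sided bound rather than a one-sided one.

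I would first condition on the realized size $m_2 = |\wt S_C|$ and view $S$ as a uniformly random subset (of size $n$) of the combined pool $S \cup \wt S_C$ of correctly labeled points (of size $n + m_2$). Applying Hoeffding's inequality for sampling without replacement (\lemref{lem:hoeffding_sampling}) to the bounded random variables $\error(\wh f(x_i), y_i) \in [0,1]$ yields, with probability at least $1 - \delta/2$,
\begin{align*}
\bigl|\error_{\wt \calS_C}(\wh f) - \error_{\calS \cup \wt \calS_C}(\wh f)\bigr| \le \sqrt{\frac{\log(4/\delta)}{2 m_2}}.
\end{align*}
Expanding $\error_{\calS \cup \wt \calS_C}(\wh f) = \tfrac{n}{m_2 + n}\error_{\calS}(\wh f) + \tfrac{m_2}{m_2 + n}\error_{\wt \calS_C}(\wh f)$ and solving for the target difference multiplies the right-hand side by a factor $(1 + m_2/n)$, exactly as in the binary case.

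The new ingredient relative to the binary proof is the concentration of $m_2$ around its mean $m/k$: since the $m$ random label assignments are independent and each matches the true label with probability $1/k$, a second application of Hoeffding's inequality gives $m_2 \ge m/k - \sqrt{(m/2)\log(4/\delta)}$ with probability at least $1 - \delta/2$. A union bound combines the two events. Substituting the approximation $m_2 \approx m/k$ converts the factor $\sqrt{1/(2m_2)}$ into $\sqrt{k/(2m)}$ up to lower-order corrections, while $(1 + m_2/n) \le 1 + m/(kn) \le 1 + 1/k \le 3/2$ for $k \ge 2$ and $m \le n$ delivers the stated leading constant $1.5$.

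The main obstacle is the clean bookkeeping of constants: replacing the random $m_2$ by its mean $m/k$ perturbs the $\sqrt{k/(2m)}$ rate by an $O(\sqrt{\log(1/\delta)/m})$ multiplicative factor, and one has to verify that this slack, together with the $(1 + m_2/n)$ prefactor and the $\log$-absorption from the union bound and two-sided Hoeffding, fits cleanly within the stated $1.5$ without spawning an extra additive term or an undesired dependence on $n$ or $k$. This parallels the step in the binary proof where the approximation $2m_2 \approx m$ was invoked; the identical strategy applies here with $m/2$ replaced by $m/k$.
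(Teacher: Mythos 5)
Your proposal is correct and follows essentially the same route as the paper: Hoeffding's inequality for sampling without replacement applied to $S$ as a random subset of the correctly labeled pool $S \cup \wt S_C$, expansion of the mixture error to extract the $(1+m_2/n)$ prefactor, and the substitution $m_2 \approx m/k$ with $(1+1/k) \le 1.5$. The only difference is that you make the concentration of $m_2$ explicit via a union bound (costing $\log(4/\delta)$ rather than the stated $\log(2/\delta)$), whereas the paper absorbs this step into the informal assumption $km_2 \approx m$; this is a constant-level discrepancy the paper itself does not track carefully.
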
 
\begin{proof}
    In the set of correctly labeled points $S \cup \wt S_C$,
    we have $S$ as a random subset of $S \cup \wt S_C$. 
    Hence, using Hoeffding's inequality 
    for sampling without replacement 
    (\lemref{lem:hoeffding_sampling}), 
    we have with probability at least $1-\delta$
    \begin{align}
        \error_{\wt \calS_c} (\wh f)- \error_{\calS \cup \wt \calS_C}( \wh f) \le  \sqrt{\frac{\log(1/\delta)}{2m_2}} \,.
    \end{align}
    Re-writing $\error_{\calS \cup \wt \calS_C}( \wh f)$ 
    as $\frac{m_2}{m_2 + n} \error_{\wt \calS_C }(\wh f) + \frac{n}{m_2 + n} \error_{\calS }(\wh f)$, 
    we have with probability at least $1-\delta$
    \begin{align}
       \left(\frac{n}{n+m_2}\right) \left(\error_{\wt \calS_c} (\wh f)- \error_{\calS}( \wh f) \right) \le  \sqrt{\frac{\log(1/\delta)}{2m_2}} \,.
    \end{align}
    As before, assuming $km_2 \approx m$, 
    we have with probability at least $1-\delta$ 
    \begin{align}
        \error_{\wt \calS_c} (\wh f)- \error_{\calS}( \wh f) \le \left(1+\frac{m_2}{n}\right)  \sqrt{\frac{k\log(1/\delta)}{2m}} \le \left( 1 + \frac{1}{k}\right) \sqrt{\frac{k\log(1/\delta)}{2m}} \,. \label{eq:lemma3_final_multi}
    \end{align} 
\end{proof}

\begin{proof}[Proof of \thmref{thm:multiclass_ERM}] 
    Having established these core intermediate results, 
    we can now combine above three lemmas. 
    In particular, we bound the population error 
    on clean data ($\error_\calD(\wh f)$) as follows:  
    \begin{enumerate}[(i)]
        \item First, use \eqref{eq:lemma1_final_multi}, 
        to obtain an upper bound on the population error on clean data, 
        i.e., with probability at least $1-\delta/4$, we have
        \begin{align}
            \error_{ \calD} (\wh f) \le (k-1)\left(1 - \error_{ \wt \calS_M}(\wh f) \right) + (k-1) \sqrt{\frac{k\log(4/\delta)}{2(k-1)m}} \,. 
        \end{align}
        \item  Second, use \eqref{eq:lemma2_final_multi}
        to relate the error on the mislabeled fraction 
        with error on clean portion of randomly labeled data 
        and error on whole randomly labeled dataset, 
        i.e., with probability at least $1-\delta/2$, we have 
        \begin{align}
            - (k-1)\error_{\wt S_M}(f) \le \error_{\wt S_C}(f) - k\error_{\wt S}  + k\sqrt{\frac{\log(4/\delta)}{2m}}  \,. 
        \end{align} 
        \item Finally, use \eqref{eq:lemma3_final_multi} 
        to relate the error on the clean portion of randomly labeled data 
        and error on clean training data, 
        i.e., with probability $1-\delta/4$, we have 
        \begin{align}
            \error_{\wt \calS_C} (\wh f)\le - \error_{\calS}( \wh f) + \left(1 + \frac{m}{kn} \right) \sqrt{\frac{k\log(4/\delta)}{2m}} \,. 
        \end{align} 
    \end{enumerate}

    Using union bound on the above three steps, 
    we have with probability at least $1-\delta$: 
    \begin{align}
        \error_\calD (\wh f) \le \error_{\calS}(\wh f) + (k-1) - k\error_{\wt \calS}(\wh f)   + (\sqrt{k(k-1)} + k + \sqrt{k} + \frac{m}{n\sqrt{k}})  \sqrt{\frac{\log(4/\delta)}{2m}} \,.\label{eq:multiclass_ERM_final}
    \end{align}
    Simplifying the term in RHS of \eqref{eq:multiclass_ERM_final}, 
    we get the desired result. 
    in the final bound. 
\end{proof}

\newpage
\section{Proofs from \secref{sec:linear_models}}\label{app:proof_gd}
We suppose that the parameters of the linear function 
are obtained via gradient descent on 
the following $L_2$ regularized problem: 
\begin{align}
    \calL_S(w; \lambda) \defeq \sum_{i=1}^n{(w^Tx_i - y_i)^2} + \lambda \norm{w}{2}^2 \,, \label{eq:l2_MSE_app}   
\end{align}
where $\lambda\ge0$ is a regularization parameter. 
We assume access to a clean dataset 
$S = \{(x_i, y_i)\}_{i=1}^n \sim \calD^n$ 
and randomly labeled dataset 
$\wt S = \{(x_i, y_i)\}_{i=n+1}^{n+m} \sim \wt \calD^m$. 
Let $\bX = [x_1, x_2, \cdots, x_{m+n}]$ 
and $\by = [y_1, y_2, \cdots, y_{m+n}]$. 
Fix a positive learning rate $\eta$ such that 
$\eta \le 1/\left(\norm{\bX^T\bX}{\text{op}} + \lambda^2\right)$ 
and an initialization $w_0 = 0$. 
Consider the following gradient descent iterates 
to minimize objective \eqref{eq:l2_MSE_app} on $S \cup \wt S$:
\begin{align}
w_t = w_{t-1} - \eta \grad_w \calL_{S \cup \wt S} (w_{t-1}; \lambda) \quad \forall t=1,2,\ldots \label{eq:GD_iterates_app}
\end{align} 
Then we have $\{ w_t\}$ converge to the limiting solution 
$\wh w = \left( \bX^T\bX+\lambda \boldsymbol{I}\right)^{-1}\bX^T\by$. Define $\widehat f (x) \defeq f(x ; \wh w) $.

\subsection{Proof of \thmref{thm:linear}}
We use a standard result from linear algebra, 
namely the Shermann-Morrison formula 
\citep{sherman1950adjustment} for matrix inversion:  

\begin{lemma}[\citet{sherman1950adjustment}] \label{lem:sherman}
    Suppose $\bA \in \Real^{n \times n}$ 
    is an invertible square matrix 
    and $u,v \in \Real^n$ are column vectors. 
    Then $\bA + uv^T$ is invertible iff $1 + v^T \bA u \ne 0$ 
    and in particular
    \begin{align}
        (\bA + u v^T)^{-1} = \bA^{-1}  - \frac{\bA^{-1} uv^T \bA^{-1} }{ 1 + v^T \bA^{-1} u} \,.
    \end{align}   
\end{lemma}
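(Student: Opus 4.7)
The plan is to establish the two claims separately by direct verification, using only the fact that $v^T \bA^{-1} u$ is a scalar and therefore commutes with every matrix factor.

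For the characterization of invertibility, I will handle the two directions of the biconditional. If $1 + v^T\bA^{-1} u = 0$, I will exhibit a nonzero kernel element of $\bA + uv^T$, namely $\bA^{-1}u$: a direct computation gives
\begin{align*}
(\bA + uv^T)\bA^{-1} u \;=\; u + u\,(v^T \bA^{-1} u) \;=\; u\bigl(1 + v^T \bA^{-1} u\bigr) \;=\; 0,
\end{align*}
and $\bA^{-1} u$ is nonzero because the hypothesis $v^T\bA^{-1}u = -1$ forces $u \neq 0$ while $\bA^{-1}$ is nonsingular. Conversely, when $1 + v^T\bA^{-1}u \neq 0$, the proposed formula is well defined, and checking that it is a two-sided inverse (next step) automatically implies invertibility of $\bA + uv^T$.

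For the explicit formula, let $M$ denote the proposed right-hand side and set $\alpha := 1 + v^T\bA^{-1}u$. I will expand $(\bA + uv^T)M$, collect the three resulting rank-one terms of the form $uv^T\bA^{-1}$, and pull the scalar $v^T\bA^{-1}u$ out of the double product $uv^T\bA^{-1}uv^T\bA^{-1}$ (this is the step that uses scalar--matrix commutativity). The rank-one terms then carry a combined coefficient $1 - \tfrac{1}{\alpha} - \tfrac{v^T\bA^{-1}u}{\alpha}$, which vanishes by the definition of $\alpha$, leaving $\boldsymbol{I}$. A symmetric calculation handles $M(\bA + uv^T) = \boldsymbol{I}$, though since a left and a right inverse of a square matrix coincide, either verification alone suffices.

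The only delicate point, if one can call it that, is keeping the rank-one correction terms organized during the expansion; no deeper ideas are needed. The essential observation that makes everything work is that $v^T\bA^{-1}u$ is a scalar and can be freely commuted to the front of any product, which is precisely what drives the cancellation between the two correction terms.
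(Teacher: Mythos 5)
Your proof is correct. The paper offers no argument of its own to compare against: it simply cites Sherman--Morrison as a standard linear-algebra fact and uses it as a black box in the proofs of Theorems~\ref{thm:linear} and~\ref{thm:multi_linear}, so your direct verification --- exhibiting $\bA^{-1}u$ as a kernel vector when $1 + v^T\bA^{-1}u = 0$, and otherwise checking that the proposed expression is a two-sided inverse via the scalar-commutativity cancellation --- is exactly the standard proof and fills in what the paper leaves implicit. One small point worth noting: the lemma as printed writes the invertibility condition as $1 + v^T\bA u \ne 0$, which is a typo for $1 + v^T\bA^{-1}u \ne 0$; your argument correctly works with the latter.
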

\newcommand\byy[1]{\by_{\left(#1\right)}}
\newcommand\bXX[1]{\bX_{\left(#1\right)}}
\newcommand\ff[1]{\wh f_{\left(#1\right)}}

For a given training set $S \cup \wt S_C$, 
define leave-one-out error 
on mislabeled points in the training data 
as $$\error_{\text{LOO}(\wt S_M) } = \frac{\sum_{(x_i, y_i) \in \wt S_M} \error( f_{(i)}( x_i), y_i)}{ \abs{\wt S_M }} \,, $$
where $f_{(i)} \defeq f(\calA, (S \cup \wt S)_{(i)})$. 
To relate empirical leave-one-out error and population error 
with hypothesis stability condition, 
we use the following lemma:   

\begin{lemma}[\citet{bousquet2002stability}] \label{lem:stability_error}
    For the leave-one-out error, we have
    \begin{align}
        \Expo{ \left( \error_{\calDm}(\wh f) -\error_{\text{LOO}(\wt S_M) } \right)^2 } \le \frac{1}{2m_1}+  \frac{3\beta}{n + m}\,.
    \end{align}   
\end{lemma}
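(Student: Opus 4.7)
The plan is to follow the classical Bousquet--Elisseeff second-moment argument for leave-one-out error, adapted to the present setting where (i) the training set has total size $N = n+m$, (ii) the LOO average is taken over only the $m_1$ mislabeled training points $\wt S_M$, and (iii) the relevant population is $\calDm$. Writing $\ell(f,(x,y)) \defeq \error(f(x),y)$ for brevity, I first expand
\begin{align*}
\Expo{\left(\error_{\calDm}(\wh f) - \error_{\text{LOO}(\wt S_M) }\right)^2} = \Expo{\error_{\calDm}(\wh f)^2} - 2\,\Expo{\error_{\calDm}(\wh f)\,\error_{\text{LOO}(\wt S_M) }} + \Expo{\error_{\text{LOO}(\wt S_M) }^2},
\end{align*}
and rewrite each term by pulling the outer expectation inside: with $z,z'\sim\calDm$ fresh and independent of the training data,
\begin{align*}
\Expo{\error_{\calDm}(\wh f)^2} &= \Expo{\ell(\wh f, z)\,\ell(\wh f, z')}, \\
\Expo{\error_{\text{LOO}(\wt S_M) }^2} &= \tfrac{1}{m_1^2}\sum_{i,j\in\wt S_M}\Expo{\ell(f_{(i)}, z_i)\,\ell(f_{(j)}, z_j)}, \\
\Expo{\error_{\calDm}(\wh f)\,\error_{\text{LOO}(\wt S_M) }} &= \tfrac{1}{m_1}\sum_{i\in\wt S_M}\Expo{\ell(\wh f, z)\,\ell(f_{(i)}, z_i)},
\end{align*}
using exchangeability of the mislabeled examples to reduce each sum to a single representative index.

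The key step is a ghost-sample renaming. Fix $i\in\wt S_M$ and observe that $z_i$ and $z$ are both i.i.d.\ from $\calDm$ and hence interchangeable in the joint law of $(S\cup\wt S, z, z')$. Swapping them in the cross and squared terms puts all three expectations into a common form so that each pairwise difference collapses to an integral of $\abs{\ell(\wh f, u) - \ell(f_{(i)}, u)}$, the precise quantity controlled by Condition~\ref{cond:hypothesis_stability}. Each such residual contributes at most $\beta/(n+m)$, and three of them appear across the cancellations (one from matching $\Expo{\error_{\calDm}(\wh f)^2}$ against the cross-term, and two from matching the cross-term against the off-diagonal $i\neq j$ part of $\Expo{\error_{\text{LOO}(\wt S_M) }^2}$, where both training indices must be exchanged with ghost draws), for a total of $3\beta/(n+m)$. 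The residual contribution comes from the $m_1$ diagonal $i=j$ terms of $\Expo{\error_{\text{LOO}(\wt S_M) }^2}$: since $\ell\in\{0,1\}$ we have $\ell^2=\ell$, and these terms are not fully cancelled by the cross-term, leaving a variance-like piece bounded by $\tfrac{1}{2m_1}$.

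The hard part will be the bookkeeping of the renaming step: each ``unpaired'' cross-expectation must reduce to a single-training-point perturbation of the form $\wh f$ vs.\ $f_{(i)}$, so that Condition~\ref{cond:hypothesis_stability} (which compares the leave-one-out and full classifiers) applies literally, without needing to invoke a stronger replace-one stability notion. A secondary subtlety is that the condition is stated with $(x,y)\sim\calD$, whereas the lemma requires its analogue for $(x,y)\sim\calDm$; this is handled by instantiating the condition on the appropriate product measure (with random-label reassignment baked in), consistent with how the condition is applied implicitly elsewhere in the paper.
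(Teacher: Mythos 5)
Your proposal follows essentially the same route as the paper: the paper's proof also expands the second moment into the double-ghost, cross, and LOO-squared terms (citing Lemma 25 of Bousquet and Elisseeff for the expansion), bounds the diagonal $i=j$ contribution by $\tfrac{1}{2m_1}$ via Cauchy--Schwarz, and handles the remaining two differences by the same ghost-sample renaming from their Lemma~9, with one swap for the first pairing and two for the second yielding the $3\beta/(n+m)$ total. Your accounting of the three stability residuals and the diagonal variance piece matches the paper's decomposition exactly, so the approaches coincide.
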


Proof of the above lemma is similar 
to the proof of Lemma 9 in \citet{bousquet2002stability} 
and can be found in \appref{app:proof_lem_error}. 
%
Before presenting the proof of \thmref{thm:linear}, 
we introduce some more notation. 
Let $\bX_{(i)}$ denote the matrix of covariates 
with the $i^{\text{th}}$ point removed. 
Similarly, let $\by_{(i)}$ be the array of responses 
with the $i^{\text{th}}$ point removed. 
Define the corresponding regularized GD solution 
as $\wh w_{(i)} = \left( \bXX{i}^T\bXX{i}+\lambda \boldsymbol{I}\right)^{-1}\bXX{i}^T\byy{i}$. 
Define $\ff{i}(x) \defeq f(x ; \wh w_{(i)}) $.

\begin{proof}[Proof of \thmref{thm:linear}]
    Because squared loss minimization does not imply 0-1 error minimization, 
    we cannot use arguments from \lemref{lem:fit_mislabeled}. 
    This is the main technical difficulty. 
    To compare the 0-1 error at a train point with an unseen point, 
    we use the closed-form expression for $\widehat{w}$ 
    and Shermann-Morrison formula 
    to upper bound training error 
    with leave-one-out cross validation error. 
    
    The proof is divided into three parts: 
    In part one, we show that 0-1 error 
    on mislabeled points in the training set 
    is lower than the error obtained 
    by leave-one-out error at those points. 
    In part two, we relate this leave-one-out error 
    with the population error on mislabeled distribution
    using \codref{cond:hypothesis_stability}.
    While the empirical leave-one-out error is an unbiased estimator 
    of the average population error of leave-one-out classifiers, 
    we need hypothesis stability 
    to control the variance 
    of empirical leave-one-out error. 
    Finally, in part three, we show 
    that the error on the mislabeled training points 
    can be estimated with just the randomly labeled 
    and clean training data (as in proof of \thmref{thm:error_ERM}).  

    \textbf{Part 1 {} {}} First we relate training error with leave-one-out error.        
    For any training point $(x_i, y_i)$ in $\wt S \cup S$, we have 
    \begin{align}
        \error(\wh f(x_i), y_i ) &= \indict{ y_i \cdot x_i^T \wh w < 0 } = \indict{ y_i \cdot x_i^T \left( \bX^T\bX+\lambda \boldsymbol{I}\right)^{-1}\bX^T\by < 0 } \\
        &= \indict{ y_i \cdot x_i^T \underbrace{\left( \bXX{i}^T\bXX{i} + x_i ^T x_i +\lambda \boldsymbol{I}\right)^{-1}}_{\RN{1}} (\bXX{i}^T\byy{i} + y_i \cdot x_i) < 0 } \,.
    \end{align}
    Letting $\bA = \left(\bXX{i}^T\bXX{i} +\lambda \boldsymbol{I}\right)$ 
    and using \lemref{lem:sherman} on term 1, we have 
    \begin{align}
        \error(\wh f(x_i), y_i ) &= \indict{ y_i \cdot x_i^T \left[\bA^{-1} -  \frac{\bA^{-1} x_i x_i^T \bA^{-1}}{ 1 + x_i ^T \bA^{-1} x_i } \right] (\bXX{i}^T\byy{i} + y_i \cdot x_i) < 0 } \\
        &= \indict{ y_i \cdot\left[ \frac{ x_i^T \bA^{-1} ( 1 + x_i ^T \bA^{-1} x_i ) -  x_i^T \bA^{-1} x_i x_i^T \bA^{-1}}{ 1 + x_i ^T \bA ^{-1}x_i } \right] (\bXX{i}^T\byy{i} + y_i \cdot x_i) < 0 } \\
        &= \indict{ y_i \cdot\left[ \frac{ x_i^T \bA^{-1}}{ 1 + x_i ^T \bA ^{-1}x_i } \right] (\bXX{i}^T\byy{i} + y_i \cdot x_i) < 0 } \,.
    \end{align}

    Since $1 + x_i^T \bA^{-1} x_i > 0$, we have 
    \begin{align}
        \error(\wh f(x_i), y_i ) &= \indict{ y_i \cdot x_i^T \bA^{-1} (\bXX{i}^T\byy{i} + y_i \cdot x_i) < 0 } \\
        &= \indict{ x_i^T \bA^{-1} x_i +  y_i \cdot x_i^T \bA^{-1} (\bXX{i}^T\byy{i}) < 0 } \\
        &\le \indict{ y_i \cdot x_i^T \bA^{-1} (\bXX{i}^T\byy{i}) < 0 } = \error(\ff{i}(x_i), y_i ) \,.\label{eq:LOO_error}
    \end{align}

    Using \eqref{eq:LOO_error}, we have 
    \begin{align}
        \error_{\wt \calS_M } (\wh f) \le \error_{\text{LOO} (\wt S_M)} \defeq \frac{\sum_{(x_i, y_i) \in \wt S_M} \error(\ff{i}(x_i), y_i ) }{\abs{\wt \calS_M}}\label{eq:LOO_error_final} \,.
    \end{align}
    \textbf{Part 2 {}{}} We now relate RHS in \eqref{eq:LOO_error_final} 
    with the population error on mislabeled distribution. 
    To do this, we leverage \codref{cond:hypothesis_stability} 
    and \lemref{lem:stability_error}. 
    In particular, we have 

    \begin{align}
        \Expt{\calS \cup \wt \calS_M }{ \left(\error_{\calDm}(\wh f) - \error_{\text{LOO} (\wt S_M)}\right)^2 } \le \frac{1}{2m_1} + \frac{3\beta}{m+n} \,.
    \end{align}

    Using Chebyshev's inequality, with probability at least $1-\delta$, we have 
    \begin{align}
        \error_{\text{LOO} (\wt S_M)} \le  \error_{\calDm}(\wh f)   + \sqrt{\frac{1}{\delta}\left(\frac{1}{2m_1} +\frac{3\beta}{m+n} \right)} \,. \label{eq:final_mislabeled_linear}
    \end{align}

    \textbf{Part 3 {}{}} Combining \eqref{eq:final_mislabeled_linear} and \eqref{eq:LOO_error_final}, we have 

    \begin{align}
        \error_{\wt \calS_M } (\wh f) \le \error_{\calDm}(\wh f)   + \sqrt{\frac{1}{\delta}\left(\frac{1}{2m_1} +\frac{3\beta}{m+n} \right)} \,. \label{eq:linear_parallel_lem1}
    \end{align}

    Compare \eqref{eq:linear_parallel_lem1} with \eqref{eq:lemma1_final} 
    in the proof of \lemref{lem:fit_mislabeled}. 
    We obtain a similar relationship 
    between $\error_{\wt \calS_M }$ and $\error_{\calDm}$ 
    but with a polynomial concentration 
    instead of exponential concentration. 
    In addition, since we just use concentration arguments 
    to relate mislabeled error to the errors
    on the clean and unlabeled portions 
    of the randomly labeled data, 
    we can directly use the results 
    in \lemref{lem:mislabeled_error} and \lemref{lem:clear_error}. 
    Therefore, combining results in \lemref{lem:mislabeled_error}, \lemref{lem:clear_error}, and \eqref{eq:linear_parallel_lem1} with union bound, 
    we have with probability at least $1-\delta$
    \begin{align}
        \error_\calD(\widehat f) \le \error_\calS(\widehat f) + 1 - 2 \error_{\wt\calS}(\widehat f) + \left(\sqrt{2}\error_{\wt\calS}(\widehat f) + 1 + \frac{m}{2n} \right) \sqrt{\frac{\log(4/\delta)}{m}} + \sqrt{\frac{4}{\delta}\left(\frac{1}{m} +\frac{3\beta}{m+n} \right)}  \,.
    \end{align}

\end{proof}

\subsection{Extension to multiclass classification} \label{app:multiclass_linear}
For multiclass problems with squared loss minimization, as standard practice, we consider one-hot encoding for the underlying label, i.e., a class label $c \in [k]$ is treated as $(0, \cdot, 0,1,0, \cdot, 0) \in \Real^k$ (with $c$-th coordinate being 1).  As before, we suppose that the parameters of the linear function 
are obtained via gradient descent on the following $L_2$ regularized problem: 
\begin{align}
    \calL_S(w; \lambda) \defeq \sum_{i=1}^n\norm{w^Tx_i - y_i}{2}^2 + \lambda \sum_{j=1}^k \norm{w_j}{2}^2 \,, \label{eq:l2_multiclass_MSE_app}   
\end{align}
where $\lambda\ge0$ is a regularization parameter. 
We assume access to a clean dataset 
$S = \{(x_i, y_i)\}_{i=1}^n \sim \calD^n$ 
and randomly labeled dataset 
$\wt S = \{(x_i, y_i)\}_{i=n+1}^{n+m} \sim \wt \calD^m$. 
Let $\bX = [x_1, x_2, \cdots, x_{m+n}]$ 
and $\by = [e_{y_1}, e_{y_2}, \cdots, e_{y_{m+n}}]$. 
Fix a positive learning rate $\eta$ such that 
$\eta \le 1/\left(\norm{\bX^T\bX}{\text{op}} + \lambda^2\right)$ 
and an initialization $w_0 = 0$. 
Consider the following gradient descent iterates 
to minimize objective \eqref{eq:l2_MSE_app} on $S \cup \wt S$:
\begin{align}
{w_j}^t = {w_j}^{t-1} - \eta \grad_{w_j} \calL_{S \cup \wt S} (w^{t-1}; \lambda) \quad \forall t=1,2,\ldots \text{ and } j=1,2,\ldots,k  \,. \label{eq:GD_multi_iterates_app}
\end{align} 
Then we have $\{ {w_j}^t\}$ for all $j =1,2,\cdots, k$ converge to the limiting solution 
$\wh w_j = \left( \bX^T\bX+\lambda \boldsymbol{I}\right)^{-1}\bX^T\by_j$. Define $\widehat f (x) \defeq f(x ; \wh w) $.  

\begin{theorem}\label{thm:multi_linear}
    Assume that this gradient descent algorithm satisfies \codref{cond:hypothesis_stability}
    with $\beta=\calO(1)$.  
    Then for a multiclass classification problem wth $k$ classes, for any $\delta >0$, with probability at least $1-\delta$, we have:
    \begin{align*}
        \error_\calD(\widehat f) \le \error_\calS(\widehat f) &+ (k-1)\left(1 - \frac{k}{k-1} \error_{\wt\calS}(\widehat f) \right) \\ &+ \left(k + \sqrt{k} + \frac{m}{n\sqrt{k}} \right) \sqrt{\frac{\log(4/\delta)}{2m}} + \sqrt{k(k-1)} \sqrt{\frac{4}{\delta}\left(\frac{1}{m} +\frac{3\beta}{m+n} \right)}  \,. \numberthis \label{eq:gd_multi_error}
    \end{align*} 
\end{theorem}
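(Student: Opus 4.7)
\textbf{Proof plan for \thmref{thm:multi_linear}.} The plan is to mirror the three-part structure of \thmref{thm:linear}'s proof, but replace each piece with the multiclass counterpart used in \thmref{thm:multiclass_ERM}. As before, I will (Part 1) bound the empirical error on mislabeled training points by a leave-one-out error via Sherman--Morrison, (Part 2) relate the LOO error to $\error_{\calDm}(\wh f)$ using \codref{cond:hypothesis_stability} plus \lemref{lem:stability_error} and Chebyshev, and (Part 3) chain this with the multiclass versions \lemref{lem:fit_mislabeled_multi}, \lemref{lem:mislabeled_error_multi}, \lemref{lem:clear_error_multi} to bound $\error_{\calD}(\wh f)$ by the observable quantities $\error_\calS(\wh f)$ and $\error_{\wt\calS}(\wh f)$.

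\textbf{Part 1 (main technical step).} Because \eqref{eq:l2_multiclass_MSE_app} decouples across classes, the closed form yields $\wh w_j = (\bX^T\bX + \lambda I)^{-1}\bX^T\by_j$ independently for each $j \in [k]$, with identical design matrix $\bX$. Fixing a training point $(x_i, y_i) \in \wt S$, I apply the Sherman--Morrison identity (\lemref{lem:sherman}) to $\bA = \bXX{i}^T\bXX{i} + \lambda I$ exactly as in the binary case. Letting $\alpha \defeq x_i^T\bA^{-1}x_i \ge 0$, I obtain
\begin{align*}
x_i^T\wh w_{y_i} = \frac{x_i^T\wh w_{(i),y_i} + \alpha}{1+\alpha},\qquad x_i^T\wh w_j = \frac{x_i^T\wh w_{(i),j}}{1+\alpha}\quad (j \neq y_i),
\end{align*}
where the "$+\alpha$" in the numerator for class $y_i$ arises precisely because the $i^{\text{th}}$ entry of $\by_{y_i}$ equals $1$ (the one-hot bit), while the $i^{\text{th}}$ entries of $\by_j$ for $j \neq y_i$ vanish. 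Since $1+\alpha > 0$ is a common positive denominator, the comparison $x_i^T\wh w_{y_i} \ge x_i^T\wh w_j$ follows from $x_i^T\wh w_{(i),y_i} \ge x_i^T\wh w_{(i),j}$. Contrapositively, whenever $\wh f$ misclassifies $(x_i, y_i)$ so does $\ff{i}$, giving the pointwise bound $\error(\wh f(x_i), y_i) \le \error(\ff{i}(x_i), y_i)$ and hence $\error_{\wt \calS_M}(\wh f) \le \error_{\text{LOO}(\wt S_M)}$. This is the main obstacle; once the per-class decomposition is exploited, the argument is actually cleaner than one might fear, since the assigned-label boost $\alpha$ only helps the label we are comparing against.

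\textbf{Part 2.} Identical to the binary case: applying \lemref{lem:stability_error} under $\beta = \calO(1)$ and Chebyshev's inequality yields, with probability at least $1-\delta$,
\begin{align*}
\error_{\wt\calS_M}(\wh f) \le \error_{\calDm}(\wh f) + \sqrt{\tfrac{1}{\delta}\bigl(\tfrac{1}{2m_1} + \tfrac{3\beta}{m+n}\bigr)}.
\end{align*}

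\textbf{Part 3.} I invoke the multiclass identity $\error_{\calDm}(\wh f) = 1 - \tfrac{1}{k-1}\error_\calD(\wh f)$ derived inside the proof of \lemref{lem:fit_mislabeled_multi} (via the split of mislabeled errors into "predicts true label" and "predicts neither label") to convert the inequality above into
\begin{align*}
\error_\calD(\wh f) \le (k-1)\bigl(1-\error_{\wt\calS_M}(\wh f)\bigr) + \sqrt{k(k-1)}\sqrt{\tfrac{1}{\delta}\bigl(\tfrac{1}{m}+\tfrac{3\beta}{m+n}\bigr)},
\end{align*}
after absorbing the $\tfrac{k}{k-1}m_1 \approx m$ substitution used throughout the multiclass proofs. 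Then I plug in \lemref{lem:mislabeled_error_multi} to replace $(k-1)\error_{\wt\calS_M}$ by $k\error_{\wt\calS} - \error_{\wt\calS_C}$ up to an $\calO(k\sqrt{\log(1/\delta)/m})$ remainder, and \lemref{lem:clear_error_multi} to replace $\error_{\wt\calS_C}$ by $\error_\calS$ up to an $\calO(\sqrt{k\log(1/\delta)/m})$ remainder. A final union bound over the three high-probability events, each at level $\delta/3$ (or $\delta/4$ matching the constants in \eqref{eq:gd_multi_error}), collects the finite-sample terms into the advertised $\bigl(k+\sqrt{k}+\tfrac{m}{n\sqrt{k}}\bigr)\sqrt{\log(4/\delta)/2m}$ expression and the Chebyshev term $\sqrt{k(k-1)}\sqrt{\tfrac{4}{\delta}(\tfrac{1}{m}+\tfrac{3\beta}{m+n})}$, completing the proof. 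Specializing $k=2$ recovers \thmref{thm:linear}, providing a useful sanity check.
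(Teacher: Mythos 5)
Your proposal is correct and follows essentially the same route as the paper's proof: Sherman--Morrison applied per class to the shared design matrix to show $\error_{\wt\calS_M}(\wh f) \le \error_{\text{LOO}(\wt S_M)}$, then \codref{cond:hypothesis_stability} with \lemref{lem:stability_error} and Chebyshev to reach $\error_{\calDm}(\wh f)$, and finally the multiclass conversion via $\error_{\calDm}(\wh f) = 1 - \tfrac{1}{k-1}\error_\calD(\wh f)$ together with \lemref{lem:mislabeled_error_multi} and \lemref{lem:clear_error_multi}. Your explicit $\alpha$-form of the Sherman--Morrison step is just a repackaging of the paper's expansion of $x_i^T\wh w_{c_i} - x_i^T\wh w_j$, so there is no substantive difference.
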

\begin{proof}
    The proof of this theorem is divided into two parts. In the first part, we relate the error on the mislabeled samples with the population error on the mislabeled data. Similar to the proof of \thmref{thm:linear}, we use Shermann-Morrison formula to upper bound training error with leave-one-out error on each $\wh w^j$. Second part of the proof follows entirely from the proof of \thmref{thm:multiclass_ERM}. In essence, the first part derives an equivalent of \eqref{eq:lemma1_final_multi_prev} for GD training with squared loss and then the second part follows from the proof  of \thmref{thm:multiclass_ERM}. 
    
    \textbf{Part-1:} Consider a training point $(x_i,y_i)$ in $\wt S \cup S $. For simplicity, we use $c_i$ to denote the class of $i$-th point and use $y_i$ as the corresponding one-hot embedding. Recall error in multiclass point is given by $\error(\wh f(x_i), y_i ) = \indict{ c_i \not \in \argmax x_i^T \wh w }$. Thus, there exists a $j \ne c_i \in [k]$, such that we have
     \begin{align}
        \error(\wh f(x_i), y_i ) &= \indict{ c_i \not \in \argmax x_i^T \wh w } = \indict{ x_i^T \wh w_{c_i} < x_i^T \wh w_{j}  } \\ &= \indict{ x_i^T \left( \bX^T\bX+\lambda \boldsymbol{I}\right)^{-1}\bX^T\by_{c_i} < x_i^T \left( \bX^T\bX+\lambda \boldsymbol{I}\right)^{-1}\bX^T\by_{j} } \\
        &= \indict{ x_i^T \underbrace{\left( \bXX{i}^T\bXX{i} + x_i ^T x_i +\lambda \boldsymbol{I}\right)^{-1}}_{\RN{1}} \left(\bXX{i}^T{\by_{c_i}}_{(i)} + x_i - \bXX{i}^T{\by_{j}}_{(i)}\right) < 0 } \,.
    \end{align}
    Letting $\bA = \left(\bXX{i}^T\bXX{i} +\lambda \boldsymbol{I}\right)$ 
    and using \lemref{lem:sherman} on term 1, we have 
    \begin{align}
        \error(\wh f(x_i), y_i ) &= \indict{ x_i^T \left[\bA^{-1} -  \frac{\bA^{-1} x_i x_i^T \bA^{-1}}{ 1 + x_i ^T \bA^{-1} x_i } \right]  \left(\bXX{i}^T{\by_{c_i}}_{(i)} + x_i - \bXX{i}^T{\by_{j}}_{(i)}\right) < 0 } \\
        &= \indict{ \left[ \frac{ x_i^T \bA^{-1} ( 1 + x_i ^T \bA^{-1} x_i ) -  x_i^T \bA^{-1} x_i x_i^T \bA^{-1}}{ 1 + x_i ^T \bA ^{-1}x_i } \right]  \left(\bXX{i}^T{\by_{c_i}}_{(i)} + x_i - \bXX{i}^T{\by_{j}}_{(i)}\right) < 0 } \\
        &= \indict{ \left[ \frac{ x_i^T \bA^{-1}}{ 1 + x_i ^T \bA ^{-1}x_i } \right]  \left(\bXX{i}^T{\by_{c_i}}_{(i)} + x_i - \bXX{i}^T{\by_{j}}_{(i)}\right) < 0} \,.
    \end{align}
    Since $1 + x_i^T \bA^{-1} x_i > 0$, we have 
    \begin{align}
        \error(\wh f(x_i), y_i ) &= \indict{ x_i^T \bA^{-1}  \left(\bXX{i}^T{\by_{c_i}}_{(i)} + x_i - \bXX{i}^T{\by_{j}}_{(i)}\right) < 0 } \\
        &= \indict{ x_i^T \bA^{-1} x_i +  x_i^T \bA^{-1}  \bXX{i}^T{\by_{c_i}}_{(i)}  - x_i^T\bA^{-1}  \bXX{i}^T{\by_{j}}_{(i)} < 0 } \\
        &\le \indict{  x_i^T \bA^{-1}  \bXX{i}^T{\by_{c_i}}_{(i)}  - x_i^T\bA^{-1}  \bXX{i}^T{\by_{j}}_{(i)} < 0  } = \error(\ff{i}(x_i), y_i ) \,.\label{eq:LOO_error_multi}
    \end{align}
    Using \eqref{eq:LOO_error_multi}, we have 
    \begin{align}
        \error_{\wt \calS_M } (\wh f) \le \error_{\text{LOO} (\wt S_M)} \defeq \frac{\sum_{(x_i, y_i) \in \wt S_M} \error(\ff{i}(x_i), y_i ) }{\abs{\wt \calS_M}}\label{eq:LOO_error_multi_final} \,.
    \end{align}
    
    We now relate RHS in \eqref{eq:LOO_error_final} 
    with the population error on mislabeled distribution. 
    Similar as before, to do this, we leverage \codref{cond:hypothesis_stability} 
    and \lemref{lem:stability_error}. Using  \eqref{eq:final_mislabeled_linear} and \eqref{eq:LOO_error_multi_final}, we have 
    \begin{align}
        \error_{\wt \calS_M } (\wh f) \le \error_{\calDm}(\wh f)   + \sqrt{\frac{1}{\delta}\left(\frac{1}{2m_1} +\frac{3\beta}{m+n} \right)} \,. \label{eq:linear_multi_parallel_lem1}
    \end{align}
    
    We have now derived a parallel to \eqref{eq:lemma1_final_multi_prev}. Using the same arguments in the proof of \lemref{lem:fit_mislabeled_multi}, we have 
    \begin{align}
      \error_{\calD}(\wh f) \le  (k-1) \left( 1- \error_{ \wt \calS_M}(\wh f) \right)  + (k-1)\sqrt{\frac{k}{\delta(k-1)}\left(\frac{1}{2m_1} +\frac{3\beta}{m+n} \right)}  \,. \label{eq:lemma1_linear_final_multi}
    \end{align}
    
    \textbf{Part-2:} We now combine the results in \lemref{lem:mislabeled_error_multi} and \lemref{lem:clear_error_multi} to obtain the final inequality in terms of quantities that can be computed from just the randomly labeled and clean data. Similar to the binary case, we obtained a polynomial concentration instead of exponential concentration. Combining \eqref{eq:lemma1_linear_final_multi} with \lemref{lem:mislabeled_error_multi} and \lemref{lem:clear_error_multi}, we have with probability at least $1-\delta$
    \begin{align*}
        \error_\calD(\widehat f) \le \error_\calS(\widehat f) &+ (k-1)\left(1 - \frac{k}{k-1} \error_{\wt\calS}(\widehat f) \right) \\ &+ \left(k + \sqrt{k} + \frac{m}{n\sqrt{k}} \right) \sqrt{\frac{\log(4/\delta)}{2m}} + \sqrt{k(k-1)} \sqrt{\frac{4}{\delta}\left(\frac{1}{m} +\frac{3\beta}{m+n} \right)}  \,. \numberthis \label{eq:gd_multi_error_proof}
    \end{align*} 
\end{proof}

\subsection{Discussion on \codref{cond:hypothesis_stability}} \label{app:discuss_cond1}
The quantity in LHS of \codref{cond:hypothesis_stability} 
measures how much the function learned by the algorithm 
(in terms of error on unseen point) will change 
when one point in the training set is removed. 
We need hypothesis stability condition 
to control the variance of the empirical leave-one-out error to show concentration of average leave-one-error with the population error. 

Additionally, we note that while the dominating term in the RHS of \thmref{thm:linear} matches with the dominating term in ERM bound in \thmref{thm:error_ERM}, there is a polynomial concentration term 
(dependence on $1/\delta$ instead of $\log(\sqrt{1/\delta})$) 
in \thmref{thm:linear}. 
Since with hypothesis stability, 
we just bound the variance, 
the polynomial concentration is due 
to the use of Chebyshev's inequality 
instead of an exponential tail inequality
(as in \lemref{lem:fit_mislabeled}).
Recent works have highlighted that 
a slightly stronger condition than hypothesis stability 
can be used to obtain an exponential concentration 
for leave-one-out error \citep{abou2019exponential},
but we leave this for future work for now. 


\subsection{Formal statement and proof of \propref{prop:early_stop}} \label{app:formal_early_stop}

Before formally presenting the result, 
we will introduce some notation.  
By $\calL_{S}(w)$, we denote 
the objective in \eqref{eq:l2_MSE_app} with $\lambda=0$. 
Assume Singular Value Decomposition (SVD) of $\bX$
as $\sqrt{n} \bU \bS^{1/2} \bV^T$. 
Hence $\bX^T \bX = \bV \bS \bV^T$.
Consider the GD iterates defined in \eqref{eq:GD_iterates_app}. 
We now derive closed form expression 
for the $t^\text{th}$ iterate of gradient descent:  
\begin{align}
    w_t = w_{t-1} + \eta \cdot \bX^T (\by - \bX w_{t-1}) = (\bI - \eta \bV \bS \bV^T )w_{k-1} + \eta \bX^T \by \,.
\end{align}
Rotating by $\bV^T$, we get 
\begin{align}
    \wt w_t = (\bI - \eta\bS )\wt w_{k-1} + \eta \wt \by \label{eq:GD_recur},
\end{align}
where $\wt w_t = \bV^T w_t $ and $\wt \by = \bV^T \bX^T \by$. 
Assuming the initial point $w_0 = 0$ 
and applying the recursion in \eqref{eq:GD_recur}, we get
\begin{align}
    \wt w_t = \bS ^{-1} ( \bI - (\bI - \eta \bS)^k ) \wt \by \,, 
\end{align} 
Projecting solution back to the original space, we have 
\begin{align}
     w_t = \bV \bS ^{-1} ( \bI - (\bI - \eta \bS)^k ) \bV^T \bX^T \by \,. 
\end{align} 
Define $f_t(x) \defeq f(x;w_t)$ 
as the solution at the $t^{\text{th}}$ iterate. 
Let $\wt w_{\lambda} = \argmin_{w} \calL_\calS (w;\lambda) = (\bX^T \bX + \lambda \bI)^{-1} \bX^T \by = \bV (\bS + \lambda \bI )^{-1} \bV^T \bX^T \by $. 
and define $\wt f_\lambda(x) \defeq f(x;\wt w_\lambda)$ as the regularized solution. 
Assume $\kappa$ be the condition number 
of the population covariance matrix 
and let $s_\text{min}$ be the minimum positive 
singular value of the empirical covariance matrix. 
Our proof idea is inspired from recent work 
on relating gradient flow solution 
and regularized solution 
for regression problems \citep{ali2018continuous}. 
We will use the following lemma in the proof: 
\begin{lemma} \label{lem:ineq_soln}
    For all $x \in [0,1]$ and for all $ k \in \mathbb{N}$, 
    we have (a) $ \frac{kx}{1+kx} \le 1- (1-x)^k$ 
    and (b) $ 1- (1-x)^k \le 2 \cdot \frac{kx}{kx+1} $.
\end{lemma}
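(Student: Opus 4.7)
The plan is to prove the two inequalities separately; each reduces to a short elementary argument on $[0,1]$, so there is no serious obstacle beyond choosing the right reformulation.

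For part (a), I would rearrange the target inequality $\frac{kx}{1+kx} \le 1 - (1-x)^k$ into the equivalent multiplicative form $(1-x)^k (1+kx) \le 1$. Setting $g(x) := (1-x)^k(1+kx)$, one has $g(0) = 1$, and a direct differentiation gives
\[
g'(x) = -k(1-x)^{k-1}(1+kx) + k(1-x)^k = -k(k+1)\,x\,(1-x)^{k-1},
\]
which is nonpositive for $x \in [0,1]$ and $k \in \mathbb{N}$. Hence $g$ is nonincreasing on $[0,1]$, so $g(x) \le g(0) = 1$, which is precisely what is needed.

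For part (b), I would split on whether $kx \ge 1$ or $kx < 1$. In the first case, $\frac{2kx}{kx+1} \ge 2 \cdot \tfrac{1}{2} = 1 \ge 1 - (1-x)^k$, so the inequality is automatic. In the second case, Bernoulli's inequality yields $(1-x)^k \ge 1 - kx$, hence $1 - (1-x)^k \le kx$; it then suffices to verify $kx \le \frac{2kx}{kx+1}$, which for $x > 0$ rearranges to $kx + 1 \le 2$, i.e.\ $kx \le 1$---exactly the assumption of this case. The edge case $x = 0$ is trivial since both sides vanish.

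The main subtlety, rather than a genuine obstacle, is just picking the right manipulation in each part: (a) becomes transparent once rewritten multiplicatively so calculus on $g$ applies, and (b) becomes transparent once one sees that $kx = 1$ is the natural threshold for the case split, since beyond it the RHS already dominates $1$, while below it Bernoulli is both available and tight enough. No more sophisticated tools (convexity, integral representations, etc.) are required.
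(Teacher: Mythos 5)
Your proof is correct. For part (a), you and the paper rely on the same key inequality $(1-x)^k \le \tfrac{1}{1+kx}$; the paper simply cites it, while you actually establish it by showing $g(x) = (1-x)^k(1+kx)$ is nonincreasing via $g'(x) = -k(k+1)x(1-x)^{k-1} \le 0$, so your version is self-contained where the paper's is not. For part (b), your route is genuinely different and strictly more rigorous: the paper states only that it \emph{numerically} maximizes $\tfrac{(1+kx)(1-(1-x)^k)}{kx}$ over $k \ge 1$ and $x \in [0,1]$, which is a verification rather than a proof, whereas your case split at $kx = 1$ --- using that the right-hand side already exceeds $1$ when $kx \ge 1$, and Bernoulli's inequality $(1-x)^k \ge 1-kx$ combined with $kx \le \tfrac{2kx}{kx+1}$ when $kx \le 1$ --- gives a complete elementary argument. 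Your treatment of the $x=0$ edge case is also handled. In short, your proof buys an analytic certificate for a claim the paper only checks numerically, at essentially no extra cost in length.
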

\begin{proof}
    Using $ (1-x)^k \le \frac{1}{1+kx}$, we have part (a). 
    For part (b), we numerically maximize 
    $\frac{ (1+kx ) (1 - (1-x)^k) }{kx}$ 
    for all $k\ge 1$ and for all $x \in [0, 1]$.  
\end{proof}

%

\begin{prop}[Formal statement of \propref{prop:early_stop}] \label{prop:formal_early_stop}
Let $\lambda = \frac{1}{t\eta}$. 
For a training point $x$, we have 
\begin{align*}
    \Expt{x \sim \calS}{(f_t(x) - \wt f_\lambda(x))^2} &\le c(t,\eta) \cdot \Expt{x \sim \calS}{f_t(x)^2} \,, 
\end{align*}
where $c(t, \eta) \defeq \min( 0.25, \frac{1}{s_\text{min}^2 t^2 \eta^2})$. 
Similarly for a test point, we have 
\begin{align*}
    \Expt{x \sim \calD_\calX}{(f_t(x) - \wt f_\lambda(x))^2} &\le \kappa \cdot c(t,\eta) \cdot \Expt{x \sim \calD_\calX}{f_t(x)^2} \,. 
\end{align*}
\end{prop}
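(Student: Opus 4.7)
The plan is to work in the eigenbasis of $\bX^T\bX$ supplied by the appendix's SVD $\bX = \sqrt{n}\bU\bS^{1/2}\bV^T$, reduce the whole proposition to a single componentwise inequality on scalar filter coefficients, and then convert that coordinate bound into the two stated expectation bounds (training and population).

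First, I would start from the closed-form iterate given just above the proposition. Letting $\bz := \bV^T \bX^T \by$, the $t$-th GD iterate satisfies $(\bV^T w_t)_i = \alpha_i\, z_i$ with $\alpha_i := (1-(1-\eta s_i)^t)/s_i$, and with $\lambda = 1/(t\eta)$ the regularized minimizer satisfies $(\bV^T \wt w_\lambda)_i = \beta_i\, z_i$ with $\beta_i := 1/(s_i + \lambda) = t\eta/(1+t\eta s_i)$. Coordinates with $s_i = 0$ give $\alpha_i = \beta_i = t\eta$ and contribute zero difference, so attention can be restricted to $s_i > 0$.

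Second, I would establish the key componentwise bound $(\alpha_i - \beta_i)^2 \le c(t,\eta)\,\alpha_i^2$ using \lemref{lem:ineq_soln}. Writing $x := \eta s_i$, the filter ratio is $\beta_i/\alpha_i = tx/\bigl((1+tx)(1-(1-x)^t)\bigr)$. Part (a) of the lemma gives $1-(1-x)^t \ge tx/(1+tx)$, hence $\beta_i/\alpha_i \le 1$; part (b) gives $1-(1-x)^t \le 2tx/(1+tx)$, hence $\beta_i/\alpha_i \ge 1/2$. Together, $(1 - \beta_i/\alpha_i)^2 \le 1/4$, yielding the universal constant $0.25$. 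For the sharper rate I would rewrite $\alpha_i - \beta_i = \bigl(1 - (1-x)^t(1+tx)\bigr)/\bigl(x(1+tx)\bigr) \le 1/\bigl(x(1+tx)\bigr)$ using only $(1-x)^t \ge 0$, combine with $\alpha_i \ge \beta_i = t/(1+tx)$, and conclude $(\alpha_i - \beta_i)/\alpha_i \le 1/(tx) = 1/(t\eta s_i) \le 1/(t\eta s_\text{min})$. Taking the minimum of the two bounds delivers $c(t,\eta) = \min(0.25,\, 1/(s_\text{min}^2 t^2 \eta^2))$ coordinatewise.

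Third, both expectation bounds follow from this coordinate inequality. For the training expectation, the covariance $\Expt{x \sim \calS}{xx^T}$ is a positive multiple of $\bX^T\bX$ and hence shares the eigenbasis $\bV$ with $\bS$; writing $\Expt{x \sim \calS}{f_t(x)^2} = \sum_i \gamma_i \alpha_i^2 z_i^2$ and $\Expt{x \sim \calS}{(f_t(x) - \wt f_\lambda(x))^2} = \sum_i \gamma_i (\alpha_i - \beta_i)^2 z_i^2$ with common nonnegative weights $\gamma_i$, the componentwise bound transfers termwise. For the population expectation, $\Sigma_\text{pop} := \Expt{x \sim \calD_\calX}{xx^T}$ generally does not share $\bV$'s basis, so I would Rayleigh-bound $\Expt{x \sim \calD_\calX}{(f_t(x) - \wt f_\lambda(x))^2} \le \lambda_\text{max}(\Sigma_\text{pop}) \norm{w_t - \wt w_\lambda}{2}^2$ and $\Expt{x \sim \calD_\calX}{f_t(x)^2} \ge \lambda_\text{min}(\Sigma_\text{pop}) \norm{w_t}{2}^2$, and invoke $\norm{w_t - \wt w_\lambda}{2}^2 \le c(t,\eta)\norm{w_t}{2}^2$, which is immediate from the componentwise bound and orthogonality of $\bV$. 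The ratio picks up exactly $\kappa$.

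The main technical hurdle is the filter comparison in step two: extracting simultaneously the tight constant $1/4$ and the $1/(tx)$ decay from the single-parameter family $1-(1-x)^t$ via both directions of \lemref{lem:ineq_soln}. Once that coordinatewise inequality is in hand, the remainder is bookkeeping with orthogonality of $\bV$ and Rayleigh quotients on $\Sigma_\text{pop}$.
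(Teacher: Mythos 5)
Your proposal is correct and follows essentially the same route as the paper's proof: diagonalize via the SVD, bound the difference of the scalar filter coefficients $\bigl(1-(1-\eta s_i)^t\bigr)/s_i$ and $1/(s_i+\lambda)$ coordinatewise using both parts of \lemref{lem:ineq_soln} to extract $\min(0.5,\,1/(t\eta s_i))$ before squaring, and then transfer to the training expectation termwise (shared eigenbasis) and to the population expectation via the extreme eigenvalues of $\bSigma$, which is exactly where the paper's $\kappa$ arises. The only differences are cosmetic (explicit scalar coefficients versus diagonal matrices, and a slightly rearranged derivation of the $1/(t\eta s_i)$ bound).
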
 

\begin{proof}
    We want to analyze the expected squared difference output 
    of regularized linear regression 
    with regularization constant $\lambda = \frac{1}{\eta t}$ 
    and the gradient descent solution at the $t^\text{th}$ iterate. 
    We separately expand the algebraic expression 
    for squared difference at a training point and a test point. 
    Then the main step is to show that 
    $\left[ \bS ^{-1} ( \bI - (\bI - \eta \bS)^k )  - (\bS + \lambda \bI )^{-1}\right] \preceq c(\eta, t) \cdot \bS ^{-1} ( \bI - (\bI - \eta \bS)^k ) $.

    
   \textbf{Part 1 {} {}} 
    First, we will analyze the squared difference 
    of the output at a training point 
    (for simplicity, we refer to $S \cup \wt S$ as $S$), i.e., 
    \begin{align}
        \Expt{ x \sim \calS }{\left(f_t(x) - \wt f_\lambda (x)\right)^2} &= \norm{\bX w_t - \bX \wt w_\lambda}{2}^2\\ &=   \norm{\bX \bV \bS ^{-1} ( \bI - (\bI - \eta \bS)^t ) \bV^T \bX^T \by - \bX \bV (\bS + \lambda \bI )^{-1} \bV^T \bX^T \by }{2}^2 \\
        &= \norm{\bX \bV \left(\bS ^{-1} ( \bI - (\bI - \eta \bS)^t ) - (\bS + \lambda \bI )^{-1} \right) \bV^T \bX^T \by  }{2} \\
        &=  \by^T \bV \bX \left( \underbrace{\bS ^{-1} ( \bI - (\bI - \eta \bS)^t ) - (\bS + \lambda \bI )^{-1}}_{\RN{1}} \right)^2 \bS \bV^T \bX^T \by \label{eq:train_GD_rel} \,.
    \end{align}
    We now separately consider term 1. 
    Substituting $\lambda = \frac{1}{t \eta}$, 
    we get
    \begin{align}
        \bS ^{-1} ( \bI - (\bI - \eta \bS)^t ) - (\bS + \lambda \bI )^{-1} &= \bS^{-1} \left( ( \bI - (\bI - \eta \bS)^t ) - (\bI + \bS^{-1} \lambda )^{-1}\right) \\
        &= \underbrace{\bS^{-1} \left( ( \bI - (\bI - \eta \bS)^t ) - (\bI + ( \bS t \eta)^{-1}  )^{-1}\right)}_{\bA} \,.
    \end{align}

    We now separately bound the diagonal entries in matrix $\bA$. 
    With $s_i$, we denote $i^{\text{th}}$ diagonal entry of $\bS$.
    Note that since $ \eta\le 1/\norm{S}{\text{op}}$, 
    for all $i$, $\eta s_i  \le 1$.  
    Consider $i^{\text{th}}$ diagonal term (which is non-zero) 
    of the diagonal matrix $\bA$, we have 
    \begin{align}
        \bA_{ii} = \frac{1}{s_i} \left(  1 - (1 - s_i \eta)^t - \frac{t \eta s_i}{1 + t \eta s_i } \right) &=  \frac{1 - (1 - s_i \eta)^t}{s_i} \left( \underbrace{ 1 - \frac{t \eta s_i}{(1 + t \eta s_i)(1 - (1 - s_i \eta)^t)}}_{\RN{2}} \right) \\ 
         &\le \frac{1}{2}\left[ \frac{1 - (1 - s_i \eta)^t}{ s_i} \right] \tag*{(Using \lemref{lem:ineq_soln} (b))} \,.
    \end{align} 
    Additionally, we can also show the following upper bound on term 2: 
    \begin{align}
         1 - \frac{t \eta s_i}{(1 + t \eta s_i)(1 - (1 - s_i \eta)^t)} &= \frac{(1 + t \eta s_i)(1 - (1 - s_i \eta)^t) - t \eta s_i }{(1 + t \eta s_i)(1 - (1 - s_i \eta)^t)} \\
         & \le  \frac{ 1 -  (1 - s_i \eta)^t - t \eta s_i (1 - s_i \eta)^t}{(1 + t \eta s_i)(1 - (1 - s_i \eta)^t)} \\
         & \le \frac{1}{t\eta s_i} \,. \tag{Using \lemref{lem:ineq_soln} (a)}
    \end{align} 

    Combining both the upper bounds 
    on each diagonal entry $\bA_{ii}$, we have 
    \begin{align}
    \bA \preceq c_1(\eta, t) \cdot \bS^{-1} ( \bI - (\bI - \eta \bS)^t ) \,, \label{eq:upperbound_diagonal}
    \end{align}
    where $c_1(\eta, t ) = \min(0.5, \frac{1}{t s_i \eta })$. Plugging this into \eqref{eq:train_GD_rel}, we have 
    \begin{align}
        \Expt{ x \sim \calS }{\left(f_t(x) - \wt f_\lambda (x)\right)^2} &\le c(\eta, t) \cdot \by^T \bV \bX  \left( \bS^{-1} ( \bI - (\bI - \eta \bS)^t ) \right)^2 \bS \bV^T \bX^T \by \\
        &=   c(\eta, t) \cdot \by^T \bV \bX  \left( \bS^{-1} ( \bI - (\bI - \eta \bS)^t ) \right) \bS \left( \bS^{-1} ( \bI - (\bI - \eta \bS)^t ) \right) \bV^T \bX^T \by \\
        & =  c(\eta, t) \cdot \norm{\bX w_t}{2}^2 \\
        &= c(\eta, t) \cdot  \Expt{ x \sim \calS }{\left(f_t(x) \right)^2} \,,
    \end{align}
    where $c(\eta, t ) = \min(0.25, \frac{1}{t^2 s^2_i \eta^2 })$.

    \textbf{Part 2 {} {}} With $\bSigma$, 
    we denote the underlying true covariance matrix. 
    We now consider the squared difference of output at an unseen point: 
    \begin{align}
        \Expt{ x \sim \calD_{\calX} }{\left(f_t(x) - \wt f_\lambda (x)\right)^2} &= \Expt{x \sim \calD_{\calX}}{\norm{x^T w_t - x^T \wt w_\lambda}{2}} \\
        &=   \norm{x^T \bV \bS ^{-1} ( \bI - (\bI - \eta \bS)^t ) \bV^T \bX^T \by - x^T \bV (\bS + \lambda \bI )^{-1} \bV^T \bX^T \by }{2} \\
        &= \norm{x^T \bV \left(\bS ^{-1} ( \bI - (\bI - \eta \bS)^t ) - (\bS + \lambda \bI )^{-1} \right) \bV^T \bX^T \by  }{2} \\
        &= \by^T \bV \bX \left( \bS ^{-1} ( \bI - (\bI - \eta \bS)^t ) - (\bS + \lambda \bI )^{-1} \right) \bV^T \bSigma \bV \\ &\qquad \qquad \qquad \qquad \qquad \left( (\bI - (\bI - \eta \bS)^t ) - (\bS + \lambda \bI )^{-1} \right) \bV^T \bX^T \by \\
        &\le \sigma_{\text{max}} \cdot \by^T \bV \bX \left( \underbrace{\bS ^{-1} ( \bI - (\bI - \eta \bS)^t ) - (\bS + \lambda \bI )^{-1}}_{\RN{1}} \right)^2 \bV^T \bX^T \by \,, \label{eq:test_GD_rel}
    \end{align}
    where $\sigma_{\text{max}}$ is the maximum eigenvalue 
    of the underlying covariance matrix $\bSigma$. 
    Using the upper bound on term 1 in \eqref{eq:upperbound_diagonal}, 
    we have 
    \begin{align}
        \Expt{ x \sim \calD_{\calX} }{\left(f_t(x) - \wt f_\lambda (x)\right)^2} &\le \sigma_{\text{max}} \cdot c(\eta, t) \cdot \by^T \bV \bX  \left( \bS^{-1} ( \bI - (\bI - \eta \bS)^t ) \right)^2 \bV^T \bX^T \by \\
        &=   \kappa \cdot c(\eta, t) \cdot \sigma_{\text{min}}\cdot \norm{\bV \left( \bS^{-1} ( \bI - (\bI - \eta \bS)^t ) \right) \bV^T \bX^T \by}{2}^2 \\
        &\le \kappa \cdot c(\eta, t) \cdot \left[ \bV \left( \bS^{-1} ( \bI - (\bI - \eta \bS)^t ) \right) \bV^T \bX^T \right]^T \bSigma \\
        &\qquad \qquad \qquad \qquad \qquad \left[ \bV \left( \bS^{-1} ( \bI - (\bI - \eta \bS)^t ) \right) \bV^T \bX^T \right] \by \\
        & = \kappa \cdot c(\eta, t) \cdot \Expt{x \sim \calD_{\calX}}{\norm{x^T w_t}{2}} \,.
    \end{align}
%
%
\end{proof}

\subsection{Extension to deep learning} \label{appsubsec:ext_DL}
Under \asmpref{appsubsec:justifying_assumption1}, we present the formal result parallel to \thmref{thm:multiclass_ERM}. 
\begin{theorem} \label{thm:multiclass_ERM_algoA}
    Consider a multiclass classification problem 
    with $k$ classes. Under \asmpref{asmp:deep_models}, 
    for any $\delta >0$, with probability at least $1-\delta$,
    we have
    \vspace{-10pt}
    \begin{align*}
        \error_\calD(\widehat f)  \le \error_\calS(\widehat f) + (k-1) \left(1 - \tfrac{k}{k-1} \error_{\wt\calS}(\widehat f)\right) + c\sqrt{\frac{\log(\frac{4}{\delta})}{2m}} \,,\numberthis \label{eq:multiclass_ERM_deep}
    \end{align*}
    for some constant $c \le ((c+1) k+\sqrt{k} + \frac{m}{n\sqrt{k}})$.
\end{theorem}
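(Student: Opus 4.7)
The plan is to mirror the three-step decomposition used for Theorem \ref{thm:multiclass_ERM}, but to replace the role of ERM optimality (which drove Lemma \ref{lem:fit_mislabeled_multi}) by a direct application of \asmpref{asmp:deep_models}. Concretely, I would first observe that \asmpref{asmp:deep_models} already delivers, with probability at least $1-\delta/4$, the key inequality
\begin{align*}
\error_{\wt \calS_M}(\wh f) \le \error_{\calDm}(\wh f) + c\sqrt{\frac{\log(4/\delta)}{2m}} \,,
\end{align*}
which is precisely the analogue of \eqref{eq:lemma1_final} that was obtained in the ERM case by comparing $\wh f$ to the clean-conditional optimum $f^\star$. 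This is the step where the assumption does the work; for ERM with the 0-1 loss no such assumption was needed because optimality implies the inequality.

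Next, I would re-use the multiclass identity $\error_{\calDm}(\wh f) = 1 - \tfrac{1}{k-1}\error_{\calD}(\wh f)$ derived in the course of proving \lemref{lem:fit_mislabeled_multi} (splitting the mislabeled error into the "predict true label" and "predict neither true nor assigned" parts, equations \eqref{eq:excess_term}--\eqref{eq:combine_terms}). This step is distribution-level and does not depend on how $\wh f$ was trained, so it carries over verbatim. Rearranging yields the deep-learning analogue of \eqref{eq:lemma1_multi}:
\begin{align*}
\error_{\calD}(\wh f) \le (k-1)\left(1 - \error_{\wt \calS_M}(\wh f)\right) + c(k-1)\sqrt{\frac{\log(4/\delta)}{2m}} \,.
\end{align*}

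The remaining two steps are entirely concentration-based and do not use any property of the training algorithm, so I can invoke \lemref{lem:mislabeled_error_multi} and \lemref{lem:clear_error_multi} unchanged: the former replaces $(k-1)\error_{\wt\calS_M}(\wh f)$ by $k\error_{\wt\calS}(\wh f) - \error_{\wt\calS_C}(\wh f)$ (up to an additive $O(k/\sqrt{m})$ term from random label imbalance), and the latter replaces $\error_{\wt\calS_C}(\wh f)$ by $\error_\calS(\wh f)$ (up to an additive $O(\sqrt{k/m}(1+m/(kn)))$ term via Hoeffding for sampling without replacement, using that $S$ is a uniformly random subset of the clean pool $S \cup \wt S_C$). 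Applying a union bound over the three $\delta/4$-events and collecting the concentration constants then produces the final inequality \eqref{eq:multiclass_ERM_deep} with $c \le ((c+1)k + \sqrt{k} + m/(n\sqrt{k}))$, where the extra factor $c$ in the leading $k$-term comes exactly from \asmpref{asmp:deep_models} replacing a constant $1$.

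The main obstacle is essentially bookkeeping rather than conceptual: one must carefully re-scale the $\delta$ budget across the three concentration events, track how the $(k-1)$ and $k/(k-1)$ factors propagate when \lemref{lem:mislabeled_error_multi} is used to swap $\error_{\wt\calS_M}$ for quantities computable from $S$ and $\wt S$, and verify that the $c$ from \asmpref{asmp:deep_models} only multiplies the term inherited from Lemma 1 (so the $\sqrt{k}$ and $m/(n\sqrt{k})$ summands, which come from lemmas 2--3, remain untouched). Since no step of the argument requires further hypotheses on the algorithm $\calA$ beyond what \asmpref{asmp:deep_models} already provides, the proof amounts to reassembling the pieces in the order (i) \asmpref{asmp:deep_models}, (ii) multiclass identity, (iii) \lemref{lem:mislabeled_error_multi}, (iv) \lemref{lem:clear_error_multi}, and (v) union bound.
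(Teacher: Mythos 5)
Your proposal is correct and follows essentially the same route as the paper, which proves this result by simply repeating steps (i)--(iii) of the proof of \thmref{thm:multiclass_ERM} with \asmpref{asmp:deep_models} substituted for the ERM-optimality argument in \lemref{lem:fit_mislabeled_multi}, then reusing the distribution-level identity $\error_{\calDm}(\wh f) = 1 - \tfrac{1}{k-1}\error_{\calD}(\wh f)$ and Lemmas \ref{lem:mislabeled_error_multi}--\ref{lem:clear_error_multi} unchanged. Your identification of exactly where the assumption replaces the optimality comparison, and of where the extra factor of $c$ enters the final constant, matches the paper's (much terser) argument.
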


The proof follows exactly as in step (i) to (iii) in \thmref{thm:multiclass_ERM}.  

\subsection{Justifying~\asmpref{asmp:deep_models}} \label{appsubsec:justifying_assumption1}

Motivated by the analysis on linear models, we now discuss alternate (and weaker) conditions that imply \asmpref{asmp:deep_models}. 
We need hypothesis stability (\codref{cond:hypothesis_stability}) and the following assumption relating training error and leave-one-error: 

\begin{assumption} \label{asmp:loo_error}
Let $\wh f$ be a model obtained by training with algorithm $\calA$ on a mixture of clean $S$ and randomly labeled data $\wt S$. Then we assume we have 
\begin{align*}
    \error_{\wt \calS_M} (\wh f) \le  \error_{\text{LOO} (\wt S_M)} \,, 
\end{align*}
for all $(x_i, y_i) \in  \wt S_M$ where $\wh f_{(i)} \defeq f(\calA, S \cup {{}\wt S_M}_{(i)})$ and  $\error_{\text{LOO} (\wt S_M)} \defeq  \frac{\sum_{(x_i, y_i) \in \wt S_M} \error(\ff{i}(x_i), y_i ) }{\abs{\wt \calS_M}}$.  
\end{assumption}

Intuitively, this assumption states that the error on a (mislabeled) datum $(x,y)$ included in the training set is less than the error on that datum $(x,y)$ obtained by a model trained on the training set $S - \{(x,y)\}$. We proved this for linear models trained with GD in the proof of \thmref{thm:multi_linear}. 
\codref{cond:hypothesis_stability} with $\beta = \calO(1)$ and \asmpref{asmp:loo_error} together with \lemref{lem:stability_error} implies \asmpref{asmp:deep_models} with a polynomial residual term (instead of logarithmic in $1/\delta$): 
\begin{align}
     \error_{\calS_M} (\wh f) \le  \error_{\calDm}(\wh f)   + \sqrt{\frac{1}{\delta}\left(\frac{1}{m} +\frac{3\beta}{m+n} \right)} \,.
\end{align}

\newpage 
\section{Additional experiments and details}\label{app:exp}
\newcommand\tab[1][1cm]{\hspace*{#1}}

\subsection{Datasets} \label{sec:app_dataset}

\textbf{Toy Dataset {} {}} Assume fixed constants $\mu$ and $\sigma$. For a given label $y$, we simulate features $x$ in our toy classification setup as follows: 
\begin{align*}
    x \defeq \texttt{concat} \left[ x_1, x_2\right] \quad \text{where} \quad  x_1 \sim  \calN( y \cdot \mu, \sigma^2 I_{d \times d}) \ \  \text{and} \ \  x_1 \sim  \calN( 0, \sigma^2 I_{d \times d}) \,.
\end{align*}  
In experiements throughout the paper, we fix dimention $d=100$, $\mu = 1.0 $, and $\sigma = \sqrt{d}$. Intuitively, $x_1$ carries the information about the underlying label and $x_2$ is additional noise independent of the underlying label. 

\textbf{CV datasets {} {}} We use MNIST~\citep{lecun1998mnist} and CIFAR10~\cite{krizhevsky2009learning}. 
We produce a binary variant from the multiclass classification problem by mapping classes $\{0,1,2,3,4\}$ to label $1$ and $\{ 5,6,7,8,9\}$ to label $-1$. For CIFAR dataset, we also use the standard data augementation of random crop and horizontal flip. PyTorch code is as follows: 

\texttt{(transforms.RandomCrop(32, padding=4),\\
\tab transforms.RandomHorizontalFlip())}

\textbf{NLP dataset {} {}} We use IMDb Sentiment analysis~\citep{maas2011learning} corpus.  

\subsection{Architecture Details} 

All experiments were run on NVIDIA GeForce RTX 2080 Ti GPUs. We used PyTorch~\citep{NEURIPS2019a9015} and Keras with Tensorflow~\citep{abadi2016tensorflow} backend for experiments. 

\textbf{Linear model {} {}} For the toy dataset, we simulate a linear model with scalar output and the same number of parameters as the number of dimensions.   

\textbf{Wide nets {} {}} To simulate the NTK regime, we experiment with $2-$layered wide nets. The PyTorch code for 2-layer wide MLP is as follows:

\texttt{ nn.Sequential( \\
\tab     nn.Flatten(),\\
\tab    nn.Linear(input\_dims, 200000, bias=True),\\
\tab    nn.ReLU(),\\
\tab    nn.Linear(200000, 1, bias=True)\\
\tab     )}

We experiment both (i) with the second layer fixed at random initialization; (ii)  and updating both layers' weights.     

\textbf{Deep nets for CV tasks {} {}} We consider a 4-layered MLP. The PyTorch code for 4-layer MLP is as follows: 

\texttt{ nn.Sequential(nn.Flatten(), \\
\tab        nn.Linear(input\_dim, 5000, bias=True),\\
\tab        nn.ReLU(),\\
\tab        nn.Linear(5000, 5000, bias=True),\\
\tab        nn.ReLU(),\\
\tab        nn.Linear(5000, 5000, bias=True),\\
\tab        nn.ReLU(),\\
\tab        nn.Linear(1024, num\_label, bias=True)\\
\tab        )}

For MNIST, we use $1000$ nodes instead of $5000$ nodes in the hidden layer. 
We also experiment with convolutional nets. In particular, we use ResNet18 \citep{he2016deep}. Implementation adapted from:  \url{https://github.com/kuangliu/pytorch-cifar.git}. 

\textbf{Deep nets for NLP {} {}} We use a simple LSTM model with embeddings intialized with ELMo embeddings~\citep{Peters:2018}. Code adapted from: \url{https://github.com/kamujun/elmo_experiments/blob/master/elmo_experiment/notebooks/elmo_text_classification_on_imdb.ipynb} 

We also evaluate our bounds with a BERT model. In particular, we fine-tune an off-the-shelf uncased BERT model~\citep{devlin2018bert}. Code adapted from Hugging Face Transformers~\citep{wolf-etal-2020-transformers}: \url{https://huggingface.co/transformers/v3.1.0/custom_datasets.html}.

\subsection{Additonal experiments}

\textbf{Results with SGD on underparameterized linear models {} {}} 

\begin{figure*}[h]
    \centering 
    \includegraphics[width=0.3\linewidth]{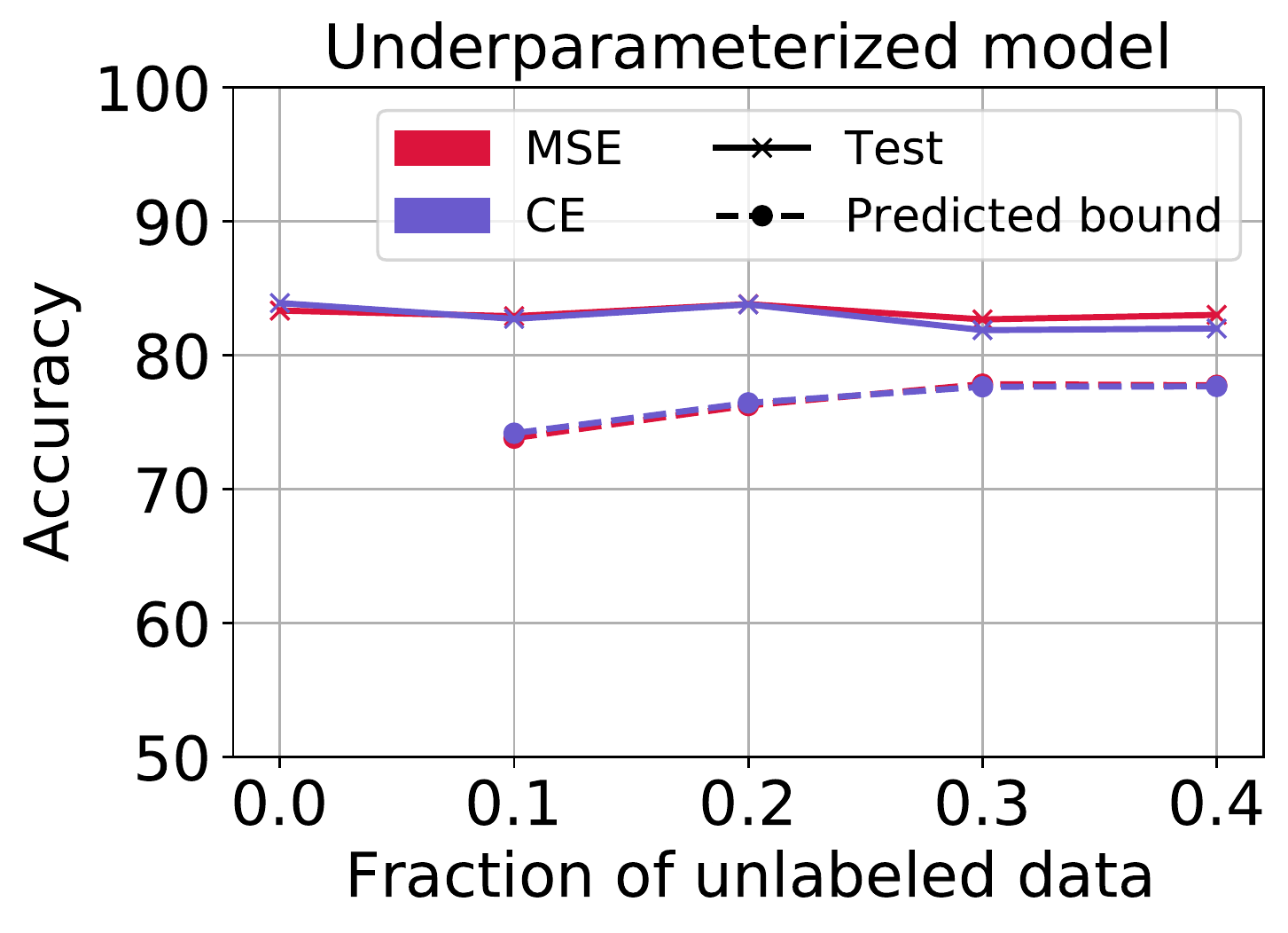}
    \vspace{-5pt}
    \caption{ 
    We plot the accuracy and corresponding bound 
    (RHS in \eqref{eq:erm}) at $\delta = 0.1$
    for toy binary classification task. 
    Results aggregated over $3$ seeds. 
    Accuracy vs fraction of unlabeled data (w.r.t clean data) 
    in the toy setup with a linear model trained with SGD. Results parallel to \figref{fig:error_binary}(a) with SGD.  }
    \label{fig:error_binary_linear}
    \vspace{-5pt}
\end{figure*}

\textbf{Results with wide nets on binary MNIST {} {}}

\begin{figure*}[h]
    \centering 
    \subfigure[GD with MSE loss]{\includegraphics[width=0.3\linewidth]{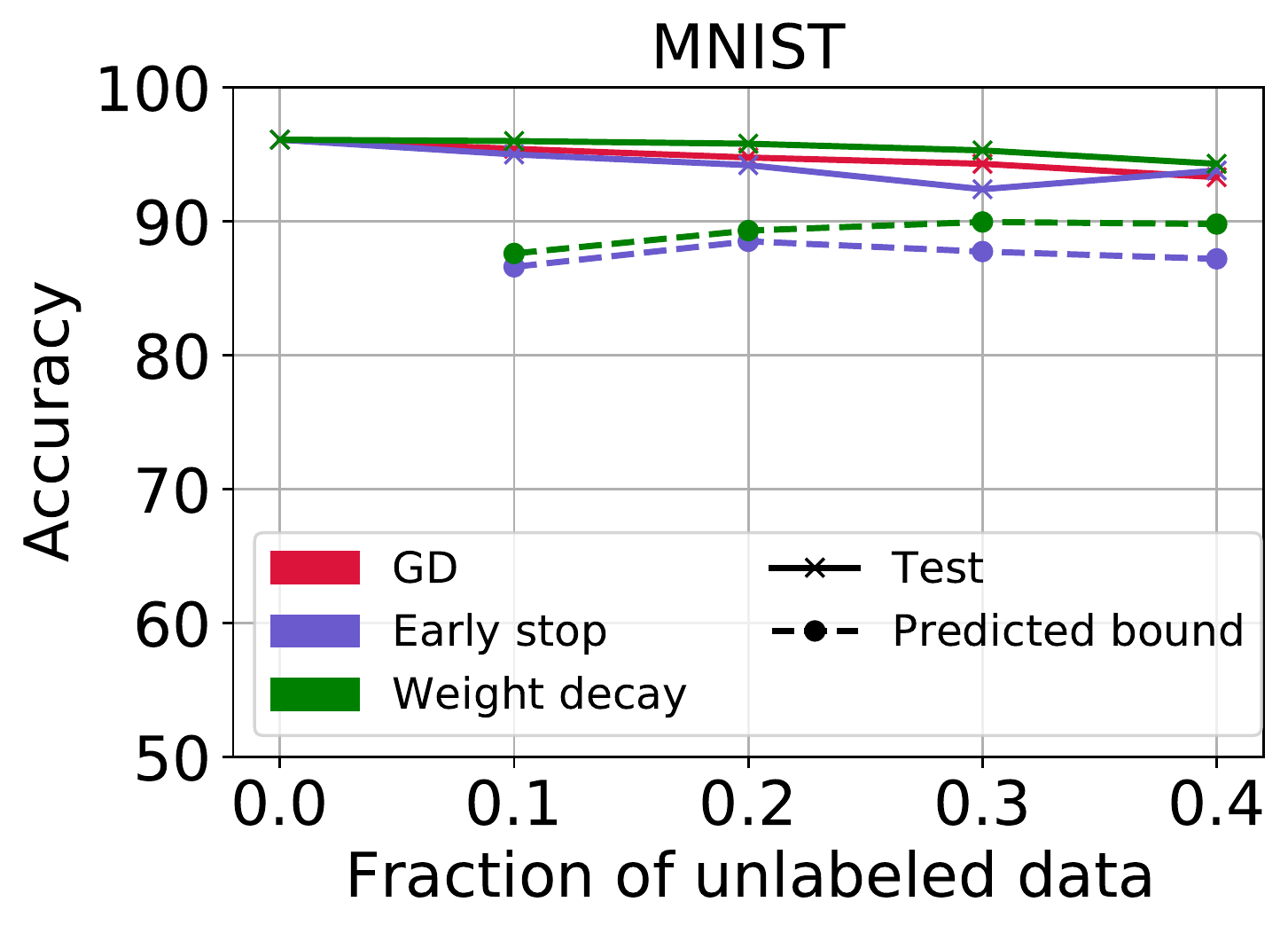}} \hfil
    \subfigure[SGD with CE loss]{\includegraphics[width=0.3\linewidth]{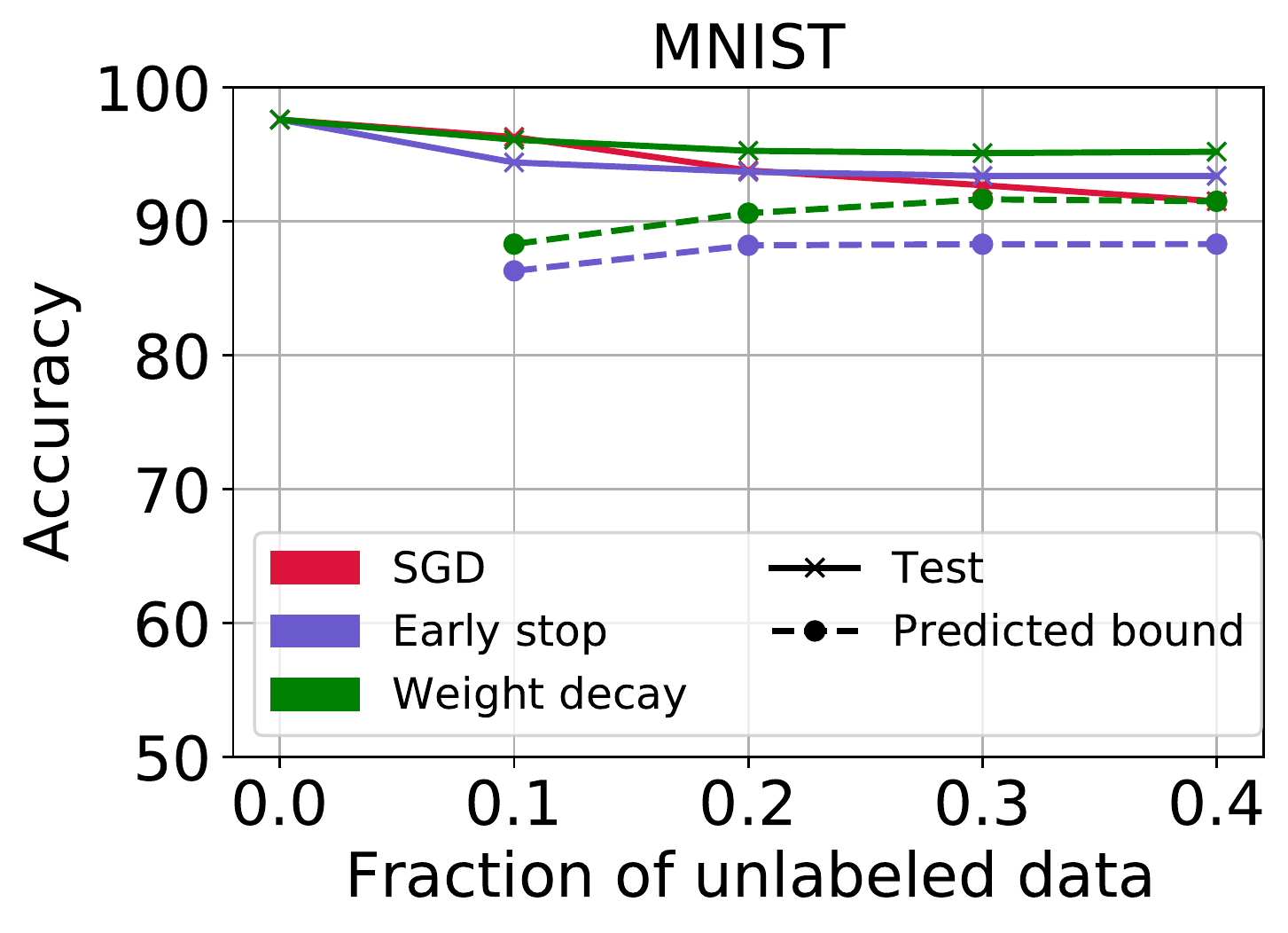}}
    \subfigure[SGD with MSE loss]{\includegraphics[width=0.3\linewidth]{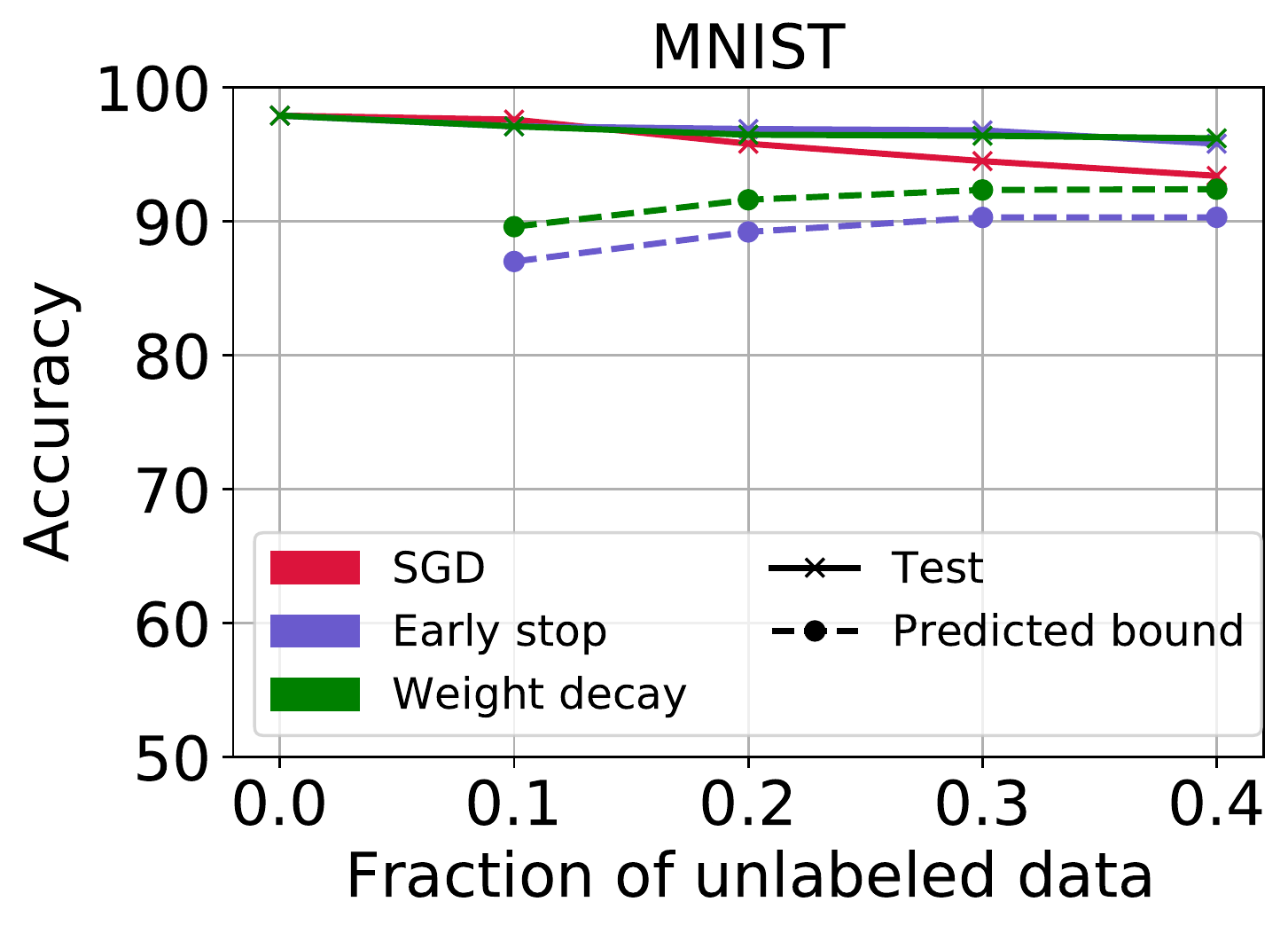}}
    \vspace{-5pt}
    \caption{ 
    We plot the accuracy and corresponding bound 
    (RHS in \eqref{eq:erm}) at $\delta = 0.1$ 
    for binary MNIST classification. 
    Results aggregated over $3$ seeds. 
    Accuracy vs fraction of unlabeled data 
    for a 2-layer wide network on binary MNIST with both the layers training in (a,b) and only first layer training in (c). 
    Results parallel to \figref{fig:error_binary}(b) .  }
    \label{fig:error_binary_MNIST}
    \vspace{-5pt}
\end{figure*}

    

\textbf{Results on CIFAR 10 and MNIST {} {}} 
We plot epoch wise error curve for results in \tabref{table:multiclass}(\figref{fig:error_epoch_CIFAR10} and \figref{fig:error_epoch_MNIST}). We observe the same trend as in \figref{fig:error_CIFAR10}. Additionally, we plot an \emph{oracle bound} obtained by tracking the error on mislabeled data which nevertheless were predicted as true label. To obtain an exact emprical value of the oracle bound, we need underlying true labels for the randomly labeled data. 
While with just access to extra unlabeled data we cannot calculate oracle bound, we note that the oracle bound is very tight and never violated in practice underscoring an importamt aspect of generalization in multiclass problems. This highlight that even a stronger conjecture may hold in multiclass classification, i.e., error on mislabeled data (where nevertheless true label was predicted) lower bounds the population error on the distribution of mislabeled data and hence, the error on (a specific) mislabeled portion predicts the population accuracy on clean data. 
On the other hand, the dominating term of in \thmref{thm:multiclass_ERM} is loose when compared with the oracle bound. The main reason, we believe is the pessimistic upper bound in \eqref{eq:lemma1_final_multi_prev} in the proof of \lemref{lem:fit_mislabeled_multi}. We leave an investigation on this gap for future. 


\begin{figure}[h]
    \centering 
    \subfigure[MLP]{\includegraphics[width=0.3\linewidth]{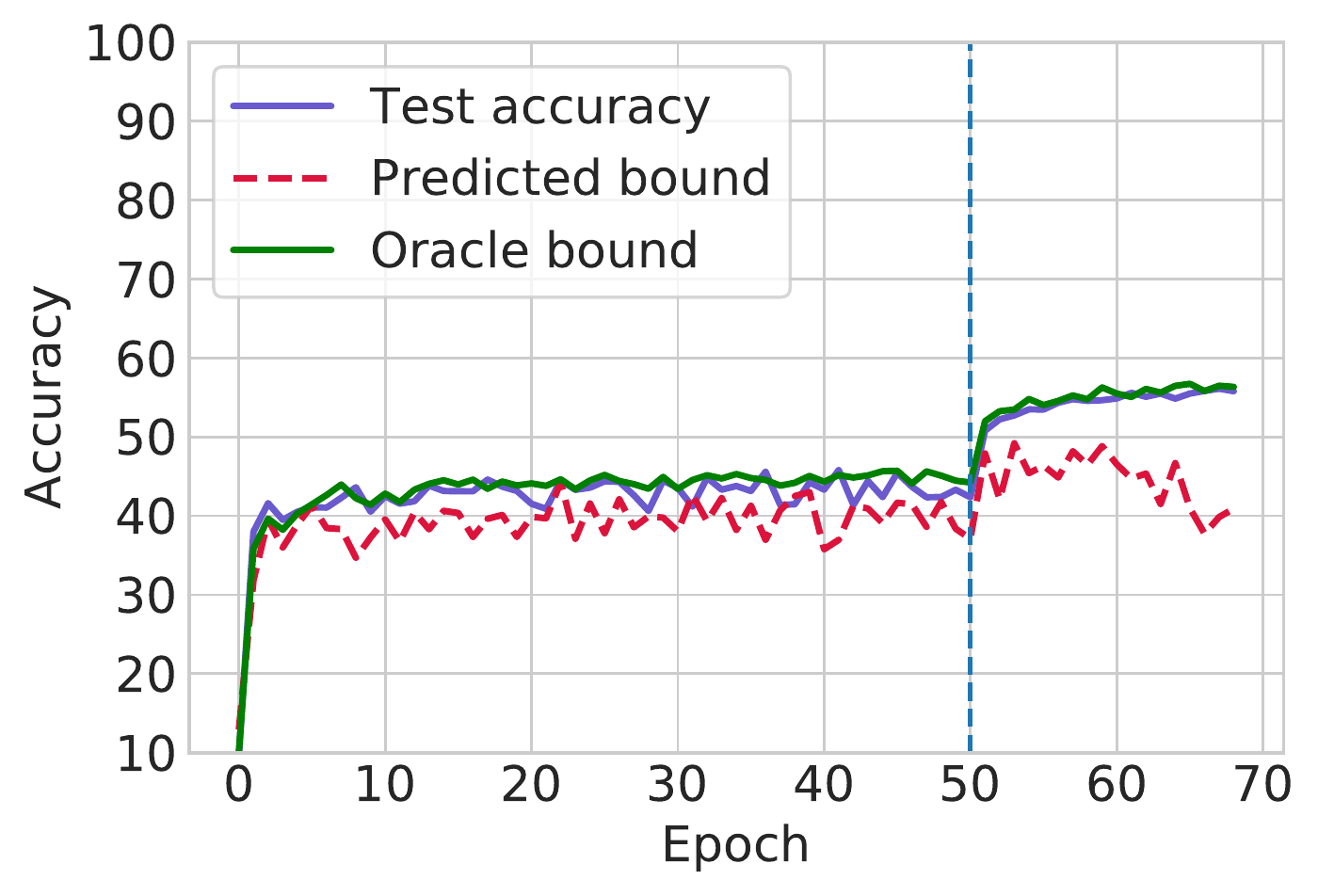}} \hfil
    \subfigure[ResNet]{\includegraphics[width=0.3\linewidth]{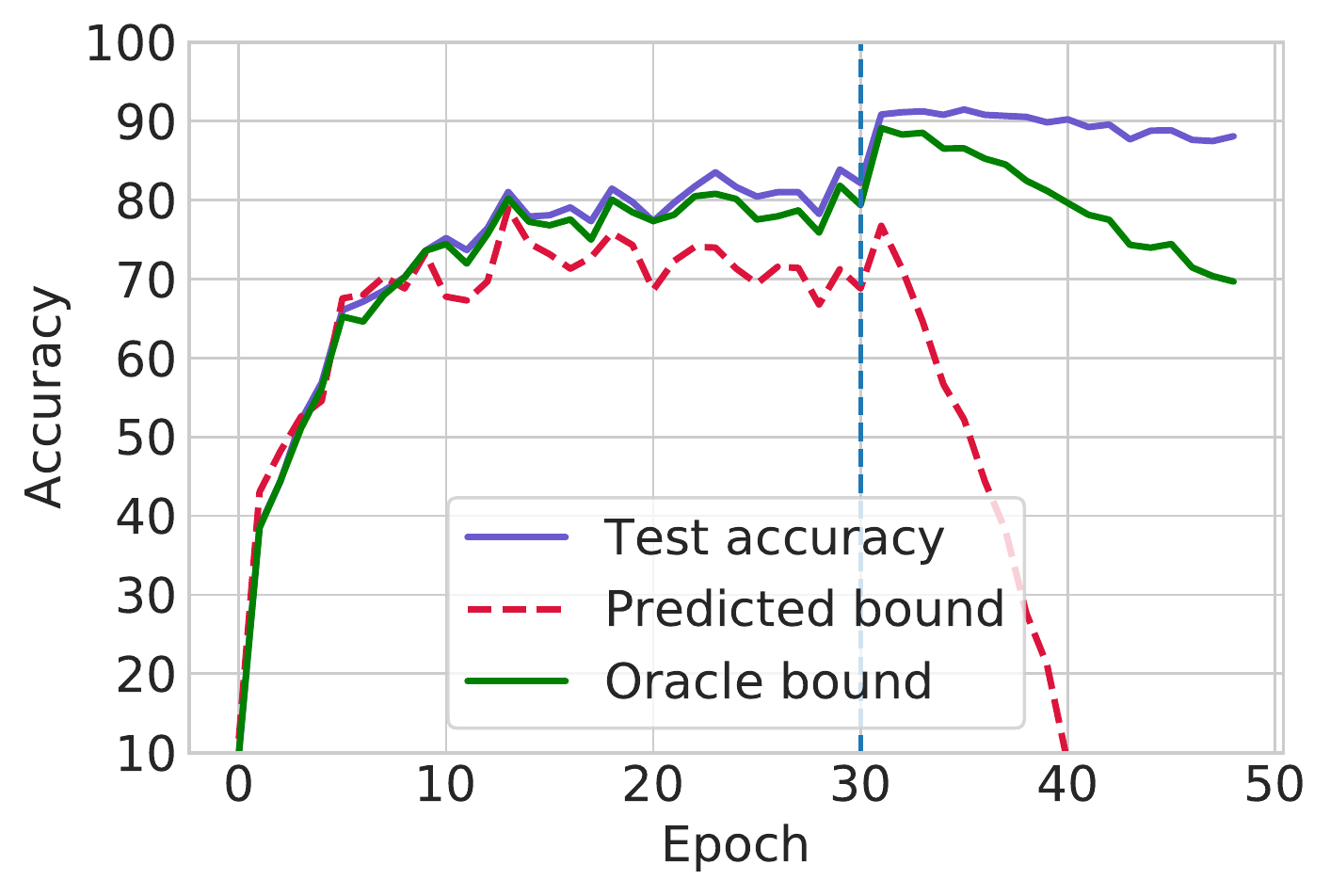}}
    \caption{ Per epoch curves for CIFAR10 corresponding results in \tabref{table:multiclass}. As before, we just plot the dominating term in the RHS of \eqref{eq:multiclass_ERM} as predicted bound. Additionally, we also plot the predicted lower bound by the error on mislabeled data which nevertheless were predicted as true label. We refer to this as ``Oracle bound''. See text for more details. 
    %
    }\label{fig:error_epoch_CIFAR10}
\end{figure}

\begin{figure}[h]
    \centering 
    \subfigure[MLP]{\includegraphics[width=0.3\linewidth]{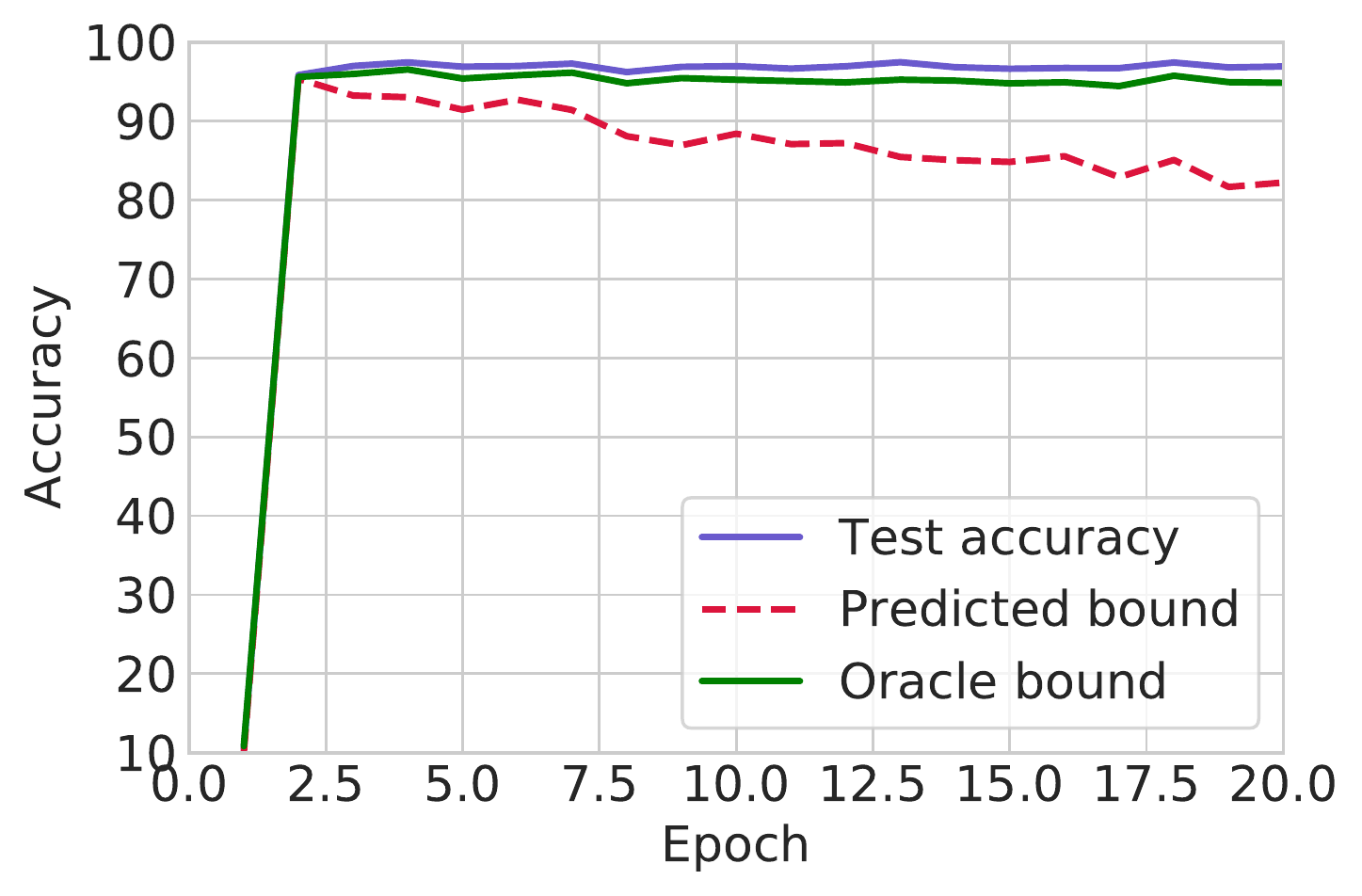}} \hfil
    \subfigure[ResNet]{\includegraphics[width=0.3\linewidth]{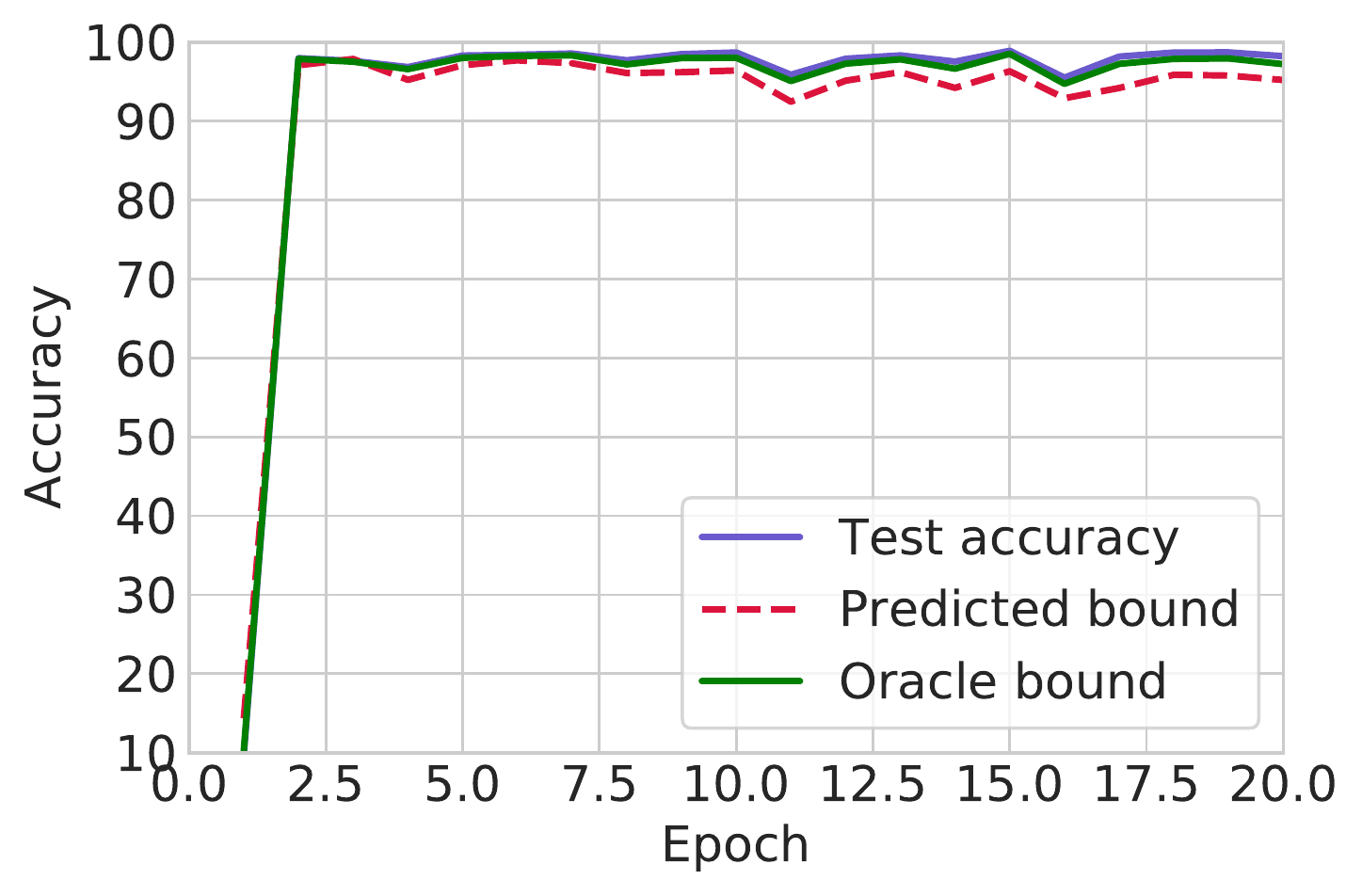}}
    \caption{ Per epoch curves for MNIST corresponding results in \tabref{table:multiclass}. As before, we just plot the dominating term in the RHS of \eqref{eq:multiclass_ERM} as predicted bound. Additionally, we also plot the predicted lower bound by the error on mislabeled data which nevertheless were predicted as true label. We refer to this as ``Oracle bound''. See text for more details. 
    %
    }\label{fig:error_epoch_MNIST}
\end{figure}

\textbf{Results on CIFAR 100 {} {}} 
On CIFAR100, our bound in \eqref{eq:multiclass_ERM} yields vacous bounds. However, the oracle bound as explained above yields tight guarantees in the initial phase of the learning (i.e., when learning rate is less than $0.1$) (\figref{fig:error_CIFAR100}).  

\begin{figure}[h]
    \centering 
    \includegraphics[width=0.3\linewidth]{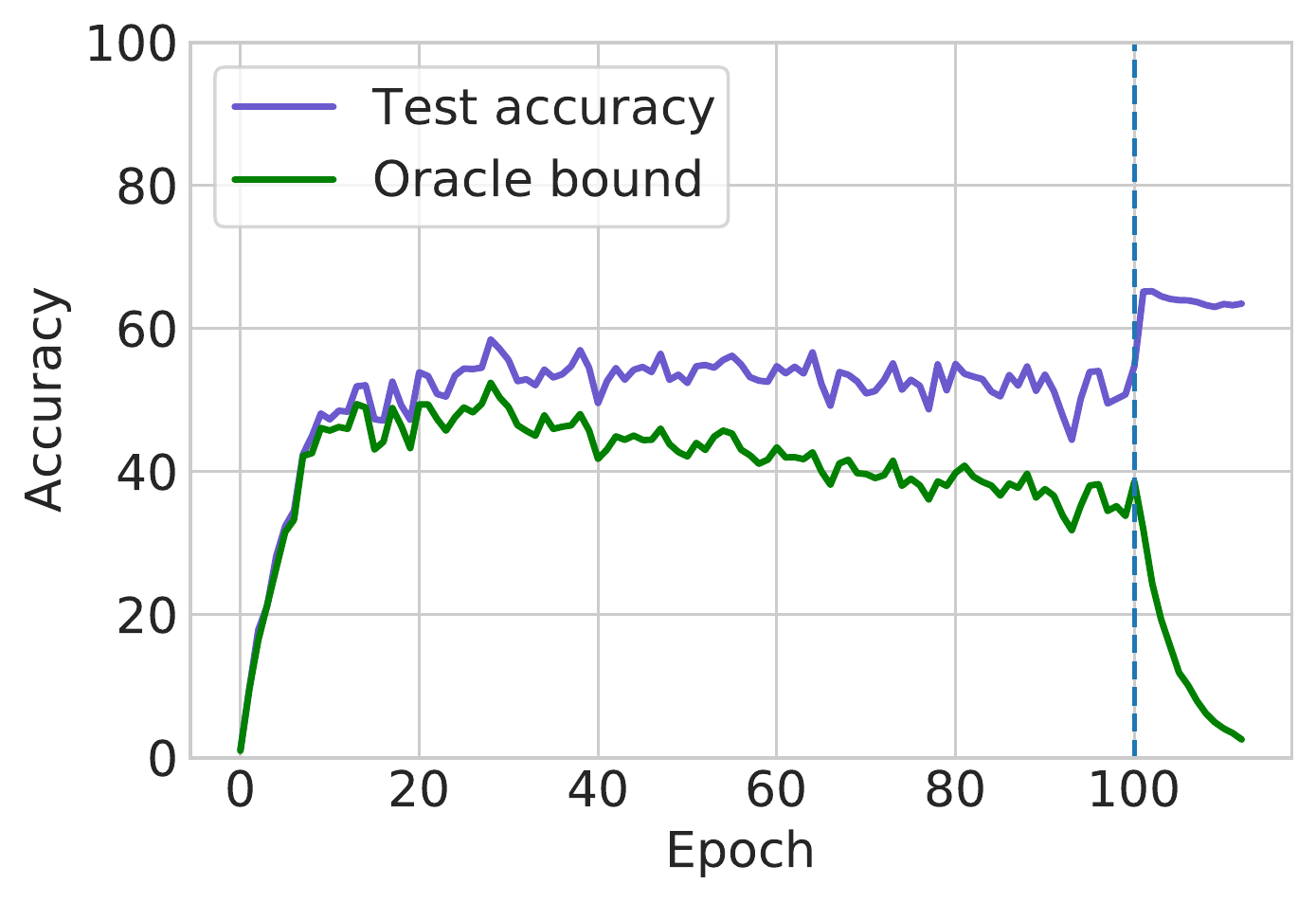}
    \caption{ Predicted lower bound by the error on mislabeled data which nevertheless were predicted as true label with ResNet18 on CIFAR100. We refer to this as ``Oracle bound''. See text for more details. 
    %
    The bound predicted by RATT (RHS in \eqref{eq:multiclass_ERM}) is vacuous. 
    }\label{fig:error_CIFAR100}
\end{figure}



\subsection{Hyperparameter Details}

\textbf{\figref{fig:error_CIFAR10} {} {}} We use clean training dataset of size $40,000$. We fix the amount of unlabeled data at $20\%$ of the clean size, i.e. we include additional $8,000$ points with randomly assigned labels. We use test set of $10,000$ points. For both MLP and ResNet, we use SGD with an initial learning rate of $0.1$ and momentum $0.9$. We fix the weight decay parameter at $5\times 10^{-4}$. After $100$ epochs, we decay the learning rate to $0.01$. We use SGD batch size of $100$. 

\textbf{\figref{fig:error_binary} (a) {} {}} We obtain a toy dataset according to the process described in \secref{sec:app_dataset}. We fix $d=100$ and create a dataset of $50,000$ points with balanced classes. Moreover, we sample additional covariates with the same procedure to create randomly labeled dataset. For both SGD and GD training, we use a fixed learning rate $0.1$.    

\textbf{\figref{fig:error_binary} (b) {} {}} Similar to binary CIFAR, we use clean training dataset of size $40,000$ and fix the amount of unlabeled data at $20\%$ of the clean dataset size. To train wide nets, we use a fixed learning of $0.001$ with GD and SGD. We decide the weight decay parameter and the early stopping point that maximizes our generalization bound (i.e. without peeking at unseen data ).  We use SGD batch size of $100$. 

\textbf{\figref{fig:error_binary} (c) {} {}} With IMDb dataset, we use a clean dataset of size $20,000$ and as before, fix the amount of unlabeled data at $20\%$ of the clean data. To train ELMo model, we use Adam optimizer with a fixed learning rate $0.01$ and weight decay $10^{-6}$ to minimize cross entropy loss. We train with batch size $32$ for 3 epochs. To fine-tune BERT model, we use Adam optimizer with learning rate $5\times 10^{-5}$ to minimize cross entropy loss. We train with a batch size of $16$ for 1 epoch.    

\textbf{\tabref{table:multiclass} {} {}} For multiclass datasets, we train both MLP and ResNet with the same hyperparameters as described before. We sample a clean training dataset of size $40,000$ and fix the amount of unlabeled data at $20\%$ of the clean size. We use SGD with an initial learning rate of $0.1$ and momentum $0.9$. We fix the weight decay parameter at $5\times 10^{-4}$. After $30$ epochs for ResNet and after $50$ epochs for MLP, we decay the learning rate to $0.01$.  We use SGD with batch size $100$. 
For \figref{fig:error_CIFAR100}, we use the same hyperparameters as 
CIFAR10 training, except we now decay learning rate after $100$ epochs.

In all experiments, to identify the best possible accuracy on just the clean data, we use the exact same set of hyperparamters except the stopping point. We choose a stopping point that maximizes test performance. 

\subsection{Summary of experiments }

\begin{center}
    \begin{table}[H] 
        \centering
        \begin{tabular}{|c|c|c|c|} 
        \hline
        Classification type & Model category & Model & Dataset  \\ [0.5ex] 
        \hline
        \hline
        \multirow{10}{*}{Binary} & Low dimensional & Linear model & Toy Gaussain dataset  \\
                        \cline{2-4}
                         & Overparameterized 
                         & \multirow{2}{*}{2-layer wide net} & \multirow{2}{*}{Binary MNIST} \\
                         & linear nets & &  
                         \\
                         \cline{2-4}                 
                         & \multirow{6}{*}{Deep nets} & \multirow{2}{*}{MLP} & Binary MNIST \\
                         \cline{4-4}
                         & &  & Binary CIFAR \\
                         \cline{3-4}
                         &  & \multirow{2}{*}{ResNet} & Binary MNIST \\
                         \cline{4-4}
                         & &  & Binary CIFAR \\
                         \cline{3-4}
                         &  & ELMo-LSTM model & IMDb Sentiment Analysis \\
                         \cline{3-4}
                         & & BERT pre-trained model & IMDb Sentiment Analysis \\
        \hline
        \multirow{5}{*}{Multiclass} & \multirow{5}{*}{Deep nets} & \multirow{2}{*}{MLP} & MNIST \\
                        \cline{4-4} 
                        & & & CIFAR10 \\                   
                        \cline{3-4}
                         &   & \multirow{3}{*}{ResNet} & MNIST \\
                         \cline{4-4}
                         &   & & CIFAR10 \\
                         \cline{4-4}
                         &   & & CIFAR100 \\
        \hline
        \end{tabular}
    \end{table}    
\end{center}

\newpage
\section{Proof of \lemref{lem:stability_error}} \label{app:proof_lem_error}

\begin{proof}[Proof of \lemref{lem:stability_error}]
    Recall, we have a training set $S \cup \wt S_C$. We defined leave-one-out error on mislabeled points as $$\error_{\text{LOO}(\wt S_M) } = \frac{\sum_{(x_i, y_i) \in \wt S_M} \error( f_{(i)}( x_i), y_i)}{ \abs{\wt S_M }} \,, $$
    where $f_{(i)} \defeq f(\calA, (S \cup \wt S)_{(i)})$. Define $S^\prime \defeq S \cup \wt S$. Assume $(x,y)$ and $(x^\prime,y^\prime)$ as i.i.d. samples from ${\calDm}$. 
    Using Lemma 25 in \citet{bousquet2002stability}, we have
    \begin{align*}
        \Expo{ \left( \error_{\calDm}(\wh f) -\error_{\text{LOO}(\wt S_M) } \right)^2 } \le & \Expt{ S^\prime, (x,y), (x^\prime,y^\prime) }{ \error(\wh f(x), y ) \error(\wh f(x^\prime), y^\prime )} - 2 \Expt{ S^\prime, (x,y) }{ \error(\wh f(x), y ) \error(f_{(i)}(x_i), y_i )} \\
        & + \frac{m_1-1}{m_1}\Expt{ S^\prime }{  \error(f_{(i)}(x_i), y_i )  \error(f_{(j)}(x_j), y_j )} + \frac{1}{m_1} \Expt{ S^\prime }{  \error(f_{(i)}(x_i), y_i ) } \,. \numberthis \label{eq:main_reln}
    \end{align*}
    We can rewrite the equation above as : 
    \begin{align*}
        \Expo{ \left( \error_{\calDm}(\wh f) -\error_{\text{LOO}(\wt S_M) } \right)^2 } \le &  \, \underbrace{\Expt{ S^\prime, (x,y), (x^\prime,y^\prime) }{ \error(\wh f(x), y ) \error(\wh f(x^\prime), y^\prime ) - \error(\wh f(x), y ) \error(f_{(i)}(x_i), y_i )}}_{\RN{1}} \\
        & + \underbrace{\Expt{ S^\prime }{  \error(f_{(i)}(x_i), y_i )  \error(f_{(j)}(x_j), y_j ) -  \error(\wh f(x), y ) \error(f_{(i)}(x_i), y_i )}}_{\RN{2}} \\ &+ \underbrace{\frac{1}{m_1} \Expt{ S^\prime }{  \error(f_{(i)}(x_i), y_i ) - \error(f_{(i)}(x_i), y_i )  \error(f_{(j)}(x_j), y_j ) }}_{\RN{3}} \,. \numberthis \label{eq:main_reln2}
    \end{align*}
    
    We will now bound term $\RN{3}$.  Using Cauchy-Schwarz's inequality, we have
    
    \begin{align}
        \Expt{ S^\prime }{  \error(f_{(i)}(x_i), y_i ) - \error(f_{(i)}(x_i), y_i )  \error(f_{(j)}(x_j), y_j ) }^2 &\le  \Expt{ S^\prime }{  \error(f_{(i)}(x_i), y_i ) }^2 \Expt{S^\prime}{1 -   \error(f_{(j)}(x_j), y_j ) }^2 \\
        &\le \frac{1}{4} \,.\label{eq:term1_lem12}
    \end{align}
    
    Note that since $(x_i,y_i)$, $(x_j ,y_j )$, $(x,y)$, and $(x^\prime, y^\prime)$ are all from same distribution $\calDm$, we directly incorporate the bounds on term $\RN{1}$ and $\RN{2}$ from the proof of Lemma 9 in \citet{bousquet2002stability}. Combining that with \eqref{eq:term1_lem12} and our definition of hypothesis stability in \codref{cond:hypothesis_stability}, we have the required claim.

\end{proof}

\end{document}